\newtheorem{theorem}{Theorem}
\newtheorem{lemma}{Lemma}
\newtheorem{corollary}{Corollary}
\newtheorem{assumption}{Assumption}
\newtheorem{proposition}{Proposition}
\newtheorem{remark}{Remark}
\DeclareMathOperator{\Var}{Var}
\DeclareMathOperator{\Cov}{Cov}
\DeclareMathOperator{\VEC}{Vec}
\newcommand{\blind}{0}
\begin{document}

\def\spacingset#1{\renewcommand{\baselinestretch}%
{#1}\small\normalsize} \spacingset{1}

%%%%%%%%%%%%%%%%%%%%%%%%%%%%%%%%%%%%%%%%%%%%%%%%%%%%%%%%%%%%%%%%%%%%%%%%%%%%%%

\if0\blind
{
  \title{\bf Covariance Estimators for the ROOT-SGD Algorithm in Online Learning}
  \author{Yiling Luo,
  %\thanks{The authors gratefully acknowledge \textit{please remember to list all relevant funding sources in the unblinded version}}
  %$^{\diamond}$ 
  Xiaoming Huo,
    %\thanks{The authors gratefully acknowledge \textit{please remember to list all relevant funding sources in the unblinded version}}
    %$^{\diamond}$
    Yajun Mei
    %\thanks{The authors gratefully acknowledge \textit{please remember to list all relevant funding sources in the unblinded version}} 
    %$^{\diamond}$\hspace{.2cm}
    \\
    School of Industrial and Systems Engineering%$^{\diamond}$
    \\
    Georgia Institute of Technology}
\date{}
  \maketitle
} \fi

\if1\blind
{
  \bigskip
  \bigskip
  \bigskip
  \begin{center}
    {\LARGE\bf Covariance Estimators for the ROOT-SGD Algorithm in Online Learning}
\end{center}
  \medskip
} \fi

\bigskip
\begin{abstract}
Online learning naturally arises in many statistical and machine learning problems. 
The most widely used methods in online learning are stochastic first-order algorithms. 
Among this family of algorithms, there is a recently developed algorithm, Recursive One-Over-T SGD (ROOT-SGD). 
ROOT-SGD is advantageous in that it converges at a non-asymptotically fast rate, and its estimator further converges to a normal distribution. 
However, this normal distribution has unknown asymptotic covariance; thus cannot be directly applied to measure the uncertainty. 
To fill this gap, we develop two estimators for the asymptotic covariance of ROOT-SGD. 
Our covariance estimators are useful for statistical inference in ROOT-SGD. 
Our first estimator adopts the idea of plug-in. 
For each unknown component in the formula of the asymptotic covariance, we substitute it with its empirical counterpart. 
The plug-in estimator converges at the rate $\mathcal{O}(1/\sqrt{t})$, where $t$ is the sample size. 
Despite its quick convergence, the plug-in estimator has the limitation that it relies on the Hessian of the loss function, which might be unavailable in some cases. 
Our second estimator is a Hessian-free estimator that overcomes the aforementioned limitation. 
The Hessian-free estimator uses the random-scaling technique, and we show that it is an asymptotically consistent estimator of the true covariance. 
\end{abstract}

\noindent%
{\it Keywords:} Stochastic First-Order Method; Covariance Estimation; Online Algorithm; Random-scaling. 
\vfill

\newpage
\spacingset{1.9} % DON'T change the spacing!

\section{Introduction}

%This paper aims at efficiently estimating the (co)variance of an estimator in online learning.
%In this section, we first introduce what is online learning, and explain why we need covariance estimation in such a setting.
%Then we review the exiting literature on this problem.
%Finally, we summarize our contribution based on the literature.
%
%\subsection{Problem of Interest}

In statistics, a parameter estimation problem often leads to an optimization problem. 
Let $\bm{\theta}^*$ be the true model parameters. 
In many widely-adopted models, $\bm{\theta}^*$ is the minimizer of a convex population risk function $F(\bm{\theta}):= \mathbbm{E}_{\bm{x}\sim P} f(\bm{\theta};\bm{x})$, where $\bm{x}\in\mathcal{R}^d$ denotes the random sample from distribution $P$, and $f(\bm{\theta};\bm{x})$ is the loss function at $\bm{x}$. 
This is the assumption that we uphold throughout this paper. 
Mathematically, we have that
\begin{equation}\label{eq:obj1}
   \bm{\theta}^* = \arg\min_{\bm{\theta}\in\mathcal{R}^p }F(\bm{\theta}).
\end{equation}

In \textit{offline learning}, one is given a fixed number of samples $\bm{x}_1,\cdots, \bm{x}_t$, for parameter estimation. 
In general, the population risk function $F$ is unknown, so people use the empirical risk function as an estimate for $F$. 
This gives us the Empirical Risk Minimization (ERM) objective:
\begin{equation}\label{eq:erm}
   \arg\min_{\bm{\theta}} \frac{1}{t}\sum_{i=1}^t f(\bm{\theta};\bm{x}_i).
\end{equation}
Despite its popularity in statistics and machine learning, it is clear from \eqref{eq:erm} that when the data size is large, working with ERM will be both computational and memory expensive. 
In this case, the online learning framework can be considered. 

In \textit{Online learning}, one has i.i.d. data samples $\{\bm{x}_i, i=1,2,\cdots\}$ come in a sequence. 
Given the data stream, one need to sequentially update the current parameter estimate $\widehat{\bm{\theta}}_{i}$ based on the past estimate $\widehat{\bm{\theta}}_{i-1}$ and the incoming data sample $\bm{x}_i$. 
The goal is that the parameter estimate will approximate $\bm{\theta}^*$ in \eqref{eq:obj1} more accurately as we get more and more data. 

The most well-known algorithm in online learning is the Robbins-Monro algorithm, also known as the Stochastic Gradient Descent (SGD) \citep{robbins1951stochastic}.
The SGD algorithm updates the parameter estimate as follows:
\begin{equation}\label{eq:oSGD}
\widehat{\bm{\theta}}_{i} = \widehat{\bm{\theta}}_{i-1} - \eta_i \nabla_{\bm{\theta}} f(\widehat{\bm{\theta}}_{i-1};\bm{x}_{i}),    
\end{equation}
where $\nabla_{\bm{\theta}} f(\cdot;\cdot)$ is the first-order derivative of $f(\cdot;\cdot)$ with respect to the first argument; we also simplify $\nabla_{\bm{\theta}} f(\cdot;\cdot)$ as $\nabla f(\cdot;\cdot)$ in the rest of this paper. 
And $\eta_i$ is the step size. 
In general, one needs to take a diminishing sequence of step sizes such that $\eta_i \stackrel{i \to\infty}{\to} 0 $ to make SGD converge. 
Note that the SGD updates the parameter estimate using only the first-order derivative of the stochastic risk function, i.e., $\nabla_{\bm{\theta}} f(\widehat{\bm{\theta}}_{i-1};\bm{x}_{i})$. 
One refers to the algorithms that only use the stochastic first-order information for parameter update as stochastic first-order algorithms. 

In the family of stochastic first-order algorithms, a lot of advancements have been made over the SGD algorithm by using the techniques such as averaging \citep{polyak1992acceleration}, acceleration \citep{kingma2014adam,dozat2016incorporating}, and variance reduction \citep{johnson2013accelerating}. 
There are two main goals in those algorithms: the value of the objective function converges to the minimum quickly, as illustrated by a tight upper bound of the quantity $\mathbbm{E} F(\widehat{\bm{\theta}}_t) - F(\bm{\theta}^*)$, where $\widehat{\bm{\theta}}_t$ is the estimator from the algorithm given $t$ data samples;  
the estimator $\widehat{\bm{\theta}}_t$ converges in distribution, and one can develop the theoretical expression of such a distribution. 
Recently, there is a new algorithm --- ROOT-SGD \citep{li2022root} --- that achieves both goals. 
We will discuss ROOT-SGD in the following. 

The main advantage of ROOT-SGD, as compared to other online first-order algorithms, is that it achieves non-asymptotically quick convergence and asymptotically normal distribution in a single algorithm.  
Here we review such properties of ROOT-SGD for a strongly-convex population risk function that are established in \citep{li2022root}. 
The non-asymptotic convergence result says that the value of the objective function in \eqref{eq:obj1} converges at rate $\mathcal{O}(1/t)$ for the sample size $t$, which matches the optimal statistical rate up to a constant factor. 
The asymptotic convergence result says that the estimator $\widehat{\bm{\theta}}_{t}$ converges in distribution to a multivariate normal distribution. 
In particular, one has $\sqrt{t}(\widehat{\bm{\theta}}_t - \bm{\theta}^*) \stackrel{d}{\to} \mathcal{N}(0,\Sigma)$, where the theoretical expression of the asymptotic covariance $\Sigma$ has been derived. 

The asymptotic covariance of the ROOT-SGD estimator, $\Sigma$, is unknown since it depends on the unknown data distribution $P$ and the unknown population risk function $F$. 
However, the asymptotic covariance is important for uncertainty quantification. 
We develop two estimators for the asymptotic covariance, $\Sigma$, in ROOT-SGD. 
We show that our estimators are consistent. 
Using our estimators, one has reliable uncertainty measurement. 
With such uncertainty measurement, one will be able to conduct statistical inference, e.g., hypothesis testing for the true parameter $\bm{\theta}^*$, or constructing confidence intervals. 

We now give a high-level summary of the two estimators and discuss their pros and cons. 
Our first estimator uses the plug-in technique. 
For each component in $\Sigma$, we plug in the data samples to get the empirical version of it. 
We also made several adjustments to the empirical estimator so that it is well-defined and can be computed in an online fashion. 
Our final plug-in estimator converges to the true asymptotic covariance at a rate $\mathcal{O}(1/\sqrt{t})$, where $t$ is the number of data samples. 
The plug-in estimator has the advantage of fast convergence. 
However, it has the limitation of high computation burden and requires the stochastic Hessian. 
Thus the plug-in estimator might be hard or even impossible to compute in some cases. 
Our second estimator uses the random-scaling technique from the martingale theory. 
It only requires the ROOT-SGD point estimator sequences, so the estimator is Hessian-free. 
We call it a Hessian-free estimator. 
The Hessian-free estimator can also be computed in an online way and is asymptotically consistent. 
When compared with the plug-in estimator, the Hessian-free estimator has the advantage of lower computational cost. 
However, the convergence rate of the Hessian-free estimator is unknown. 
Thus there is no guarantee of a comparable convergence rate.

\subsection{Existing Covariance Estimators}\label{sec:1.1}
There is no work for estimating the covariance of the ROOT-SGD algorithm. 
But for other online stochastic algorithms, such as the Averaging SGD (ASGD) algorithm, there is some literature on covariance estimation. 
In this section, we review the existing covariance estimation techniques. 
We also review the covariance estimators for the ASGD algorithm in detail. 

To provide a context of covariance estimation for the ASGD algorithm, we first quickly review the ASGD algorithm and its asymptotic distribution. 
Then we summarize the techniques in the current literature for covariance estimation in online stochastic algorithms and their detailed application in ASGD algorithms. 
These techniques motivate us to develop covariance estimators for ROOT-SGD. 

The ASGD estimator is as follows. 
Recall that SGD updates as 
\[\widehat{\bm{\theta}}_{i} = \widehat{\bm{\theta}}_{i-1} - \eta_i \nabla_{\bm{\theta}} f(\widehat{\bm{\theta}}_{i-1};\bm{x}_{i}),\]
then ASGD estimator, denoted by $\Bar{\bm{\theta}}_i$'s, initialize as $\Bar{\bm{\theta}}_1 = \widehat{\bm{\theta}}_{1}$ and update as
\begin{equation}\label{eq:asgd_recursive}
\Bar{\bm{\theta}}_i = \frac{i-1}{i}\Bar{\bm{\theta}}_{i-1} + \frac{1}{i}\widehat{\bm{\theta}}_{i}.   
\end{equation}
One may notice that such update is equivalent to $\Bar{\bm{\theta}}_i = \frac{1}{i}\sum_{j=1}^i \widehat{\bm{\theta}}_{j}$, i.e., the ASGD estimator is the averaging of all past SGD estimators. 
In equation \eqref{eq:asgd_recursive}, the ASGD estimator is updated recursively: the current estimator $\Bar{\bm{\theta}}_i$ only depends on the most recent estimator $\Bar{\bm{\theta}}_{i-1}$ and the current SGD estimator $\widehat{\bm{\theta}}_{i}$. 
Such recursive update avoids storing all past SGD estimators, so the ASGD is memory efficient and compatible with the online learning scheme. 

The asymptotic distribution of the ASGD estimator is established for a special scheme of decreasing step size as follows. 
When $\eta_i = \eta i^{-\alpha}$ for $\alpha\in (1/2,1)$, Theorem 2 of \citep{polyak1992acceleration} derives the asymptotic distribution of the ASGD estimator as:
    \begin{equation}
    \sqrt{t}(\bar{\bm{\theta}}_t - \bm{\theta}^* ) \stackrel{d}{\rightarrow} \mathcal{N}(0,\Sigma_{ASGD}),
    \end{equation}
where the asymptotic covariance matrix $\Sigma_{ASGD} = A^{-1}S A^{-1}$ for $A = \nabla^2 F(\bm{\theta}^*) $ (Hessian of $F(\cdot)$ evaluated at $\bm{\theta}^*$), $S = \mathbbm{E}_{x\sim P}[\nabla f (\bm{\theta}^*;\bm{x}) \nabla f (\bm{\theta}^*;\bm{x})^T ]$. \footnote{Note that S is a column vector times a row vector, which gives a square and symmetric matrix.} 

In the current literature, there are two classes of estimators for the asymptotic covariance, $\Sigma_{ASGD}$, based on the information required. 
We review these two classes for the ASGD algorithm. 

The first class is the \textit{plug-in} estimator \citep{chen2020statistical,chen2021online}, which requires computing the stochastic Hessian $\nabla_{\bm{\theta}}^2 f(\cdot;\cdot)$. 
The plug-in estimator estimates each component in the formula of the asymptotic covariance by their empirical counterparts. 
For example, in the asymptotic covariance $\Sigma_{ASGD} = A^{-1}S A^{-1}$, the plug-in estimator replaces $A$ and $S$ by their estimators. 
Since $\Sigma_{ASGD}$ includes $A = \nabla^2 F(\cdot)$, the plug-in estimator then requires $\nabla_{\bm{\theta}}^2 f(\widehat{\bm{\theta}}_{i-1};\bm{x}_{i})$ to compute the empirical counterpart of $A$. 

The plug-in covariance estimator for the ASGD is as follows. 
For $\Sigma_{ASGD} = A^{-1}S A^{-1}$, \citep{chen2020statistical} estimates $A$ and $S$ by their empirical counterparts as: 
\begin{equation}\label{eq:estiAS}
    \widehat{A}_t = (1/t)\sum_{i=1}^t \nabla^2 f(\widehat{\bm{\theta}}_{i-1};\bm{x}_i), \quad
\widehat{S}_t = (1/t)\sum_{i=1}^t \nabla f(\widehat{\bm{\theta}}_{i-1};\bm{x}_i)\nabla f(\widehat{\bm{\theta}}_{i-1};\bm{x}_i)^T. 
\end{equation} 
Note that $\widehat{A}_t$ and $\widehat{S}_t$ plug in the estimator sequence $\widehat{\bm{\theta}}_{i}$'s from SGD, instead of the ASGD sequence $\bar{\bm{\theta}}_i$'s. 
The plug-in estimator for $\Sigma_{ASGD}$ is $\widehat{\Sigma}_{t,ASGD} = (\widehat{A}_t)^{-1} \widehat{S}_t (\widehat{A}_t)^{-1}$.
Both $\widehat{A}_t$ and $\widehat{S}_t$ can be computed in an online fashion, so this plug-in covariance estimator is an online estimator.
Paper \citep{chen2020statistical} further shows that the plug-in estimator $\widehat{\Sigma}_{t,ASGD}$ converges to $\Sigma_{ASGD}$ as
\begin{equation}\label{eq:covar_sgd}
    \mathbbm{E}\|\Sigma_{ASGD} - \widehat{\Sigma}_{t,ASGD}\|_2\lesssim t^{-\alpha/2}.
\end{equation}
Since we have $\alpha\in (1/2,1)$, the convergence rate is strictly slower than $\mathcal{O}(t^{-1/2})$. 

The second class is the \textit{Hessian-free} estimator \citep{chen2020statistical,zhu2021online,chen2021online,su2018uncertainty,lee2021fast,kiefer2000simple} which does not use the stochastic Hessian $\nabla_{\bm{\theta}}^2 f(\cdot;\cdot)$. 
The Hessian-free estimator can be computed purely based on the estimator sequences ($\widehat{\bm{\theta}}_i$'s or $\bar{\bm{\theta}}_i$'s in the ASGD algorithm). 
The idea is similar to that of sample covariance:
Suppose $\bm{a}_i, i=1,\cdots,t$ are i.i.d. samples from a distribution with finite second moment, then $(1/t)\sum_{i=1}^t (\bm{a}_i - \bar{\bm{a}}_t)(\bm{a}_i - \bar{\bm{a}}_t)^T$ is an asymptotically consistent estimator for the population covariance, where $\bar{\bm{a}}_t = (1/t)\sum_{i=1}^t \bm{a}_i$ is the sample mean. 
Then, for $\widehat{\bm{\theta}}_i$ (or $\Bar{\bm{\theta}}_i$), one can also construct a sample covariance estimator by treating each $\widehat{\bm{\theta}}_i$ (or $\Bar{\bm{\theta}}_i$) as a sample.
However, since $\widehat{\bm{\theta}}_i$ and $\widehat{\bm{\theta}}_{i-1}$ are highly correlated, a vanilla sample covariance as $(1/t)\sum_{i=1}^t (\widehat{\bm{\theta}}_i - \bar{\bm{\theta}}_t)(\widehat{\bm{\theta}}_i - \bar{\bm{\theta}}_t)^T$ will not be asymptotically consistent. 
To solve this issue, certain modifications will be required. 
We review them in the following. 

There are two types of Hessian-free estimators for ASGD, which use different ways to deal with the autocorrelation structure in the $\{\widehat{\bm{\theta}}_i\}$ sequence.

The first approach is the batch-mean estimator that uses batch mean as decorrelated samples in place of the SGD estimators, $\widehat{\bm{\theta}}_i$'s, to calculate the sample covariance \citep{chen2020statistical,zhu2021online}.  
The idea of batch-mean originates from the covariance estimation for time-homogeneous Markov-chain, which is an auto-correlated sequence \citep{politis1999subsampling,lahiri2003resampling}.
The batch-mean estimator is constructed as follows:
Divide $\{\widehat{\bm{\theta}}_i\}_{i=1,\cdots,t}$ in to $m$ batches as $\underbrace{\widehat{\bm{\theta}}_1,\cdots,\widehat{\bm{\theta}}_{b_1}},\underbrace{\widehat{\bm{\theta}}_{b_1+1},\cdots,\widehat{\bm{\theta}}_{b_1 + b_2}}, \cdots,\underbrace{\widehat{\bm{\theta}}_{b_1+\cdots+b_{m-1} + 1},\cdots,\widehat{\bm{\theta}}_{b_1+\cdots+b_{m}}}$, where $b_j > 0$ is the batch size of the $j$th batch, and $\sum_{j=1}^m b_j = t$. 
Denote the batch mean for the $j$th batch as $\Bar{\bm{\theta}}_j^{\flat} := (1/b_j)\sum_{i=b_1 + \cdots + b_{j-1}+ 1} ^{b_1 + \cdots + b_{j}} \widehat{\bm{\theta}}_i$. 
Finally, the batch-mean covariance estimator is calculated as the (weighted) sample covariance of batch means $\Bar{\bm{\theta}}_j^{\flat}$'s, denoted as $\widehat{\Sigma}_{t,BM}$.
When the batch size is large enough, the batch means are sufficiently decorrelated, thus the $\widehat{\Sigma}_{t,BM}$ will be a consistent estimator of $\Sigma_{ASGD}$ as shown by \citep{chen2020statistical,zhu2021online}. 

The second approach is the random-scaling estimator that uses the estimators $\widehat{\bm{\theta}}_i$'s (or $\Bar{\bm{\theta}}_i$'s for ASGD algorithm) 
to calculate a sample covariance, but carefully analyze how the sample autocorrelation affects the asymptotic distribution \citep{chen2021online,lee2021fast}.
The random-scaling technique is also referred to as robust testing \citep{kiefer2000simple,abadir2002simple}. 
For the ASGD algorithm, \citep{lee2021fast} computes a random-scaling estimator as $\bar{V}_{t,RS} = \frac{1}{t^2}\sum_{i=1}^t i^2 (\bar{\bm{\theta}}_i - \bar{\bm{\theta}}_t)(\bar{\bm{\theta}}_i - \bar{\bm{\theta}}_t)^T$. 
Compared with a consistent covariance estimator $\widehat{\Sigma}_t$ for $\Sigma_{ASGD}$ such that $\sqrt{t}(\widehat{\Sigma}_t)^{-1/2}(\bar{\bm{\theta}}_t - \bm{\theta}^* ) \stackrel{d}{\rightarrow} \mathcal{N}(0,I)$, one instead has that $\sqrt{t}(\bar{V}_{t,RS})^{-1/2}(\bar{\bm{\theta}}_t - \bm{\theta}^* )$ converges to another distribution. 
Paper \citep{lee2021fast} further shows that such distribution is well-defined and independent of the specific problem setting (i.e., does not depend on $f(\cdot;\cdot)$ or $P$). 
In this way, one can construct a consistent covariance estimator of $\sqrt{t}(\bar{\bm{\theta}}_t - \bm{\theta}^* )$ using the random-scaling estimator $\bar{V}_{t,RS}$.

\subsection{Overview of the ROOT-SGD Algorithm}\label{sec:1.2} 
In this section, we review the ROOT-SGD algorithm. 
We also review the asymptotic distribution of the estimator from the ROOT-SGD algorithm. 
The asymptotic distribution is crucial to our analysis since our covariance estimators are based on the covariance of such distribution. 

The ROOT-SGD algorithm  \citep{li2022root} applies the idea of variance reduction \citep{johnson2013accelerating,NIPS2014_ede7e2b6,pmlr-v70-nguyen17b,NEURIPS2018_1543843a} to online learning, so it converges fast for a fixed step size. 
The ROOT-SGD algorithm is given in Algorithm \ref{algo:rootsgd}.

\begin{algorithm}[ht]
\textbf{Initialization}: $\widehat{\bm{\theta}}_0$; set $v_1$ to be $\nabla f(\widehat{\bm{\theta}}_0;\bm{x}_1)$; 
choose a step size $\eta$ and burn-in period $B$\;
\For{i = 1,$\cdots$,B-1}{
$$\bm{v}_i = \nabla f(\widehat{\bm{\theta}}_{i-1};\bm{x}_i) + \frac{i-1}{i} (\bm{v}_{i-1} - \nabla f(\widehat{\bm{\theta}}_{i-2};\bm{x}_i)),$$
$$\widehat{\bm{\theta}}_i = \widehat{\bm{\theta}}_{i-1}.$$
}
\For{i = B,$\cdots$,t}{
$$\bm{v}_i = \nabla f(\widehat{\bm{\theta}}_{i-1};\bm{x}_i) + \frac{i-1}{i} (\bm{v}_{i-1} - \nabla f(\widehat{\bm{\theta}}_{i-2};\bm{x}_i)),$$
$$\widehat{\bm{\theta}}_i = \widehat{\bm{\theta}}_{i-1} - \eta \bm{v}_i .$$
}
\textbf{Output}: $\widehat{\bm{\theta}}_t$.
\caption{ROOT-SGD \citep{li2022root}}
\label{algo:rootsgd}
\end{algorithm}

We explain why ROOT-SGD is a variance-reduced algorithm. 
In Algorithm \ref{algo:rootsgd}, the intermediate quantity $\bm{v}_i$ serves as a variance-reduced estimator of the population gradient. 
The estimator $\bm{v}_i$ is developed as follows. 
One starts with an estimator $\bm{v}_i^\prime$ of $\nabla F(\widehat{\bm{\theta}}_{i-1})$ such that the error $\bm{v}_i^\prime - \nabla F(\widehat{\bm{\theta}}_{i-1})$ is the average of the errors $\{\nabla f(\widehat{\bm{\theta}}_{j-1}; \bm{x}_j) - \nabla F(\widehat{\bm{\theta}}_{j-1})\}_{j=1,\cdots,i}$. 
Mathematically, 
\begin{equation}\label{eq:esti_1}
    \bm{v}_i^\prime - \nabla F(\widehat{\bm{\theta}}_{i-1}) = \frac{1}{i}\sum_{j=1}^i (\nabla f(\widehat{\bm{\theta}}_{j-1}; \bm{x}_j) - \nabla F(\widehat{\bm{\theta}}_{j-1})). 
\end{equation}
Then the term $\bm{v}_i'$ is an unbiased estimator of $\nabla F(\widehat{\bm{\theta}}_{i-1})$. 
Moreover, assume the conditional covariance $\Cov(\nabla f(\widehat{\bm{\theta}}_{j-1}; \bm{x}_j) | \widehat{\bm{\theta}}_{j-1})\prec \Sigma'$ almost surely, then $\Cov(\bm{v}_i^\prime)\prec \Sigma'/i$, which is a reduced covariance. 
One can further rewrite \eqref{eq:esti_1} so that $\bm{v}_i^\prime - \nabla F(\widehat{\bm{\theta}}_{i-1})$ only depends on $\bm{v}_{i-1}^\prime - \nabla F(\widehat{\bm{\theta}}_{i-2})$ and $\nabla f(\widehat{\bm{\theta}}_{i-1};\bm{x}_i) - \nabla F(\widehat{\bm{\theta}}_{i-1})$ as follows:
\begin{align}
    \bm{v}_i^\prime - \nabla F(\widehat{\bm{\theta}}_{i-1}) &= \frac{1}{i} (\nabla f(\widehat{\bm{\theta}}_{i-1};\bm{x}_i) - \nabla F(\widehat{\bm{\theta}}_{i-1})) + \frac{1}{i}\sum_{j=1}^{i-1} (\nabla f(\widehat{\bm{\theta}}_{j-1};\bm{x}_j) - \nabla F(\widehat{\bm{\theta}}_{j-1}))\nonumber\\
    %\Longrightarrow& \bm{v}_t^\prime - \nabla F(\widehat{\bm{\theta}}_{t-1}) 
    &= \frac{1}{i} (\nabla f(\widehat{\bm{\theta}}_{i-1};\bm{x}_i) - \nabla F(\widehat{\bm{\theta}}_{i-1})) + \frac{i - 1}{i}(\bm{v}_{i-1}^\prime - \nabla F(\widehat{\bm{\theta}}_{i-2})).\label{eq:esti_2}
\end{align}
Since $\nabla F(\widehat{\bm{\theta}}_{i-1})$ and $\nabla F(\widehat{\bm{\theta}}_{i-2})$ in \eqref{eq:esti_2} are unknown, \citep{li2022root} replaces them with the unbiased estimators $\nabla f(\widehat{\bm{\theta}}_{i-1};\bm{x}_i)$ and $\nabla f(\widehat{\bm{\theta}}_{i-2};\bm{x}_i)$, respectively. 
This gives us the gradient estimator in the ROOT-SGD algorithm that
\begin{equation}
    \bm{v}_i = \nabla f(\widehat{\bm{\theta}}_{i-1};\bm{x}_i) + \frac{i-1}{i} (\bm{v}_{i-1} - \nabla f(\widehat{\bm{\theta}}_{i-2};\bm{x}_i)). \label{eq:esti_3}
\end{equation}
Then the ROOT-SGD performs update on parameter $\widehat{\bm{\theta}}_i$ using $\bm{v}_i$. 

We now review the asymptotic distribution of ROOT-SGD in \citep{li2022root}. 
Our covariance estimators are all based on this asymptotic distribution. 
Under certain regularity conditions (see details in Lemma \ref{lem:asymptotic_root}), for a proper step size $\eta$ and burn-in period $B$, \citep{li2022root} proves that  
\begin{equation}\label{eq:asymptotic_root}
    \sqrt{t}(\widehat{\bm{\theta}}_t - \bm{\theta}^*) \stackrel{d}{\to} \mathcal{N}(0,\Sigma)
\end{equation}
for $\Sigma = A^{-1} (S + \mathbbm{E}[\Xi_{\bm{x}}(\bm{\theta}^*) \Lambda \Xi_{\bm{x}}(\bm{\theta}^*)]) A^{-1}$, where $\Xi_{\bm{x}}(\bm{\theta}) = \nabla^2 f(\bm{\theta};\bm{x}) - \nabla^2 F(\bm{\theta})$, $A = \nabla^2 F(\bm{\theta}^*) $, $S = \mathbbm{E}_{x\sim P}[\nabla f (\bm{\theta}^*;\bm{x}) \nabla f (\bm{\theta}^*;\bm{x})^T ]$ and $\Lambda$ is determined by solving the following matrix equation of $\Lambda$ (a.k.a. \textit{the modified Lyapunov equation}):
\begin{equation}
    \Lambda A + A\Lambda - \eta \mathbbm{E}[\Xi_{\bm{x}}(\bm{\theta}^*)\Lambda \Xi_{\bm{x}}(\bm{\theta}^*)] - \eta A \Lambda A = \eta S. \label{eq:lambda}
\end{equation}

\subsection{Convergence of ROOT-SGD for Two Examples}\label{sec:1.3}
Besides its fast convergence and asymptotic normality, ROOT-SGD is also advantageous in that we can explicitly illustrate its convergence for some learning examples. 
In this section, we explain the convergence property of the ROOT-SGD algorithm for two learning examples: the normal mean estimation and the natural parameter estimation for the exponential family distribution.

\paragraph{Normal Mean Estimation.} 
We now describe the normal mean estimation problem and show the convergence of the ROOT-SGD in this situation. 

Consider the problem of estimating the mean vector of a multivariate normal distribution. 
Suppose the data $\bm{x}_i \stackrel{i.i.d.}{\sim} N(\bm{\bm{\theta}}^*; I_p)$. 
Let the loss function be the negative log-likelihood:
\[f(\bm{\theta};\bm{x}) = \frac{1}{2}\| \bm{\theta} - \bm{x}\|_2^2.\]
Then we have $\nabla f(\bm{\theta};\bm{x}) = \bm{\theta} - \bm{x}$. 
It is not hard to check that the ROOT-SGD update is: 
\begin{equation}\label{eq:root_normal_mean_update}
    \widehat{\bm{\theta}}_t = \widehat{\bm{\theta}}_{t-1} - \eta(\widehat{\bm{\theta}}_{t-1} - \frac{1}{t}\sum_{i = 1}^t \bm{x}_i) = (1 - \eta)\widehat{\bm{\theta}}_{t-1} + \frac{\eta}{t} \sum_{i = 1}^t \bm{x}_i.
\end{equation}
We now compare the ROOT-SGD estimator with the common estimator of sample mean $\bar{\bm{x}}_t = \frac{1}{t}\sum_{i = 1}^t \bm{x}_i$, which is a consistent estimator for the true mean vector $\bm{\theta}^*$. 
Then \eqref{eq:root_normal_mean_update} can be rewritten as:
\begin{equation}\label{eq:root_normal_mean_update2}
    \widehat{\bm{\theta}}_t - \bar{\bm{x}}_t = (1 - \eta)(\widehat{\bm{\theta}}_{t-1}-\bar{\bm{x}}_t).
\end{equation}
Thus, for the step size $\eta \in (0,1]$, the current estimator will be closer to the sample mean than the last estimator. 
When the sample mean $\bar{\bm{x}}_t$ is close to the true parameter $\bm{\theta}^*$, we will have nearly linear convergence of the ROOT-SGD estimator $\widehat{\bm{\theta}}_t$. 
Our calculation above explicitly shows an advantage of the ROOT-SGD algorithm: it converges fast to an optimal estimator.

\paragraph{Exponential Family Model.} 
We now describe the problem of parameter estimation in the exponential family model and show the convergence of the ROOT-SGD algorithm in this case.

Consider estimating the natural parameter $\bm{\theta}\in R^p$ in an exponential family model, where the probability density function is given by:
\[L(\bm{x};\bm{\theta}) = h(\bm{x})\exp[\langle\bm{\theta}, \bm{T}(\bm{x})\rangle - B(\bm{\theta})].\]
Let the risk function be the negative log-likelihood
$f(\bm{\theta};\bm{x}) = -\langle\bm{\theta}, \bm{T}(\bm{x})\rangle + B(\bm{\theta})$, then the gradient is $\nabla f(\bm{\theta};\bm{x}) = - \bm{T}(\bm{x}) + \nabla B(\bm{\theta})$. 

Then we write the ROOT-SGD update for the exponential family model. 
For the intermediate quantity $\bm{v}_t$, we can solve its recursive relationship to get:
\begin{align*}
\bm{v}_t &= \nabla B(\widehat{\bm{\theta}}_{t-1}) - \bm{T}(\bm{x}_t) + \frac{t-1}{t} (\bm{v}_{t-1} - \nabla B(\widehat{\bm{\theta}}_{t-2}) + \bm{T}(\bm{x}_t))\\
&= \nabla B(\widehat{\bm{\theta}}_{t-1}) - \frac{1}{t}\bm{T}(\bm{x}_t) + \frac{t-1}{t} (\bm{v}_{t-1} - \nabla B(\widehat{\bm{\theta}}_{t-2}))\\
&= \nabla B(\widehat{\bm{\theta}}_{t-1}) - \frac{1}{t}\sum_{i=1}^t \bm{T}(\bm{x}_i),\end{align*}
The parameter update using such $\bm{v}_t$ is
\begin{equation}\label{eq:exponential_update}
    \widehat{\bm{\theta}}_t = \widehat{\bm{\theta}}_{t-1} - \eta \bm{v}_t = \widehat{\bm{\theta}}_{t-1} - \eta \left(\nabla B(\widehat{\bm{\theta}}_{t-1}) - \frac{1}{t}\sum_{i=1}^t \bm{T}(\bm{x}_i)\right).
\end{equation}

Finally, we compare $\widehat{\bm{\theta}}_t$ with the maximum likelihood estimator (MLE). 
The MLE is equivalent to the estimator in \eqref{eq:erm}, and it is asymptotically efficient. 
In particular, the MLE converges to the true parameter at rate $\mathcal{O}(1/t)$. 
For exponential family model, the MLE $\bm{\theta}^*_t$ satisfies 
\[\nabla B(\bm{\theta}^*_t) = \frac{1}{t}\sum_{i = 1}^t \bm{T}(\bm{x}_i).\]
Thus the ROOT-SGD update in \eqref{eq:exponential_update} can be rewritten as 
\[\widehat{\bm{\theta}}_t = \widehat{\bm{\theta}}_{t-1} - \eta \left(\nabla B(\widehat{\bm{\theta}}_{t-1}) - \nabla B(\bm{\theta}^*_t)\right).\]
Under some regularity conditions on $B(\cdot)$, we have Proposition \ref{prop:root_sgd_convergence_exponential} holds. 
By Proposition \ref{prop:root_sgd_convergence_exponential}, when the MLE $\bm{\theta}^*_t$ is close to the true parameter $\bm{\theta}^*$, we will have nearly linear convergence of the ROOT-SGD estimator. 
In this way, we have a straightforward explanation for the convergence of ROOT-SGD that could not be applied to (A)SGD. 

\begin{proposition}\label{prop:root_sgd_convergence_exponential}
When $B(\bm{\theta})$ is $\mu$-strongly convex and $l$-smooth for $0<\mu<l$, we can take $\eta \in (0,\frac{2\mu}{l(\mu+l)}]$, so that $\exists \alpha \in (0, 1/\eta)$ s.t. 
\begin{equation}\label{eq:linear_conv_exponential_family}
    \|\widehat{\bm{\theta}}_t - \bm{\theta}^*_t\|_2 \leq (1 - \eta\alpha)\|\widehat{\bm{\theta}}_{t-1} - \bm{\theta}^*_t\|_2. 
\end{equation}
\end{proposition}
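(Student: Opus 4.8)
\emph{Proof strategy.} The plan is to read the rewritten ROOT-SGD update derived just above the statement, $\widehat{\bm{\theta}}_t = \widehat{\bm{\theta}}_{t-1} - \eta\big(\nabla B(\widehat{\bm{\theta}}_{t-1}) - \nabla B(\bm{\theta}^*_t)\big)$, as a single gradient-descent step on a $\mu$-strongly convex, $l$-smooth objective whose minimizer is $\bm{\theta}^*_t$, and then invoke the classical contraction estimate for gradient descent. Concretely, introduce the map $g_t(\bm{\theta}) := \bm{\theta} - \eta\big(\nabla B(\bm{\theta}) - \nabla B(\bm{\theta}^*_t)\big)$. Then $\widehat{\bm{\theta}}_t = g_t(\widehat{\bm{\theta}}_{t-1})$, and $g_t(\bm{\theta}^*_t) = \bm{\theta}^*_t$ since the bracketed term vanishes at $\bm{\theta}^*_t$. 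Hence $\widehat{\bm{\theta}}_t - \bm{\theta}^*_t = g_t(\widehat{\bm{\theta}}_{t-1}) - g_t(\bm{\theta}^*_t)$, and it suffices to show that $g_t$ contracts Euclidean distances by a factor $1-\eta\alpha$ for a suitable $\alpha$ that does not depend on $t$.

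The next step is to express the gradient increment through the Hessian. Setting $H_t := \int_0^1 \nabla^2 B\big(\bm{\theta}^*_t + s(\widehat{\bm{\theta}}_{t-1} - \bm{\theta}^*_t)\big)\,ds$ (which is legitimate because the log-partition function $B$ is $C^\infty$ on the interior of the natural-parameter space; if one prefers not to assume twice-differentiability, the same conclusion follows from the first-order monotone-operator characterizations of $\mu$-strong convexity and $l$-smoothness), we get $\nabla B(\widehat{\bm{\theta}}_{t-1}) - \nabla B(\bm{\theta}^*_t) = H_t(\widehat{\bm{\theta}}_{t-1} - \bm{\theta}^*_t)$, and therefore $\widehat{\bm{\theta}}_t - \bm{\theta}^*_t = (I - \eta H_t)(\widehat{\bm{\theta}}_{t-1} - \bm{\theta}^*_t)$. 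Since $B$ is $\mu$-strongly convex and $l$-smooth, $H_t$ is symmetric with $\mu I \preceq H_t \preceq l I$, so $\|\widehat{\bm{\theta}}_t - \bm{\theta}^*_t\|_2 \le \|I - \eta H_t\|_2\,\|\widehat{\bm{\theta}}_{t-1} - \bm{\theta}^*_t\|_2$ and, $H_t$ being symmetric with spectrum in $[\mu,l]$, $\|I - \eta H_t\|_2 = \max\{|1-\eta\mu|,\,|1-\eta l|\}$. For $0<\mu<l$ and $\eta \in \big(0,\tfrac{2\mu}{l(\mu+l)}\big]$ one checks $\eta l \le \tfrac{2\mu}{\mu+l} < 1$, hence $0 < 1-\eta l \le 1-\eta\mu < 1$ and $\|I-\eta H_t\|_2 = 1-\eta\mu$. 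Taking $\alpha = \mu$ then gives exactly \eqref{eq:linear_conv_exponential_family}, and $\alpha=\mu$ lies in $(0,1/\eta)$ because $\eta \le \tfrac{2\mu}{l(\mu+l)} < \tfrac1\mu$, the last inequality being $2\mu^2 < 2\mu l \le \mu l + l^2 = l(\mu+l)$ for $\mu<l$.

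There is no genuine obstacle here; the statement is the standard gradient-descent contraction in disguise. The only points requiring care are bookkeeping ones: first, that the prescribed step-size cap $\tfrac{2\mu}{l(\mu+l)}$ keeps $1-\eta l$ nonnegative so that the operator norm equals precisely $1-\eta\mu$ rather than the larger quantity $\eta l - 1$; and second, that the resulting rate constant $\alpha$ satisfies the required $\alpha \in (0,1/\eta)$, which I verify above. As a remark, a sharper (still $t$-independent) constant $\alpha = \tfrac{\mu l}{\mu+l}$ is obtainable over the wider range $\eta\le \tfrac{2}{\mu+l}$ via the co-coercivity inequality $\|\bm{a}-\bm{b}-\eta(\nabla B(\bm{a})-\nabla B(\bm{b}))\|_2^2 \le \big(1-\tfrac{2\eta\mu l}{\mu+l}\big)\|\bm{a}-\bm{b}\|_2^2$ combined with $\sqrt{1-x}\le 1-x/2$, but the simpler operator-norm argument already suffices for the stated claim.
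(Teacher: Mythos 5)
Your argument is correct, and it reaches the conclusion by a genuinely different route than the paper. The paper's proof never linearizes the gradient: it squares \eqref{eq:linear_conv_exponential_family}, reduces it to a quadratic inequality in the two scalars $\|\nabla B(\widehat{\bm{\theta}}_{t-1})-\nabla B(\bm{\theta}^*_t)\|_2$ and $\|\widehat{\bm{\theta}}_{t-1}-\bm{\theta}^*_t\|_2$ using the first-order bound $\langle \nabla B(\widehat{\bm{\theta}}_{t-1})-\nabla B(\bm{\theta}^*_t),\widehat{\bm{\theta}}_{t-1}-\bm{\theta}^*_t\rangle \geq \mu\|\widehat{\bm{\theta}}_{t-1}-\bm{\theta}^*_t\|_2^2 \geq \tfrac{\mu}{l}\|\widehat{\bm{\theta}}_{t-1}-\bm{\theta}^*_t\|_2\,\|\nabla B(\widehat{\bm{\theta}}_{t-1})-\nabla B(\bm{\theta}^*_t)\|_2$, and then picks the implicit root $\alpha = \frac{\mu+l-\sqrt{(\mu+l)^2-4\mu l\kappa^2}}{2\kappa}$ with $\kappa=\eta(\mu+l)/2\in(0,\mu/l]$ so that the quadratic factors as a product of two terms whose signs are controlled by $\mu\|\widehat{\bm{\theta}}_{t-1}-\bm{\theta}^*_t\|_2 \le \|\nabla B(\widehat{\bm{\theta}}_{t-1})-\nabla B(\bm{\theta}^*_t)\|_2 \le l\|\widehat{\bm{\theta}}_{t-1}-\bm{\theta}^*_t\|_2$. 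You instead recognize the update as one gradient-descent step on the tilted objective with fixed point $\bm{\theta}^*_t$ and contract via the averaged Hessian $H_t$ (or, as you note, via co-coercivity if one wants to stay purely first-order); your step-size and $\alpha\in(0,1/\eta)$ checks are all correct. What your route buys is an explicit and in fact stronger constant, $\alpha=\mu$, whereas the paper's $\alpha$ behaves like $\eta\mu l/2$ for small $\eta$; your co-coercivity remark also extends the admissible range to $\eta\le 2/(\mu+l)\supseteq(0,\tfrac{2\mu}{l(\mu+l)}]$. What the paper's route buys is that it uses only the gradient inequalities stated in the hypothesis, with no appeal to $\nabla^2 B$ — a purely cosmetic advantage here, since $B$ is the log-partition function and is smooth, and you cover this case anyway with the monotone-operator fallback.
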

The proof for Proposition \ref{prop:root_sgd_convergence_exponential} is in Appendix \ref{app:prof_root_sgd_convergence_exponential}. 

%Thus the convergence of $\widehat{\bm{\theta}}_t$ to $\bm{\theta}^*$ is dominated by the convergence of $\bm{\theta}^*_t$ to $\bm{\theta}^*$, which is at rate $\mathcal{O}(1/t)$. 

\textbf{Notations and Paper Organization. } 
We define some notations that we will use throughout this paper. 
For matrices $X \in \mathbb{R}^{n\times o}, Y \in \mathbb{R} ^{p\times q}$: 
let $ X\otimes Y$ denote the standard Kronecker product; 
let $\|X\|_2$ be the operator norm of $X$ and $\|X\|_F$ be the Frobenius norm of $X$; 
denote the vectorization of $X$ as $\VEC(X)=(X_{11},...,X_{n1},X_{12},...,X_{n2},...,X_{1o},...,X_{no})^{T}$. 
For two nonnegative real values $a(t)$ and $b(t)$, denote $a(t) \lesssim b(t)$ if $\exists c_1 > 0$ such that $a(t) \leq c_1 b(t)$. 

The remainder of this paper is organized as follows. 
In Section \ref{sec:02}, we propose our plug-in estimator for the asymptotic covariance and show the convergence of the plug-in estimator to the true covariance. 
In Section \ref{sec:03}, we propose our Hessian-free estimator for the asymptotic covariance and prove its asymptotic consistency. 
In Section \ref{sec:04}, some numerical experiments are reported. 
In Section \ref{sec:05}, we discuss the findings of this paper and propose some future work.

\section{A Plug-in Estimator and Its Convergence}\label{sec:02}
In this section, we describe our plug-in covariance estimator in the ROOT-SGD algorithm, and we prove the convergence rate of the aforementioned plug-in covariance estimator. 

\subsection{A Plug-in Estimator with Thresholding}
Our goal is to provide an online estimator for the asymptotic covariance of ROOT-SGD. 
Recall that the asymptotic covariance is $\Sigma = A^{-1} (S + \mathbbm{E}[\Xi_{\bm{x}}(\bm{\theta}^*) \Lambda \Xi_{\bm{x}}(\bm{\theta}^*)]) A^{-1}$. 
For the plug-in estimator, we assume that we have access to the stochastic second order term $\nabla^2 f(\widehat{\bm{\theta}}_{t-1};\bm{x}_t)$. 

There are two steps in the plug-in estimator: plug-in and thresholding. 
The plug-in step approximates each component in $\Sigma$ by its empirical version, that is, estimate $A, S$ and $\mathbbm{E}[\Xi_{\bm{x}}(\bm{\theta}^*) \Lambda \Xi_{\bm{x}}(\bm{\theta}^*)]$ using the parameter estimates and data samples; 
the thresholding step performs spectral thresholding on the empirical estimators. 
We will explain spectral thresholding in detail later in our thresholding step. 

\paragraph{Plug-in Step.} 
For $A, S, \mathbbm{E}[\Xi_{\bm{x}}(\bm{\theta}^*) \Lambda \Xi_{\bm{x}}(\bm{\theta}^*)]$ in the asymptotic covariance, we approximate them by their empirical counterparts as follows. 
For 
\[
    A  = \nabla^2 F(\bm{\theta}^*)  ,\]
we approximate it by 
\begin{equation}\label{eq:a_approx}
    \widehat{A}_t = \frac{1}{t - B}\sum_{i=B+1}^{t}\nabla^2 f(\widehat{\bm{\theta}}_{i-1};\bm{x}_i).
\end{equation}
For 
    \[S = \mathbbm{E}_{x\sim P}[\nabla f (\bm{\theta}^*;\bm{x}) \nabla f (\bm{\theta}^*;\bm{x})^T ],\]
we approximate it by 
\begin{equation}\label{eq:s_approx}
    \widehat{S}_t = \frac{1}{t - B}\sum_{i=B+1}^{t}\nabla f(\widehat{\bm{\theta}}_{i-1};\bm{x}_i) \nabla f(\widehat{\bm{\theta}}_{i-1};\bm{x}_i)^T.
\end{equation}
For 
\[\mathbbm{E}[\Xi_{\bm{x}}(\bm{\theta}^*)\Lambda \Xi_{\bm{x}}(\bm{\theta}^*)] =  \mathbbm{E}[\nabla^2 f(\bm{\theta}^*;\bm{x})\Lambda \nabla^2 f(\bm{\theta}^*;\bm{x})] - A\Lambda A\]
we approximate it by
\begin{equation}\label{eq:exi_approx}
\frac{1}{t - B}\sum_{i=B+1}^{t}\nabla^2 f(\widehat{\bm{\theta}}_{i-1};\bm{x}_i)\Lambda \nabla^2 f(\widehat{\bm{\theta}}_{i-1};\bm{x}_i) - A\Lambda A.    
\end{equation} 
 In the estimator \eqref{eq:exi_approx}, we need to further estimate $\Lambda$. 
Since $\Lambda$ is the solution of \eqref{eq:lambda}, we can replace each unknown term in \eqref{eq:lambda} by their empirical estimators, and solve the ``perturbed version" of \eqref{eq:lambda} in $\Lambda$ as:
    \begin{align}\label{eq:plug_lambda}
        \Lambda \widehat{A}_t + \widehat{A}_t\Lambda - \eta \frac{1}{t - B}\sum_{i=B+1}^{t}\nabla^2 f(\widehat{\bm{\theta}}_{i-1};\bm{x}_i)\Lambda \nabla^2 f(\widehat{\bm{\theta}}_{i-1};\bm{x}_i) = \eta \widehat{S}_t,
    \end{align}
and we denote the solution as $\widehat{\Lambda}_t$. 
Vectorize \eqref{eq:plug_lambda} we have
 \begin{align*}
\eta \VEC(\widehat{S}_t) =& 
(\widehat{A}_t\otimes I )\VEC(\widehat{\Lambda}_t) + (I\otimes \widehat{A}_t)\VEC(\widehat{\Lambda}_t)\\
& - \eta \underbrace{\frac{1}{t - B}\sum_{i=B+1}^{t}\nabla^2 f(\widehat{\bm{\theta}}_{i-1};\bm{x}_i)\otimes \nabla^2 f(\widehat{\bm{\theta}}_{i-1};\bm{x}_i)}_{:=\widehat{P}_t}\VEC(\widehat{\Lambda}_t).
 \end{align*}
That is,
\begin{equation}\label{eq:lambda_approx}
    \VEC(\widehat{\Lambda}_t) = \eta \{\widehat{A}_t\otimes I + I\otimes \widehat{A}_t - \eta \widehat{P}_t\}^{-1} \VEC(\widehat{S}_t).
\end{equation}
Using the estimators $\widehat{A}_t$,$\widehat{S}_t$ and $\widehat{\Lambda}_t$ in equations \eqref{eq:a_approx}, \eqref{eq:s_approx}, \eqref{eq:lambda_approx}, the plug-in estimator of the asymptotic covariance $\Sigma = A^{-1} (S + \mathbbm{E}[\Xi_{\bm{x}}(\bm{\theta}^*) \Lambda \Xi_{\bm{x}}(\bm{\theta}^*)]) A^{-1} = A^{-1} (\frac{1}{\eta}\Lambda A + \frac{1}{\eta}A\Lambda - A\Lambda A) A^{-1}$ is 
\begin{equation}
    \widehat{\Sigma}_t = \widehat{A}_t^{-1} \left(\frac{1}{\eta}\widehat{\Lambda}_t \widehat{A}_t + \frac{1}{\eta}\widehat{A}_t\widehat{\Lambda}_t -  \widehat{A}_t \widehat{\Lambda}_t\widehat{A}_t\right) \widehat{A}_t^{-1}.\label{eq:esti_sigma}
\end{equation}

We add notes on the computation of the plug-in estimator as follows. 
First, the plug-in estimator is an online estimator. 
Since each of $\widehat{A}_t, \widehat{P}_t, \widehat{S}_t$ can be computed in an online fashion, the estimator $\widehat{\Lambda}_t$ in \eqref{eq:lambda_approx} is also an online estimator. 
The plug-in estimator in \eqref{eq:esti_sigma} then can be computed with the online estimators $\widehat{A}_t$ and $\widehat{\Lambda}_t$. 
The plug-in estimator is thus online. 
Second, among all components in the plug-in estimator, the term $\widehat{\Lambda}_t$ in \eqref{eq:lambda_approx} is the hardest to compute. 
Luckily, there are some cases in which we can avoid computing $\widehat{\Lambda}_t$. 
For example, when $\Xi_{\bm{x}}(\bm{\theta}^*) = 0$ (which holds for the exponential family model), one has $\Sigma = A^{-1} S A^{-1}$.
In this case, the plug-in estimator is simply $\widehat{\Sigma}_t = \widehat{A}_t^{-1} \widehat{S}_t \widehat{A}_t^{-1}$. 
Avoiding computing $\widehat{\Lambda}_t$ can reduce the computational burden in those cases. 

\paragraph{Thresholding Step.} 
Note that in equations \eqref{eq:lambda_approx} and \eqref{eq:esti_sigma} we need to invert $\widehat{A}_t\otimes I + I\otimes \widehat{A}_t - \eta \widehat{P}_t$ and $\widehat{A}_t$, which are random quantities that might be poorly conditioned or even non-invertible. 
To deal with this issue, we do spectral thresholding for $\widehat{A}_t$ and $\widehat{P}_t$. 

We briefly explain the spectral thresholding technique as follows. 
For a positive semi-definite matrix $M$, spectral thresholding involves two steps. 
First, one performs eigen-decomposition on $M$, denote $M = UDU^T$. 
Second, depending on whether one needs upper thresholding or lower thresholding on the matrix $M$, one performs upper or lower truncation on the diagonal matrix $D$. 
For example, when upper thresholding using the threhold value $C$, one computes a diagonal matrix $\widetilde{D}$ such that $(\widetilde{D})_{i,i} = \min(C,D_{i,i})$. 
On the contrary, when lower thresholding using the threhold value $c$, one computes a diagonal matrix $\widetilde{D}$ such that $(\widetilde{D})_{i,i} = \max(c,D_{i,i})$. 
The thresholded matrix is $M = U\widetilde{D}U^T$. 

In our case, we do lower thresholding for $\widehat{A}_t$ using threshold value $\delta$ and denote the thresholded matrix as $\widetilde{A}_t$. 
We do upper thresholding for $\widehat{P}_t$ using threshold value $\delta'$ and denote the thresholded matrix as $\widetilde{P}_t$. 
The choice of thresholding parameters $\delta, \delta'$ will be discussed in Theorem \ref{thm:consis}. 

With the thresholded estimator for $A$ and $P$, we now give the thresholded plug-in covariance estimator. 
Similar to equation \eqref{eq:lambda_approx}, we first develop the thresholded estimator for $\Lambda$:
\[
\VEC(\widetilde{\Lambda}_t) = \eta \{\widetilde{A}_t\otimes I + I\otimes \widetilde{A}_t- \eta \widetilde{P}_t\}^{-1} \VEC(\widehat{S}_t).
\]
Then the thresholded plug-in estimator for the asymptotic covariance is
\begin{equation}
    \widetilde{\Sigma}_t = \widetilde{A}_t^{-1} \left(\frac{1}{\eta}\widetilde{\Lambda}_t \widetilde{A}_t + \frac{1}{\eta}\widetilde{A}_t\widetilde{\Lambda}_t-  \widetilde{A}_t \widetilde{\Lambda}_t\widetilde{A}_t\right) \widetilde{A}_t^{-1}.\label{eq:esti_thres_sigma}
\end{equation}
The thresholding is usually not required in practice, as $\widehat{A}_t$ and $\widehat{P}_t$ are close to $A$ and $P$ with high probability. 
The thresholding provides a theoretical guarantee that the matrix inverse in our estimator is doable.

\subsection{Convergence of Our Plug-in Estimator}\label{sec:2.2}
In this section, we show that the thresholded plug-in estimator is consistent. 
In particular, we bound $\mathbbm{E}\|\Sigma - \widetilde{\Sigma}_t\|_F$ and show that it converges to $0$ as $t\to \infty$. 
We also specify the convergence rate as a function of $t$. 
For a specific problem, such as linear regression, one can further analyze the dependence of the convergence rate on the dimension of the problem. 

To establish the consistency of the plug-in estimators, we need some regularity conditions on the objective function. 
These conditions are mild, as we show that some common learning examples, such as linear regression, logistic regression, and exponential family model, all satisfy those conditions; see details in Appendix \ref{app:A}. 
We now define these conditions. 

\begin{assumption}[$\gamma$-strong convexity and $L_1$-smoothness]\label{ass:sc}
Assume that $F(\bm{\theta})$ is twice continuously differentiable, $\gamma$-strongly convex and $L_1$-smooth: 
$$F(\bm{\theta}_1)\geq F(\bm{\theta}_2) + \nabla F(\bm{\theta}_2) ^T (\bm{\theta}_1 - \bm{\theta}_2) + \frac{\gamma}{2}\|\bm{\theta}_1 - \bm{\theta}_2\|_2^2, \forall \bm{\theta}_1,\bm{\theta}_2;$$
$$F(\bm{\theta}_1)\leq F(\bm{\theta}_2) + \nabla F(\bm{\theta}_2) ^T (\bm{\theta}_1 - \bm{\theta}_2) + \frac{L_1}{2}\|\bm{\theta}_1 - \bm{\theta}_2\|_2^2, \forall \bm{\theta}_1,\bm{\theta}_2.$$
\end{assumption}

\begin{assumption}[Finite covariance at optimality]\label{ass:bv}
Assume $\nabla f(\bm{\theta};\bm{x})$ is unbiased and has finite covariance at $\bm{\theta}^*$ (w.r.t. randomness in $\bm{x}$) as follows:
\begin{align*}
    &\mathbbm{E}_{\bm{x}\sim P}[\nabla f(\bm{\theta};\bm{x})] = \nabla F(\bm{\theta}),\\
    &\mathbbm{E}_{\bm{x}\sim P}[\|\nabla f(\bm{\theta}^*;\bm{x}) - \nabla F(\bm{\theta}^*)\|_2^2]= \sigma_1^2 < \infty.
\end{align*}
\end{assumption}

\begin{assumption}[Lipschitz stochastic noise]\label{ass:4}
Denote $\delta(\bm{\theta};\bm{x}) := \nabla f(\bm{\theta};\bm{x}) - \nabla F(\bm{\theta})$. 
Assume there exists a constant $L_2$ that
\begin{equation}
    \mathbbm{E} \|\delta(\bm{\theta}_1;\bm{x}) - \delta(\bm{\theta}_2;\bm{x})\|_2^2 \leq L_2^2 \|\bm{\theta}_1 - \bm{\theta}_2\|_2^2, \forall \bm{\theta}_1,\bm{\theta}_2.
\end{equation}
\end{assumption}

\begin{assumption}\label{ass:5}
Assume the stochastic gradient is mean-smooth around $\bm{\theta}^*$. 
In particular, there exists a constant $L_3$ such that: 
\begin{align*}
    \mathbbm{E}\|\nabla^2 f(\bm{\theta};\bm{x}) - \nabla^2 f(\bm{\theta}^*;\bm{x})\|_2^2 \leq L_3^2\|\bm{\theta} - \bm{\theta}^*\|_2^2, \forall \bm{\theta} \in \mathcal{R}^p.
\end{align*}
Assume the fourth moments of the stochastic gradient and the stochastic Hessian are bounded. 
In particular, there exist constants $l_4$ and $L_4$ such that: 
\begin{align*}
    &\mathbbm{E}\|\nabla f(\bm{\theta}^*;\bm{x})\|_2^4 \leq l_4^4, \\
    &\mathbbm{E}\|\nabla^2 f(\bm{\theta}^*;\bm{x})\|_2^4 \leq L_4^4.
\end{align*}
\end{assumption}

\begin{assumption}[$L_5-$smoothness]\label{ass:smooth}
Assume that $\nabla f(\bm{\theta};\bm{x})$ is mean-squared Lipchitz continuous. 
In particular, there exists a constant $L_5$ that: 
$$\mathbbm{E}\|\nabla f( \bm{\theta}_1;\bm{x}) - \nabla f( \bm{\theta}_2;\bm{x})\|_2^2 \leq L_5^2\| \bm{\theta}_1 -  \bm{\theta}_2\|_2^2, \forall  \bm{\theta}_1, \bm{\theta}_2.$$
\end{assumption}
\begin{remark}
Assumption \ref{ass:smooth} is implied by Assumptions \ref{ass:sc} and \ref{ass:4}. 
When Assumptions \ref{ass:sc} and \ref{ass:4} hold, we must have Assumption \ref{ass:smooth} hold for $L_5^2 = L_1^2 + L_2^2$. 
See the proof in Appendix \ref{app:connect_assumption}. 
\end{remark}

\begin{assumption}\label{ass:6}
Assume the Kronecker product of the Hessian is smooth around $\bm{\theta}^*$ and has bounded covariance at $\bm{\theta}^*$. 
In particular, there exist constants $L_6$, $L_6^\prime$ and $L_7$ such that:
\[ \mathbbm{E}\|\nabla^2 f(\bm{\theta};\bm{x})\otimes \nabla^2 f(\bm{\theta};\bm{x}) - \nabla^2 f(\bm{\theta}^*;\bm{x})\otimes \nabla^2 f(\bm{\theta}^*;\bm{x})\|_2\leq L_6 \|\bm{\theta} - \bm{\theta}^*\|_2 + L_6^\prime \|\bm{\theta} - \bm{\theta}^*\|_2^2,
\]
\[
\mathbbm{E} \|\nabla^2 f(\bm{\theta}^*;\bm{x})\otimes \nabla^2 f(\bm{\theta}^*;\bm{x}) - \mathbbm{E}[\nabla^2 f(\bm{\theta}^*;\bm{x})\otimes \nabla^2 f(\bm{\theta}^*;\bm{x})]\|_2^2\leq L_7.
\]
\end{assumption}
\begin{remark}\label{remark:02}
Assumption \ref{ass:6} is implied by Assumption \ref{ass:5}. 
When Assumption \ref{ass:5} holds, Assumption \ref{ass:6} also holds for $L_6 = 2L_3L_4^2$, $L_6' = L_3^2$ and $L_7 = L_4^4$. See proof in Appendix \ref{app:connect_assumption}. 
\end{remark}

We show that the thresholded plug-in estimator is consistent in the following theorem:
\begin{theorem}[Asymptotically consistent estimator for $\Sigma$]\label{thm:consis}
Under Assumption \ref{ass:sc} to \ref{ass:6}, there exists constants $c_1$, $c_2$ such that when we run ROOT-SGD for $\eta < \min\Biggl(c_1\biggl(\frac{\gamma}{L_2^2} \wedge \frac{1}{L_1} \wedge \frac{\gamma^{1/3}}{L_4^{4/3}} \biggl), 2\delta /L_4^2\Biggr)$, burn-in period $B = \left\lceil{\frac{c_2}{\gamma\eta}}\right\rceil$ and take the thresholding parameter in $\widetilde{\Sigma}_t$ be such that $\delta < \lambda_{\min}(A)$ and $\delta' = L_4^2$, we have
\[\mathbbm{E}\|\Sigma - \widetilde{\Sigma}_t\|_F \lesssim \left[\|S\|_F  C_p^2 + \sqrt{p}C_p^4\right] /\sqrt{t},\]
where $C_p \lesssim \max\{\sigma_1/\gamma,L_3, \sqrt{L_4},l_4,\sqrt{L_5},L_6\}$. 
Thus, for a fixed $C_p$, when sample size $t \to \infty$, we have $\mathbbm{E}\|\Sigma - \widetilde{\Sigma}_t\|_F \to 0$.
\end{theorem}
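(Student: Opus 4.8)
The plan is to treat $\widetilde{\Sigma}_t$ as a single map $\Phi$ applied to the three empirical ingredients $\widehat{A}_t,\widehat{P}_t,\widehat{S}_t$ --- threshold $\widehat{A}_t$ and $\widehat{P}_t$, solve the vectorized modified Lyapunov equation for $\widetilde{\Lambda}_t$, then form the sandwich $\widetilde{A}_t^{-1}(\tfrac{1}{\eta}\widetilde{\Lambda}_t\widetilde{A}_t+\tfrac{1}{\eta}\widetilde{A}_t\widetilde{\Lambda}_t-\widetilde{A}_t\widetilde{\Lambda}_t\widetilde{A}_t)\widetilde{A}_t^{-1}$ --- and to note that $\Sigma=\Phi(A,P,S)$ for the population triple $A=\nabla^2 F(\bm{\theta}^*)$, $P=\mathbbm{E}[\nabla^2 f(\bm{\theta}^*;\bm{x})\otimes\nabla^2 f(\bm{\theta}^*;\bm{x})]$, $S=\mathbbm{E}[\nabla f(\bm{\theta}^*;\bm{x})\nabla f(\bm{\theta}^*;\bm{x})^T]$ (this is the vectorized form of \eqref{eq:lambda} after cancelling the $A\Lambda A$ terms). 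The argument then splits into three parts: (i) operator-norm concentration $\mathbbm{E}\|\widehat{A}_t-A\|$, $\mathbbm{E}\|\widehat{P}_t-P\|$, $\mathbbm{E}\|\widehat{S}_t-S\|\lesssim C_p^2/\sqrt{t}$; (ii) a perturbation bound $\mathbbm{E}\|\widetilde{\Lambda}_t-\Lambda\|\lesssim(\|S\|+C_p^2)C_p^2/\sqrt{t}$ for the thresholded empirical Lyapunov solution; and (iii) Lipschitz stability of $\Phi$ on the region carved out by the thresholding, plus control of a low-probability atypical event. Two facts are used throughout: $\widehat{\bm{\theta}}_{i-1}$ is measurable with respect to $\sigma(\bm{x}_1,\dots,\bm{x}_{i-1})$, hence independent of $\bm{x}_i$; and the ROOT-SGD rate $\mathbbm{E}\|\widehat{\bm{\theta}}_{i-1}-\bm{\theta}^*\|_2^2\lesssim C_p^2/i$ from Lemma~\ref{lem:asymptotic_root} (the first half of the step-size condition in the statement is precisely what that lemma requires).

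For part (i) I would split each error into a martingale part and a bias part. For $\widehat{A}_t$, write $\widehat{A}_t-A=\frac{1}{t-B}\sum_{i=B+1}^t(\nabla^2 f(\widehat{\bm{\theta}}_{i-1};\bm{x}_i)-\nabla^2 F(\widehat{\bm{\theta}}_{i-1}))+\frac{1}{t-B}\sum_{i=B+1}^t(\nabla^2 F(\widehat{\bm{\theta}}_{i-1})-A)$. The first sum is a martingale-difference sequence (by $\widehat{\bm{\theta}}_{i-1}\perp\bm{x}_i$), so its squared norm has expectation $\frac{1}{t-B}$ times a uniform bound on $\mathbbm{E}\|\nabla^2 f(\widehat{\bm{\theta}}_{i-1};\bm{x}_i)\|_2^2$, which follows from $\mathbbm{E}\|\nabla^2 f(\bm{\theta}^*;\bm{x})\|_2^4\le L_4^4$ together with $\mathbbm{E}\|\nabla^2 f(\bm{\theta};\bm{x})-\nabla^2 f(\bm{\theta}^*;\bm{x})\|_2^2\le L_3^2\|\bm{\theta}-\bm{\theta}^*\|_2^2$ of Assumption~\ref{ass:5} and the ROOT-SGD rate. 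The second sum is at most $\frac{1}{t-B}\sum_{i=B+1}^t L_3\,\mathbbm{E}\|\widehat{\bm{\theta}}_{i-1}-\bm{\theta}^*\|_2\le\frac{1}{t-B}\sum_{i=B+1}^t L_3 C_p/\sqrt{i}\lesssim C_p/\sqrt{t}$ (Jensen and Lemma~\ref{lem:asymptotic_root}). The same decomposition handles $\widehat{S}_t$ and $\widehat{P}_t$; for $\widehat{S}_t$ the bias piece picks up an extra $\nabla F(\widehat{\bm{\theta}}_{i-1})\nabla F(\widehat{\bm{\theta}}_{i-1})^T$ term, which is $O(\|\widehat{\bm{\theta}}_{i-1}-\bm{\theta}^*\|_2^2)$ and hence contributes only $O((\log t)/t)$ because $\nabla F(\bm{\theta}^*)=0$. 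Assumptions~\ref{ass:4}--\ref{ass:6} supply exactly the Lipschitz-in-$\bm{\theta}$ and fourth-moment controls this needs.

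For part (ii), vectorize: $\VEC(\Lambda)=\eta M^{-1}\VEC(S)$ with $M=A\otimes I+I\otimes A-\eta P$, and $\VEC(\widetilde{\Lambda}_t)=\eta\widetilde{M}_t^{-1}\VEC(\widehat{S}_t)$ with $\widetilde{M}_t=\widetilde{A}_t\otimes I+I\otimes\widetilde{A}_t-\eta\widetilde{P}_t$. The thresholding is set up so that: (a) lower-thresholding $\widehat{A}_t$ at $\delta<\lambda_{\min}(A)$ and upper-thresholding $\widehat{P}_t$ at $\delta'=L_4^2$ are Euclidean projections onto the convex spectral sets $\{N\succeq\delta I\}$ and $\{N\preceq\delta' I\}$, and since $A\succ\delta I$ and $P\preceq\|P\|_2 I\preceq L_4^2 I$ (using $\|P\|_2\le\mathbbm{E}\|\nabla^2 f(\bm{\theta}^*;\bm{x})\|_2^2\le L_4^2$) these projections are non-expansive toward the truth, so $\|\widetilde{A}_t-A\|\le\|\widehat{A}_t-A\|$ and $\|\widetilde{P}_t-P\|\le\|\widehat{P}_t-P\|$; and (b) $\widetilde{A}_t\succeq\delta I$ forces $\widetilde{A}_t\otimes I+I\otimes\widetilde{A}_t\succeq 2\delta I$ while $-\eta\widetilde{P}_t\succeq-\eta\delta' I$ (only the upper threshold on $\widehat{P}_t$ matters --- negative eigenvalues of $\widehat{P}_t$ only help), so $\widetilde{M}_t\succeq(2\delta-\eta\delta')I\succ 0$ precisely because $\eta<2\delta/L_4^2$, whence $\|\widetilde{M}_t^{-1}\|_2\le(2\delta-\eta L_4^2)^{-1}$. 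The resolvent identity $\widetilde{M}_t^{-1}-M^{-1}=\widetilde{M}_t^{-1}(M-\widetilde{M}_t)M^{-1}$ then gives $\|\widetilde{\Lambda}_t-\Lambda\|\le\eta\|\widetilde{M}_t^{-1}\|_2\,\|M-\widetilde{M}_t\|\,\|M^{-1}\|_2\|S\|+\eta\|\widetilde{M}_t^{-1}\|_2\|\widehat{S}_t-S\|$ with $\|M-\widetilde{M}_t\|\lesssim\|\widehat{A}_t-A\|+\eta\|\widehat{P}_t-P\|$, and part (i) closes the estimate.

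Part (iii) combines these: on the event where $\widehat{A}_t$ and $\widehat{P}_t$ lie within, say, $\tfrac12\lambda_{\min}(A)$ and $\tfrac12(2\delta-\eta\delta')/\eta$ of their limits, all of $\widetilde{A}_t^{-1},\widetilde{M}_t^{-1},\widetilde{A}_t,\widetilde{\Lambda}_t$ have operator norms bounded in terms of $\gamma,L_1,\eta,\delta,L_4,\|S\|$, so $(A,\Lambda)\mapsto A^{-1}(\tfrac1\eta\Lambda A+\tfrac1\eta A\Lambda-A\Lambda A)A^{-1}$ is Lipschitz there with an explicit constant; plugging in (i)--(ii) and passing to the Frobenius norm on the resulting $p\times p$ matrix --- where $\|\Lambda\|_F\lesssim\sqrt{p}\,\|\Lambda\|_2$, which is where the lone factor $\sqrt{p}$ enters --- produces the two terms $\|S\|_F C_p^2/\sqrt{t}$ and $\sqrt{p}\,C_p^4/\sqrt{t}$; the complementary event has probability $\lesssim C_p^4/t$ by Chebyshev on the second moments from (i), and there $\|\Sigma-\widetilde{\Sigma}_t\|_F$ is controlled by the deterministic inequalities $\|\widetilde{A}_t^{-1}\|_2\le1/\delta$, $\|\widetilde{M}_t^{-1}\|_2\le(2\delta-\eta L_4^2)^{-1}$ plus the moment bounds on $\|\widehat{A}_t\|_F,\|\widehat{S}_t\|_F$, contributing only a lower-order term. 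The hard part will be part (ii): one must pick the thresholds so that $\widetilde{M}_t$ stays invertible with a constant that does not degrade --- which is exactly why the statement requires $\delta<\lambda_{\min}(A)$, $\delta'=L_4^2$ and $\eta<2\delta/L_4^2$ --- while noting that $\widehat{P}_t$, an average of Kronecker squares $\nabla^2 f\otimes\nabla^2 f$, need not be positive semidefinite, so the lower bound on $\widetilde{M}_t$ must rest on the spectral ceiling $\widetilde{P}_t\preceq\delta' I$ rather than on any definiteness of $\widehat{P}_t$; after that, one must propagate the operator-norm errors through the two nested maps while tracking the dependence on $C_p$, $\|S\|$ and $p$ exactly as claimed, which forces staying in operator norm until the very last step. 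Part (i) is comparatively routine once the martingale structure and the ROOT-SGD rate are in hand.
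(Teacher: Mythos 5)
Your proposal is correct and follows the same overall strategy as the paper's proof: (i) concentration of the empirical components at rate $1/\sqrt{t}$ using the ROOT-SGD iterate error bound, (ii) a perturbation bound for the thresholded Lyapunov solution in which the thresholds $\delta,\delta'=L_4^2$ and the condition $\eta<2\delta/L_4^2$ supply the invertibility margin $2\delta-\eta L_4^2$, and (iii) assembly through the simplified expression $\Sigma=\tfrac1\eta A^{-1}\Lambda+\tfrac1\eta\Lambda A^{-1}-\Lambda$ together with $\|\Lambda\|_F\lesssim\eta\|S\|_F$. The one genuine difference is in step (ii): the paper subtracts the two Lyapunov equations, forms a residual $\epsilon$, and proves a lower bound $\|\epsilon\|_F\gtrsim(2\delta-\eta L_4^2)\|\widetilde\Lambda_t-\Lambda\|_F$ via the eigen-decomposition of $\widetilde A_t$, whereas you bound $\lambda_{\min}(\widetilde M_t)\geq 2\delta-\eta L_4^2$ for the vectorized operator $\widetilde M_t=\widetilde A_t\otimes I+I\otimes\widetilde A_t-\eta\widetilde P_t$ and apply the resolvent identity. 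These are equivalent (the paper's lower bound is exactly the smallest-eigenvalue statement), but your version is cleaner, handles the possible indefiniteness of $\widehat P_t$ correctly through the spectral ceiling, and gives well-posedness of both the population and empirical modified Lyapunov equations for free. Your part (i) uses a martingale-plus-bias split where the paper simply adds and subtracts the terms evaluated at $\bm{\theta}^*$; both work.

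Two bookkeeping cautions. First, your intermediate rates are optimistic in powers of $C_p$: under the stated assumptions $\mathbbm{E}\|\widehat S_t-S\|_2$ and $\mathbbm{E}\|\widehat P_t-P\|_2$ are of order $C_p^4/\sqrt{t}$ (the cross term $L_5 l_4 C_p$ and the fluctuation $\sqrt{L_7}\leq L_4^2$ are not $O(C_p^2)$), not $C_p^2/\sqrt{t}$; this does not endanger the final claim, since $\|S\|_2\leq l_4^2\leq C_p^2$ makes even the cruder operator-norm-then-$\sqrt p$ accounting land inside $[\|S\|_F C_p^2+\sqrt p\,C_p^4]/\sqrt t$, but the constants must be tracked as the paper does (its $\sqrt p$ enters through $\|\widehat S_t-S\|_F\leq\sqrt p\,\|\widehat S_t-S\|_2$). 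Second, the iterate rate $\mathbbm{E}\|\widehat{\bm{\theta}}_{i}-\bm{\theta}^*\|_2^2\lesssim 1/i$ comes from Lemma \ref{prop:convergence} (Theorem 5 of \citep{li2022root}), not from the asymptotic-normality Lemma \ref{lem:asymptotic_root}; also, your atypical-event splitting in (iii) is unnecessary, since thresholding makes $\|\widetilde A_t^{-1}\|_2\leq\delta^{-1}$ and $\|\widetilde M_t^{-1}\|_2\leq(2\delta-\eta L_4^2)^{-1}$ deterministic, so e.g. $\|\widetilde A_t^{-1}-A^{-1}\|_2\leq\delta^{-1}\lambda_{\min}(A)^{-1}\|\widetilde A_t-A\|_2$ holds pointwise, and the claimed non-expansiveness of spectral clipping is exact only in Frobenius norm — in operator norm use Weyl's inequality, which costs only a factor of $2$.
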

\begin{remark}
In Theorem \ref{thm:consis}, we can further analyze how the upper bound depends on the parameter dimension for a specific learning problem. 

For example, consider the linear regression model. 
We have data $\bm{x}_i^T = (\bm{a}_i^T,b_i)$, where $\bm{a}_i$'s are the vectors of explanatory variables, and $b_i$'s are the responses. 
Assume $\bm{a}_i \stackrel{i.i.d.}{\sim} N(0,I_p)$, and there is a true parameter $\bm{\theta}^*$ such that 
$b_i = \bm{a}_i^T \bm{\theta}^* + \epsilon_i$ for $\epsilon_i\stackrel{i.i.d.}{\sim} N(0,1)$. 
Here the additive noises $\epsilon_i$'s are independent of $\bm{a}_i$'s. 
Use the squared loss
\[f(\bm{\theta};\bm{x}) = \frac{1}{2}(\bm{a}^T \bm{\theta} - b )^2.\]

As shown in Appendix \ref{app:lr_model}, we have $C_p \lesssim \sqrt{p} $ and $\|S\|_F = \left\|\mathbbm{E}[\epsilon^2 \bm{a}\bm{a}^T]\right\|_F =\|I_p\|_F = \sqrt{p}$. 
Thus the bound in Theorem \ref{thm:consis} becomes 
\[\mathbbm{E}\|\Sigma - \widetilde{\Sigma}_t\|_F \lesssim p^{5/2} /\sqrt{t}.\]
\end{remark}

The proof of Theorem \ref{thm:consis} is in Appendix \ref{app:B}. 

By Theorem \ref{thm:consis}, our thresholded plug-in estimator $\widetilde{\Sigma}_t$ is asymptotically consistent. 
Combined with the asymptotic distribution of the parameter estimator $\widehat{\bm{\theta}}_t$ in \eqref{eq:asymptotic_root} that $\sqrt{t}(\widehat{\bm{\theta}}_t - \bm{\theta}^*) \stackrel{d}{\to} \mathcal{N}(0,\Sigma)$, we can build asymptotically exact confidence interval.  
We summarize this result in the following corollary: 
\begin{corollary}[Asymptotically exact confidence interval]
 Under assumptions of Theorem \ref{thm:consis}, when the parameter dimension $p$ is fixed and sample size $t\to \infty$, we have
 \[P\left((\widehat{\bm{\theta}}_t)_j - z_{q/2}*(\widetilde{\Sigma}_t)_{j,j}^{1/2}/\sqrt{t}\leq \bm{\theta}^*_j \leq (\widehat{\bm{\theta}}_t)_j + z_{q/2}*(\widetilde{\Sigma}_t)_{j,j}^{1/2}/\sqrt{t}\right)\to 1-q.\]
\end{corollary}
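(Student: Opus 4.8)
The plan is to combine the asymptotic normality of the ROOT-SGD iterate in \eqref{eq:asymptotic_root} with the consistency of $\widetilde{\Sigma}_t$ from Theorem \ref{thm:consis} through a routine coordinatewise Slutsky / continuous-mapping argument. Fix a coordinate $j$ and abbreviate the studentized statistic by $T_t := \sqrt{t}\,((\widehat{\bm{\theta}}_t)_j - \bm{\theta}^*_j)\big/(\widetilde{\Sigma}_t)_{j,j}^{1/2}$; the event whose probability appears in the statement is, after rearranging the two inequalities, exactly $\{|T_t|\le z_{q/2}\}$, so it suffices to show $T_t \stackrel{d}{\to}\mathcal{N}(0,1)$ together with the fact that $\pm z_{q/2}$ are continuity points of the standard normal CDF.

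First I would upgrade the moment bound of Theorem \ref{thm:consis} to convergence in probability: since $p$ is fixed, $C_p$ and $\|S\|_F$ are constants, so $\mathbbm{E}\|\Sigma - \widetilde{\Sigma}_t\|_F \lesssim [\,\|S\|_F C_p^2 + \sqrt{p}\,C_p^4\,]/\sqrt{t}\to 0$, and Markov's inequality gives $\|\Sigma - \widetilde{\Sigma}_t\|_F \stackrel{p}{\to}0$, hence $(\widetilde{\Sigma}_t)_{j,j}\stackrel{p}{\to}\Sigma_{j,j}$. Under the standing assumptions $A=\nabla^2 F(\bm{\theta}^*)\succ 0$ by $\gamma$-strong convexity, so $\Sigma = A^{-1}(S+\mathbbm{E}[\Xi_{\bm{x}}(\bm{\theta}^*)\Lambda\Xi_{\bm{x}}(\bm{\theta}^*)])A^{-1}$ is positive definite provided the inner matrix is (a mild non-degeneracy condition, e.g. implied by $S\succ 0$), whence $\Sigma_{j,j}>0$. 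Then $x\mapsto x^{1/2}$ is continuous at $\Sigma_{j,j}$, and the continuous mapping theorem yields $(\widetilde{\Sigma}_t)_{j,j}^{1/2}\stackrel{p}{\to}\Sigma_{j,j}^{1/2}>0$.

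Next I would extract the $j$-th coordinate of \eqref{eq:asymptotic_root}: applying the continuous linear coordinate-projection to $\sqrt{t}(\widehat{\bm{\theta}}_t-\bm{\theta}^*)\stackrel{d}{\to}\mathcal{N}(0,\Sigma)$ gives $\sqrt{t}((\widehat{\bm{\theta}}_t)_j - \bm{\theta}^*_j)\stackrel{d}{\to}\mathcal{N}(0,\Sigma_{j,j})$. Combining this with $(\widetilde{\Sigma}_t)_{j,j}^{1/2}\stackrel{p}{\to}\Sigma_{j,j}^{1/2}$ via Slutsky's theorem,
\[
T_t = \frac{\sqrt{t}\,((\widehat{\bm{\theta}}_t)_j - \bm{\theta}^*_j)}{(\widetilde{\Sigma}_t)_{j,j}^{1/2}} \stackrel{d}{\to} \frac{\mathcal{N}(0,\Sigma_{j,j})}{\Sigma_{j,j}^{1/2}} = \mathcal{N}(0,1).
\]
Finally, since $z_{q/2}$ is the upper-$(q/2)$ quantile of $\mathcal{N}(0,1)$ and $\pm z_{q/2}$ are continuity points of its CDF, convergence in distribution gives $P(|T_t|\le z_{q/2})\to P(|Z|\le z_{q/2}) = 1-q$ for $Z\sim\mathcal{N}(0,1)$, which is the claim.

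There is essentially no hard obstacle here — the statement is a standard corollary of the two main results. The only points that require genuine care are (i) converting the Frobenius-norm expectation bound of Theorem \ref{thm:consis} into the coordinatewise in-probability convergence needed for Slutsky, which is where fixing $p$ is used, and (ii) verifying $\Sigma_{j,j}>0$ so that studentization and its limit are well-defined; both follow from the assumptions already in force (plus the usual non-degeneracy of the stochastic gradient noise).
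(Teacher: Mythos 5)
Your proposal is correct and follows exactly the route the paper intends: the corollary is presented as an immediate consequence of Theorem \ref{thm:consis} (consistency of $\widetilde{\Sigma}_t$, upgraded to convergence in probability via Markov for fixed $p$) combined with the asymptotic normality \eqref{eq:asymptotic_root} through coordinate projection and Slutsky's theorem, which is precisely what you wrote. Your remark about needing $\Sigma_{j,j}>0$ for the studentization is a fair observation of an implicit non-degeneracy assumption that the paper does not spell out, but it does not constitute a deviation from the paper's argument.
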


With our plug-in covariance estimator proposed and the asymptotic convergence proved in this section, we compare our plug-in covariance estimator with that for the ASGD algorithm. 
The comparison details are in Appendix \ref{app:plug_in_compare}. 
To summarize the comparison, our plug-in estimator has faster convergence than that from the ASGD algorithm. However, the computation of our plug-in estimator is heavier than that of the ASGD, except for the special case that $\nabla^2 f(\bm{\theta}^*;\bm{x}) \equiv \nabla^2 F(\bm{\theta}^*)$. 

\section{A Hessian-free Estimator and Its Asymptotic Consistency}\label{sec:03}
In this section, we propose a covariance estimator for ROOT-SGD that uses only the parameter estimates sequence, $\widehat{\bm{\theta}}_1, \cdots, \widehat{\bm{\theta}}_t$. 
The plug-in estimator in the previous section uses the stochastic Hessian, which is unknown or hard to compute in some practical problems. 
Our covariance estimator in this section does not have this issue: it uses only the parameter estimates and thus is Hessian-free. 

\subsection{A Hessian-free Estimator}
Our Hessian-free estimator uses the random-scaling technique. 
In this section, we review the random-scaling technique in the ASGD algorithm and discuss the covariance estimator derived by the random-scaling technique. 
We further propose a random-scaling estimator for the ROOT-SGD algorithm. 

The random-scaling technique, when applied to the ASGD algorithm, gives the following result. 
For the ASGD estimator $\bar{\bm{\theta}}_t$, \citep{lee2021fast} shows that
\begin{align}
    \frac{\sqrt{t}\bm{w}^T (\bar{\bm{\theta}}_t - \bm{\theta}^*)}{\sqrt{\bm{w}^T \bar{V}_t \bm{w}}} \stackrel{d}{\to} \frac{W_1}{\sqrt{\int_{0}^1 (W_r - rW_1)^2 dr}},\label{eq:robust_testing}
\end{align}
where $\bm{w}$ is any given direction satisfying $\bm{w}\neq \bm{0}$, $W_r$ is the standard Wiener process and $\bar{V}_t = \frac{1}{t^2}\sum_{i=1}^t i^2 (\bar{\bm{\theta}}_i - \bar{\bm{\theta}}_t)(\bar{\bm{\theta}}_i - \bar{\bm{\theta}}_t)^T$. 
The term $\bar{V}_t$ is similar to a (weighted) covariance estimator when we treat $\bar{\bm{\theta}}_i$'s as samples, so we refer to it as the random-scaling estimator. 
And we refer to the L.H.S. of \eqref{eq:robust_testing}, i.e. the $\frac{\sqrt{t}\bm{w}^T (\bar{\bm{\theta}}_t - \bm{\theta}^*)}{\sqrt{\bm{w}^T \bar{V}_t \bm{w}}}$, as the random-scaling quantity. 
Now the random-scaling quantity converges in distribution to $ \frac{W_1}{\sqrt{\int_{0}^1 (W_r - rW_1)^2 dr}}$ for any given direction $\bm{w}$. 
Then $\Var\left(\frac{W_1}{\sqrt{\int_{0}^1 (W_r - rW_1)^2 dr}}\right) * \bar{V}_t$ is an estimator for the asymptotic covariance of $\sqrt{t}(\bar{\bm{\theta}}_t - \bm{\theta}^*)$, and we refer to such an estimator as the random-scaling covariance estimator. 

The random-scaling technique can be applied to other online algorithms. 
For example, a similar result as \eqref{eq:robust_testing} has been proved for the zero-order Kiefer-Wolfowitz algorithm by Theorem 4.4 of \citep{chen2021online}. 
These results indicate the usefulness of the random-scaling technique in different algorithms. 
For the ROOT-SGD estimator, the random-scaling technique may also apply. 

We now follow the random-scaling quantity in \eqref{eq:robust_testing} to propose a random-scaling quantity for the ROOT-SGD algorithm. 
Observing \eqref{eq:robust_testing}, the random-scaling quantity essentially treats the averaging sequence $\Bar{\bm{\theta}}_i$'s as samples and calculate the sample covariance.
The asymptotic distribution in \eqref{eq:robust_testing} relies on a substantial extension of the asymptotic normality of the averaging sequence $\sqrt{t}(\bar{\bm{\theta}}_t - \bm{\theta}^*)$ in \citep{polyak1992acceleration} to the random function of $\sqrt{t}(\bar{\bm{\theta}}_{[rt]} - \bm{\theta}^*)$ for $r\in [0,1]$.
Back to the ROOT-SGD algorithm, \citep{li2022root} shows that the asymptotic normality holds for the estimators $\widehat{\bm{\theta}}_t$, instead of the averaging estimator. 
Then ROOT-SGD counterpart of \eqref{eq:robust_testing} should treat $\widehat{\bm{\theta}}_i$'s as samples instead of the average $\Bar{\bm{\theta}}_i$'s. 
Thus, the random-scaling quantity for the ROOT-SGD is given by: 
\begin{equation}\label{eq:rs_quantity}
\frac{\sqrt{t}\bm{w}^T (\widehat{\bm{\theta}}_t - \bm{\theta}^*)}{\sqrt{\bm{w}^T V_t \bm{w}}} ,   
\end{equation}
where ${V}_t = \frac{1}{t^2}\sum_{i=1}^t i^2 (\widehat{\bm{\theta}}_i - \widehat{\bm{\theta}}_t)(\widehat{\bm{\theta}}_i - \widehat{\bm{\theta}}_t)^T$. 
We refer to $V_t$ as the random-scaling estimator. 
In the next section, we present the asymptotic distribution of \eqref{eq:rs_quantity} and explain how to use it to develop an asymptotically consistent covariance estimator for $\bm{\theta}^*$.

\subsection{Consistency of Our Hessian-free Estimator}
In this section, we present the asymptotic distribution of the random-scaling quantity in \eqref{eq:rs_quantity}, develop an asymptotically consistent covariance estimator based on the random-scaling quantity, and explain how to use the random-scaling estimator to conduct statistical inference for the ROOT-SGD algorithm. 

The asymptotic distribution of \eqref{eq:rs_quantity} is provided in the following theorem: 
\begin{theorem}\label{thm:3}
Suppose Assumption \ref{ass:sc} to \ref{ass:5} hold. 
Then there exists constants $c_1$, $c_2$, when we take the step size $\eta\in \left(0,c_1(\frac{\gamma}{L_2^2} \wedge \frac{1}{L_1} \wedge \frac{\gamma^{1/3}}{L_4^{4/3}} )\right)$ and burn-in period $B = \left\lceil{\frac{c_2}{\gamma\eta}}\right\rceil$ in the ROOT-SGD algorithm, for any given direction $\bm{w} \neq \bm{0}$, we have the following
\begin{align}
    \frac{\sqrt{t}\bm{w}^T (\widehat{\bm{\theta}}_t - \bm{\theta}^*)}{\sqrt{\bm{w}^T V_t \bm{w}}} \stackrel{d}{\to} \frac{W_1}{\sqrt{\int_{0}^1 (W_r - rW_1)^2 dr}},\label{eq:robust_testing_rootsgd}
\end{align}
where ${V}_t = \frac{1}{t^2}\sum_{i=1}^t i^2 (\widehat{\bm{\theta}}_i - \widehat{\bm{\theta}}_t)(\widehat{\bm{\theta}}_i - \widehat{\bm{\theta}}_t)^T$, and $W_r$ is the standard Wiener process (i.e., $W_r\sim N(0,r)$).
\end{theorem}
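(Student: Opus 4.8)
The plan is to establish \eqref{eq:robust_testing_rootsgd} by the route that \citep{lee2021fast} uses for the averaged SGD iterate, but applied directly to the ROOT-SGD iterate $\widehat{\bm{\theta}}_t$: first upgrade the pointwise asymptotic normality of Lemma~\ref{lem:asymptotic_root} to a \emph{functional} central limit theorem for the rescaled partial process $r\mapsto\sqrt{t}\,(\widehat{\bm{\theta}}_{\lfloor rt\rfloor}-\bm{\theta}^*)$, and then read off the limit by the continuous mapping theorem, using that the unknown scale $\bm{w}^T\Sigma\bm{w}$ cancels between the numerator and the denominator of \eqref{eq:rs_quantity}.

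\textbf{Step 1: a functional CLT for the ROOT-SGD iterates.} The key is to show that, in the Skorokhod space $D([0,1];\mathbb{R}^p)$,
\[
\phi_t(r)\;:=\;\frac{\lfloor rt\rfloor}{\sqrt{t}}\bigl(\widehat{\bm{\theta}}_{\lfloor rt\rfloor}-\bm{\theta}^*\bigr)\;\Rightarrow\;\Sigma^{1/2}\bm{W}_r\qquad(t\to\infty),
\]
where $\bm{W}_r$ is a standard $p$-dimensional Brownian motion and $\Sigma$ is the matrix in \eqref{eq:asymptotic_root}; note $\phi_t(1)=\sqrt{t}(\widehat{\bm{\theta}}_t-\bm{\theta}^*)$, so this strengthens Lemma~\ref{lem:asymptotic_root}. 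The proof of Lemma~\ref{lem:asymptotic_root} in \citep{li2022root} proceeds through a martingale expansion; after the burn-in one has a decomposition $n(\widehat{\bm{\theta}}_n-\bm{\theta}^*)=\sum_{i=1}^n\bm{\xi}_i+\bm{r}_n$, where $\{\bm{\xi}_i\}$ is a martingale-difference array with respect to $\mathcal{F}_i=\sigma(\bm{x}_1,\dots,\bm{x}_i)$ whose conditional second moments satisfy $\tfrac{1}{t}\sum_{i=1}^{\lfloor rt\rfloor}\mathbbm{E}[\bm{\xi}_i\bm{\xi}_i^T\mid\mathcal{F}_{i-1}]\xrightarrow{p}r\,\Sigma$ and a Lindeberg-type condition, while $\bm{r}_n$ gathers the Hessian-mismatch and discretization remainders. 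A multivariate martingale functional CLT then yields $\tfrac{1}{\sqrt{t}}\sum_{i=1}^{\lfloor rt\rfloor}\bm{\xi}_i\Rightarrow\Sigma^{1/2}\bm{W}_r$, and the displayed invariance principle follows once we prove the \emph{uniform} remainder bound $\mathbbm{E}\max_{n\le t}\|\bm{r}_n\|_2^2=o(t)$. This is where the per-sample moment estimates already produced in the ROOT-SGD analysis (in particular $\mathbbm{E}\|\widehat{\bm{\theta}}_n-\bm{\theta}^*\|_2^2\lesssim 1/n$), combined with Doob's maximal inequality applied to the martingale part of $\bm{r}_n$, come in; the normal-mean example of Section~\ref{sec:1.3}, where $\bm{r}_n$ decays geometrically, illustrates why this remainder is negligible.

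\textbf{Steps 2 and 3: continuous mapping and pivotalization.} With $i=\lfloor rt\rfloor$ one has the exact identity $\tfrac{i}{\sqrt{t}}(\widehat{\bm{\theta}}_i-\widehat{\bm{\theta}}_t)=\phi_t(i/t)-\tfrac{i}{t}\phi_t(1)$, hence $V_t=\tfrac{1}{t}\sum_{i=1}^t\bigl(\phi_t(i/t)-\tfrac{i}{t}\phi_t(1)\bigr)\bigl(\phi_t(i/t)-\tfrac{i}{t}\phi_t(1)\bigr)^T$ is a Riemann-sum functional of $\phi_t$ and $\sqrt{t}(\widehat{\bm{\theta}}_t-\bm{\theta}^*)=\phi_t(1)$ is the evaluation functional; both are continuous on the relevant subset of $D([0,1];\mathbb{R}^p)$, so Step~1 and the continuous mapping theorem give the joint convergence
\[
\Bigl(\sqrt{t}\,\bm{w}^T(\widehat{\bm{\theta}}_t-\bm{\theta}^*),\ \bm{w}^T V_t\bm{w}\Bigr)\ \Rightarrow\ \Bigl(\bm{w}^T\Sigma^{1/2}\bm{W}_1,\ \textstyle\int_0^1\bigl(\bm{w}^T\Sigma^{1/2}(\bm{W}_r-r\bm{W}_1)\bigr)^2\,dr\Bigr).
\]
Writing $\sigma_{\bm{w}}^2:=\bm{w}^T\Sigma\bm{w}>0$ (positive since $\Sigma\succ0$ and $\bm{w}\neq\bm{0}$) and $\widetilde{W}_r:=\sigma_{\bm{w}}^{-1}\bm{w}^T\Sigma^{1/2}\bm{W}_r$, a standard one-dimensional Wiener process, the limiting ratio equals $\widetilde{W}_1\big/\sqrt{\int_0^1(\widetilde{W}_r-r\widetilde{W}_1)^2\,dr}$ because the factor $\sigma_{\bm{w}}$ cancels. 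Since the bridge $r\mapsto\widetilde{W}_r-r\widetilde{W}_1$ is a.s.\ not identically zero, the limiting denominator is a.s.\ strictly positive, so $\bm{w}^T V_t\bm{w}>0$ with probability tending to one and the map $(x,y)\mapsto x/\sqrt{y}$ is continuous at the limit point; one more application of the continuous mapping theorem yields \eqref{eq:robust_testing_rootsgd}.

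\textbf{Main obstacle.} The essential difficulty is Step~1: the proof of Lemma~\ref{lem:asymptotic_root} controls the remainder $\bm{r}_t$ only at a single fixed $t$, whereas the invariance principle needs $\max_{n\le t}\|\bm{r}_n\|_2=o_p(\sqrt{t})$, i.e.\ uniform control of the partial-process remainder, together with the quadratic-variation convergence holding uniformly in $r\in[0,1]$; obtaining these requires combining the recursive contraction estimates for ROOT-SGD with maximal inequalities for the relevant martingales. Once the functional CLT is in hand, Steps~2 and 3 are routine continuous-mapping arguments in the spirit of \citep{lee2021fast}.
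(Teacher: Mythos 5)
Your proposal takes essentially the same route as the paper: the paper's appendix first proves exactly the functional CLT of your Step~1 (for the partial process $r\mapsto r\sqrt{t}\,\bm{w}^T(\widehat{\bm{\theta}}_{[rt]}-\bm{\theta}^*)$), obtained from the martingale decomposition of the ROOT-SGD error in \citep{li2022root} together with its auxiliary processes and the Hall--Heyde martingale invariance principle, and then concludes via the continuous mapping theorem with the unknown scale $\bm{w}^T\Sigma\bm{w}$ cancelling, just as in your Steps~2--3. The uniform negligibility of the remainder that you flag as the main obstacle is handled there by invoking the bounds on the auxiliary processes $N_i,\Upsilon_i$ from \citep{li2022root} (their bounds (81)--(82)) rather than by a separate Doob-maximal-inequality argument, but the overall architecture coincides with yours.
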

The proof of Theorem \ref{thm:3} is in Appendix \ref{app:C}. 
\begin{remark}
We add a note on the computation of the random-scaling quantity in \eqref{eq:robust_testing_rootsgd}. 
The random-scaling estimator $V_t= \frac{1}{t^2}\sum_{i=1}^t i^2 (\widehat{\bm{\theta}}_i - \widehat{\bm{\theta}}_t)(\widehat{\bm{\theta}}_i - \widehat{\bm{\theta}}_t)^T$ can be efficiently computed online. To see this, consider the following decomposition.
\begin{align*}
    V_t = \frac{1}{t^2}\sum_{i=1}^t i^2 \widehat{\bm{\theta}}_i \widehat{\bm{\theta}}_i^T -  \frac{1}{t^2}(\sum_{i=1}^t i^2 \widehat{\bm{\theta}}_i) \widehat{\bm{\theta}}_t^T - \frac{1}{t^2} \widehat{\bm{\theta}}_t (\sum_{i=1}^t i^2 \widehat{\bm{\theta}}_i^T) + \frac{t (t+1) (2t + 1)}{6t^2} \widehat{\bm{\theta}}_t \widehat{\bm{\theta}}_t^T,
\end{align*}
in which $\frac{1}{t^2}\sum_{i=1}^t i^2 \widehat{\bm{\theta}}_i \widehat{\bm{\theta}}_i^T$ and $\frac{1}{t^2}\sum_{i=1}^t i^2 \widehat{\bm{\theta}}_i$ can be efficiently computed online without requiring storing all past estimators $\widehat{\bm{\theta}_i}$'s. 
In this way, the random-scaling quantity $\frac{\sqrt{t}\bm{w}^T (\widehat{\bm{\theta}}_t - \bm{\theta}^*)}{\sqrt{\bm{w}^T V_t \bm{w}}}$ can be computed online. 
\end{remark}

\begin{remark}
We explain how to derive a consistent covariance estimator from Theorem \ref{thm:3} and how to use it for statistical inference. 
Denote the asymptotic limit $\frac{W_1}{\sqrt{\int_{0}^1 (W_r - rW_1)^2 dr}}$ as $X_{rs}$. 
Then $X_{rs}$ is a universal random variable that does not depend on the optimization problem we study. 
We refer to $X_{rs}$ as the random-scaling variable.
Using this notation, $\Var\left( X_{rs}\right)*\bm{w}^T V_t \bm{w}$ is an estimator for the variance of $\sqrt{t}\bm{w}^T (\widehat{\bm{\theta}}_t - \bm{\theta}^*)$, and $\widehat{\Sigma}_{t, rs}:= (\Var\left( X_{rs}\right))* V_t$ is a random-scaling covariance estimator for the covariance of $\sqrt{t}(\widehat{\bm{\theta}}_t - \bm{\theta}^*)$. 
By Theorem \ref{thm:3}, $\widehat{\Sigma}_{t, rs}$ is asymptotically consistent. 
When calculating the random-scaling covariance estimator $\widehat{\Sigma}_{t, rs}$, we can use the estimation in \cite{abadir1997two} that $\Var(X_{rs}) \approx 11.177513184$.
Moreover, \citep{abadir1997two,abadir2002simple} compute the quantiles for the random variable $X_{rs} = \left(W_1/\sqrt{\int_{0}^1 (W_r - rW_1)^2 dr}\right)$ based on direct integration. %, which is pretty accurate. 
Using their computed quantiles, one can develop asymptotically consistent statistical inference for the parameter estimation, for example, do statistical testing for $\bm{\theta}^*$ or build a confidence interval. 
\end{remark}

We compare our random-scaling estimator with that from the ASGD algorithm. 
The comparison details are in Appendix \ref{app:hessian_free_compare}. 
The computation costs of the two estimators are the same. 
Both estimators are asymptotically consistent, but the convergence speed is not proven. 
Since the ROOT-SGD converges faster than ASGD, there might be an advantage to use the random-scaling estimator from the ROOT-SGD in practice. 

\section{Numeric Studies}\label{sec:04}
In this section, we perform numerical studies using both synthetic data and a real dataset. 
In particular, in Section \ref{sec:04.01}, we show the performance of the confidence intervals constructed based on our covariance estimators for linear regression and logistic regression; in Section \ref{sec:04.02}, we visualize the confidence intervals based on our covariance estimator using the hand-written digit image example. 

\subsection{Simulation}\label{sec:04.01}
In this section, we simulate some examples under linear regression and logistic regression and compare the confidence intervals in these examples based on the plug-in and Hessian-free covariance estimators. 
The goal is to check if the confidence intervals have the coverage probability converging to the nominal value as the sample size increases. 

\textbf{Models.} 
In linear regression model, we have data $\bm{x}^T = (\bm{a}^T,b)\in R^d \times R$, where $\bm{a}$ is the vector of explanatory variable and $b$ is the response variable. 
The data is generated by the true parameter $\bm{\theta}^*$ as $b = \bm{a}^T \bm{\theta}^* + \epsilon$, where $\epsilon$ is a zero-mean r.v. that is independent of $\bm{a}$. 
We use the squared loss for the linear regression task
\[f(\bm{\theta};\bm{x}) = \frac{1}{2}(\bm{a}^T \bm{\theta} - b )^2.\]

In logistic regression model, we have data $\bm{x}^T = (\bm{a}^T,b) \in  R^d\times\{-1,1\}$. 
The data is generated by the true parameter $\bm{\theta}^*$ as $P(b = 1) = \frac{1}{1 + \exp(-\langle\bm{a},\bm{\theta}^*\rangle)}$. 
Take the negative log-likelihood as the risk function
$$f(\bm{\theta};\bm{x}) = \log(1 + \exp(- b \langle\bm{a},\bm{\theta}\rangle)).$$

\textbf{Data Generation.}
The linear regression and logistic regression data are generated as follows. 
We first set a data dimension $d$, and let the explanatory variables $\bm{a}_i$'s to be i.i.d. $N(\bm{0}_d, I_d)$. 
For both models, we set the true parameter $\bm{\theta}^*$ to be $d$ equally spaced values in $[0,1]$.
We generate the response variable $b_i$'s based on $\bm{a}_i$'s for linear regression and logistic regression cases, respectively. 
In particular, in linear regression, we let the additive noises $\epsilon_i$'s be i.i.d. standard normal random variables.  

\textbf{Algorithm Implementation.} 
We compare the confidence intervals built based on our covariance estimators for ROOT-SGD with those based on the plug-in covariance estimator \citep{chen2020statistical}, the non-overlapping batch-mean covariance estimator \citep{zhu2021online}, and the random-scaling estimator \citep{lee2021fast} for ASGD. 
So we implement the ROOT-SGD and ASGD algorithms; the implementation details are as follows. 
Both algorithms are initialized as $\widehat{\bm{\theta}}_0 = \bm{0}$ in linear regression and logistic regression. 
The total number of samples (i.e., the algorithm updates) is set to $250,000$. 
For ROOT-SGD, we take $\eta = 10^{-3}$ for linear regression and $\eta = 5*10^{-3}$ for logistic regression. 
The burn-in period is set to $B = 1000$ for both cases.  
For ASGD, we take $\eta_0 = 0.5$ and $\alpha = 0.505$, which are the same as the choices in \citep{zhu2021online}. 

\textbf{Confidence Interval Computation.} 
We compare confidence intervals induced by different covariance estimators. 
In particular, we compute the $95\%$ confidence intervals for each dimension of the parameter estimation and check how they cover the true parameters.
For ROOT-SGD algorithm, denote the plug-in covariance estimator as $\widehat{\Sigma}_{t,root-pi}$ and rewrite the random-scaling estimator $V_t$ in \eqref{eq:robust_testing_rootsgd} as $V_{t,root}$ to distinguish it with the random-scaling estimator from ASGD. 
For ASGD algorithm, denote the plug-in covariance estimator from \citep{chen2020statistical} as $\widehat{\Sigma}_{t,asgd-pi}$, denote the non-overlapping batch-mean covariance estimator from \citep{zhu2021online} as $\widehat{\Sigma}_{t,asgd-bm}$, and denote the random-scaling estimator in \citep{lee2021fast} (i.e., $\bar{V}_t$ in \eqref{eq:robust_testing}) as $V_{t,asgd}$.
Let $\widehat{\bm{\theta}}_{t,root}$ and $\widehat{\bm{\theta}}_{t,asgd}$ be the point estimators from ROOT-SGD and ASGD after $t$ updates, respectively. 
The $95\%$ confidence intervals based on each estimator are as follows:

\begin{itemize}[leftmargin=*,noitemsep]
    \item ROOT-SGD algorithm, plug-in estimator: $(\widehat{\bm{\theta}}_{t,root})_{i} \pm z_{0.975}*(\widehat{\Sigma}_{t,root-pi})_{i,i}/\sqrt{t};$
    \item ROOT-SGD algorithm, random-scaling estimator:
    $(\widehat{\bm{\theta}}_{t,root})_{i} \pm q_{rs,0.975}*(V_{t,root})_{i,i}/\sqrt{t};$
    \item ASGD algorithm, plug-in estimator:
    $(\widehat{\bm{\theta}}_{t,asgd})_{i} \pm z_{0.975}*(\widehat{\Sigma}_{t,asgd-pi})_{i,i}/\sqrt{t};$
    \item ASGD algorithm, batch-mean estimator:
    $(\widehat{\bm{\theta}}_{t,asgd})_{i} \pm z_{0.975}*(\widehat{\Sigma}_{t,asgd-bm})_{i,i}/\sqrt{t};$
    \item ASGD algorithm, random-scaling estimator:
        $(\widehat{\bm{\theta}}_{t,asgd})_{i} \pm q_{rs,0.975}*(V_{t,asgd})_{i,i}/\sqrt{t};$
\end{itemize}
where $z_{0.975}$ is the $97.5\%$ percentile for the standard normal random variable, and $q_{rs,0.975}$ is the $97.5\%$ percentile for $X_{rs} = \left(W_1/\sqrt{\int_{0}^1 (W_r - rW_1)^2 dr}\right)$, i.e., the limiting distribution of the random-scaling quantity in Theorem \ref{thm:3}. 
By \citep{abadir1997two,abadir2002simple}, we have $q_{rs,0.975} = 6.747 $. 

\begin{figure}
%\vskip -0.2in
\begin{center}
     \begin{subfigure}[b]{0.49\textwidth}
         \centering
         \includegraphics[width=\textwidth]{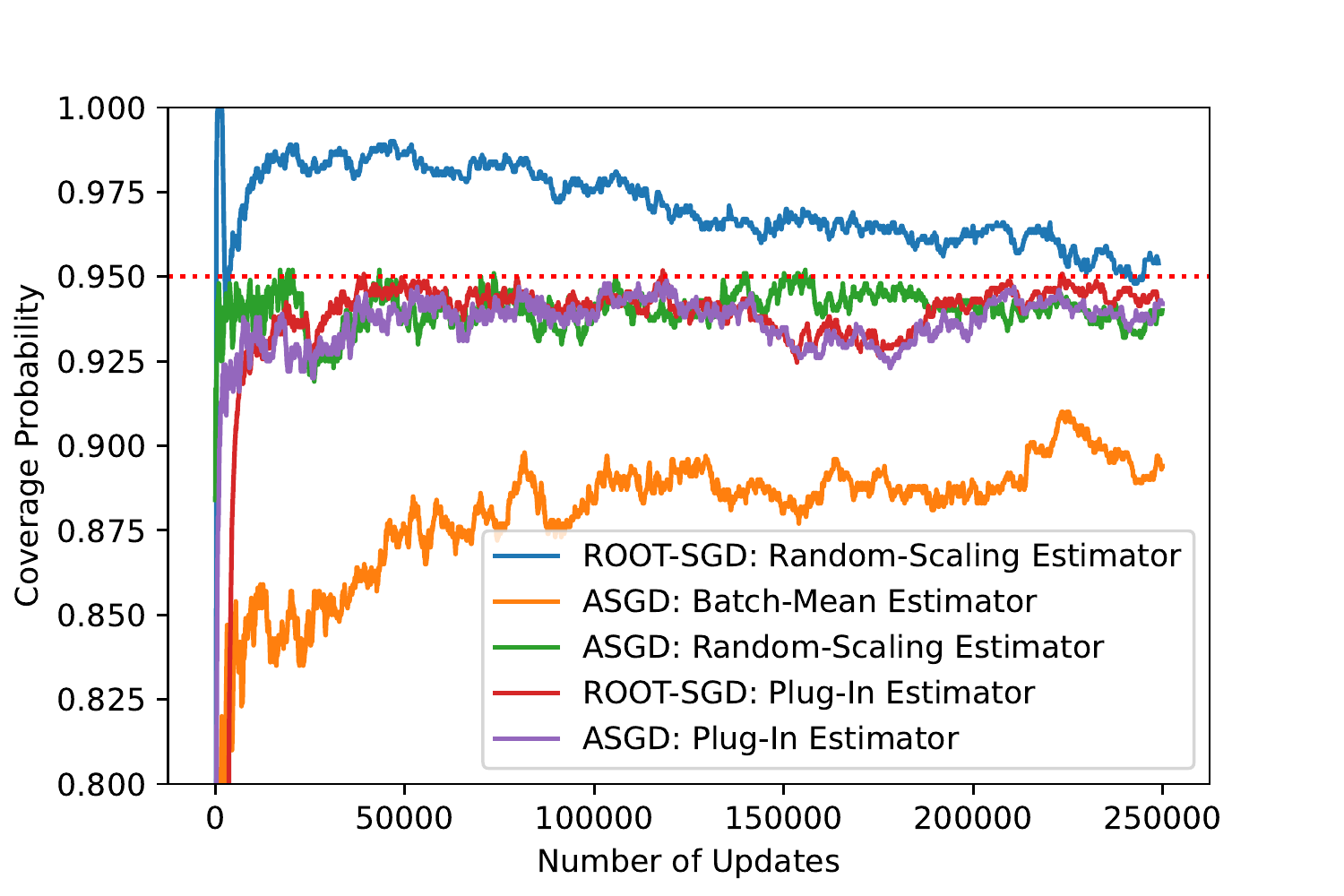}
     \end{subfigure}
     \hfill
    \begin{subfigure}[b]{0.49\textwidth}
         \centering
         \includegraphics[width=\textwidth]{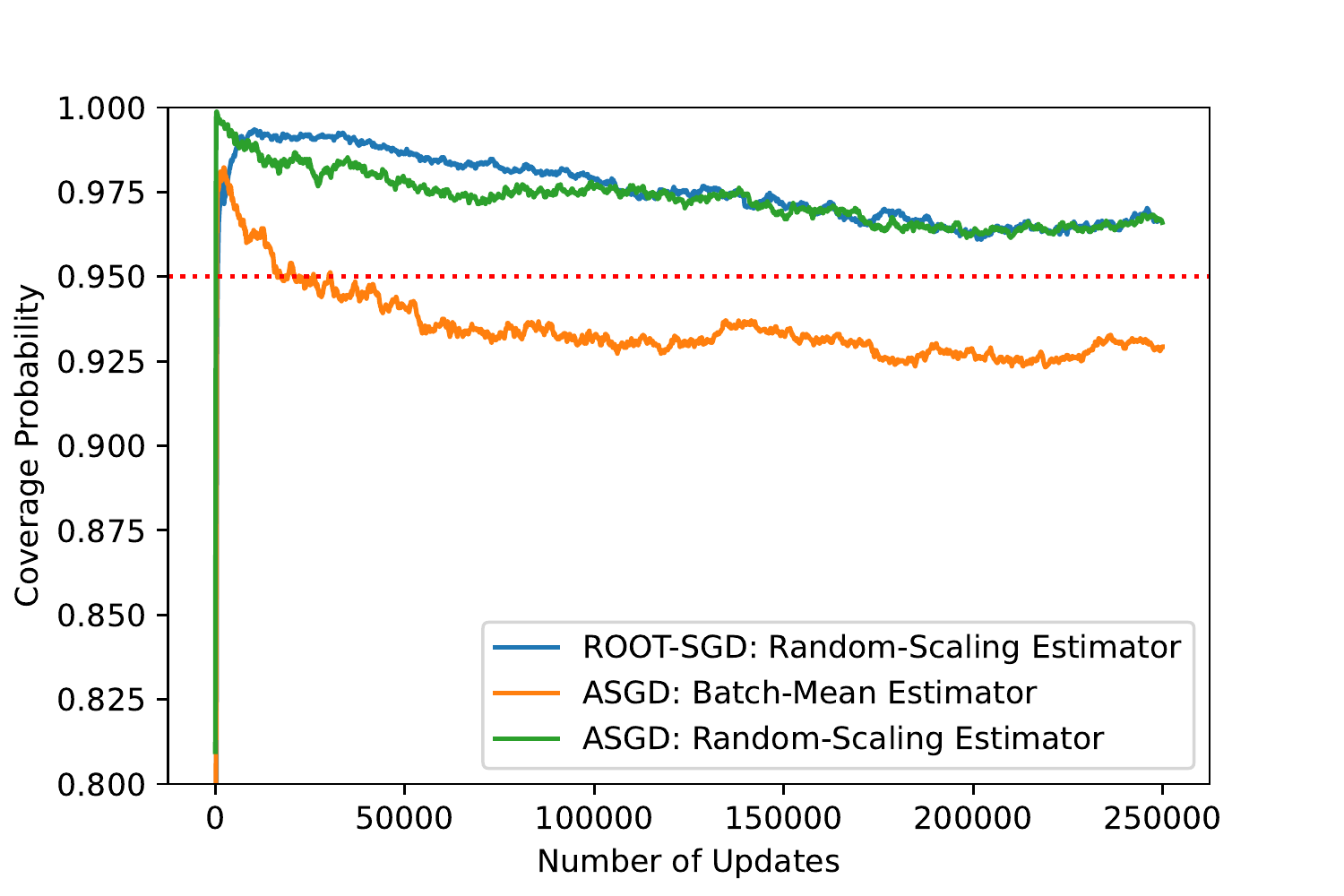}
     \end{subfigure}
          \hfill
    \begin{subfigure}[b]{0.49\textwidth}
         \centering
         \includegraphics[width=\textwidth]{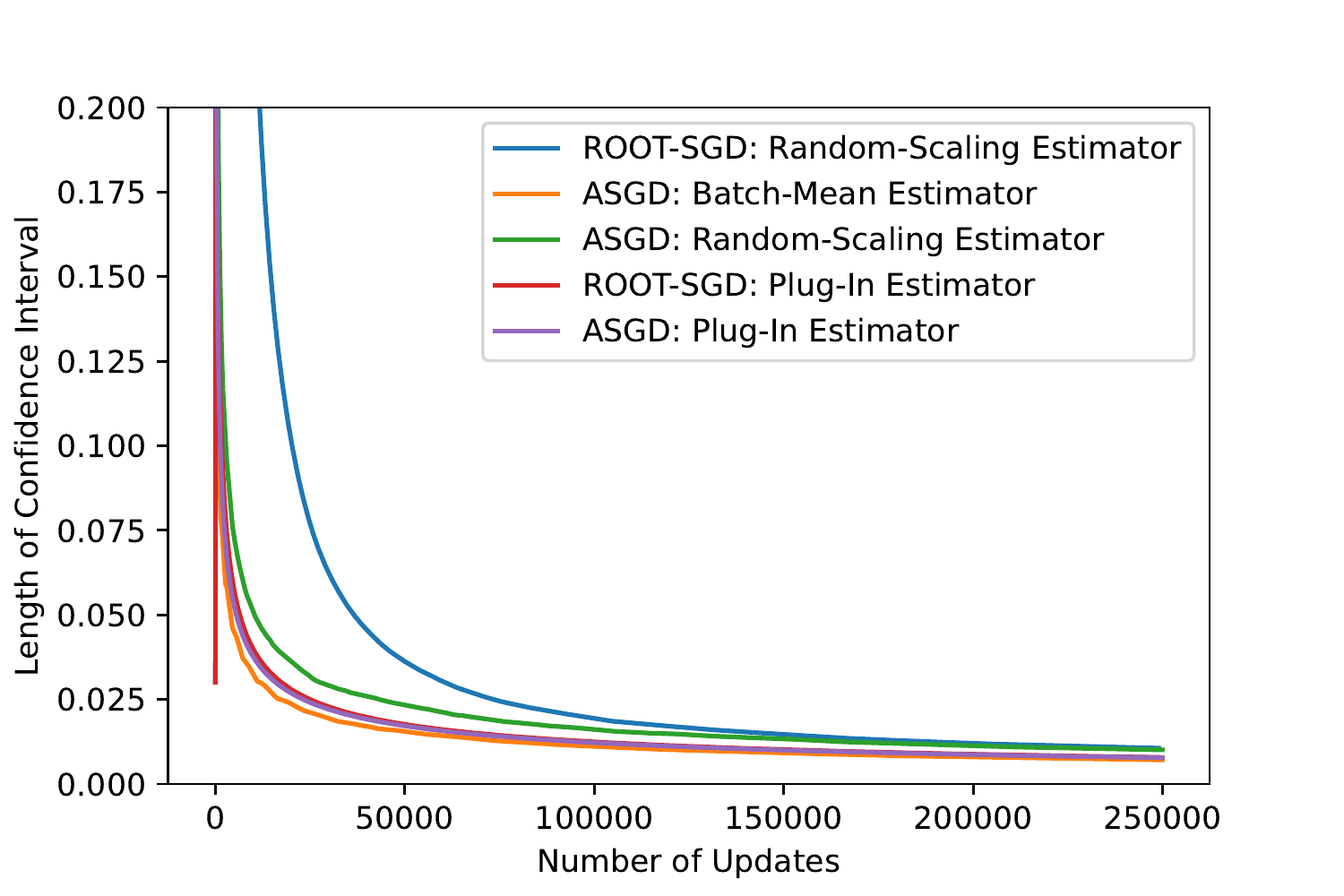}
     \end{subfigure}
          \hfill
    \begin{subfigure}[b]{0.49\textwidth}
         \centering
         \includegraphics[width=\textwidth]{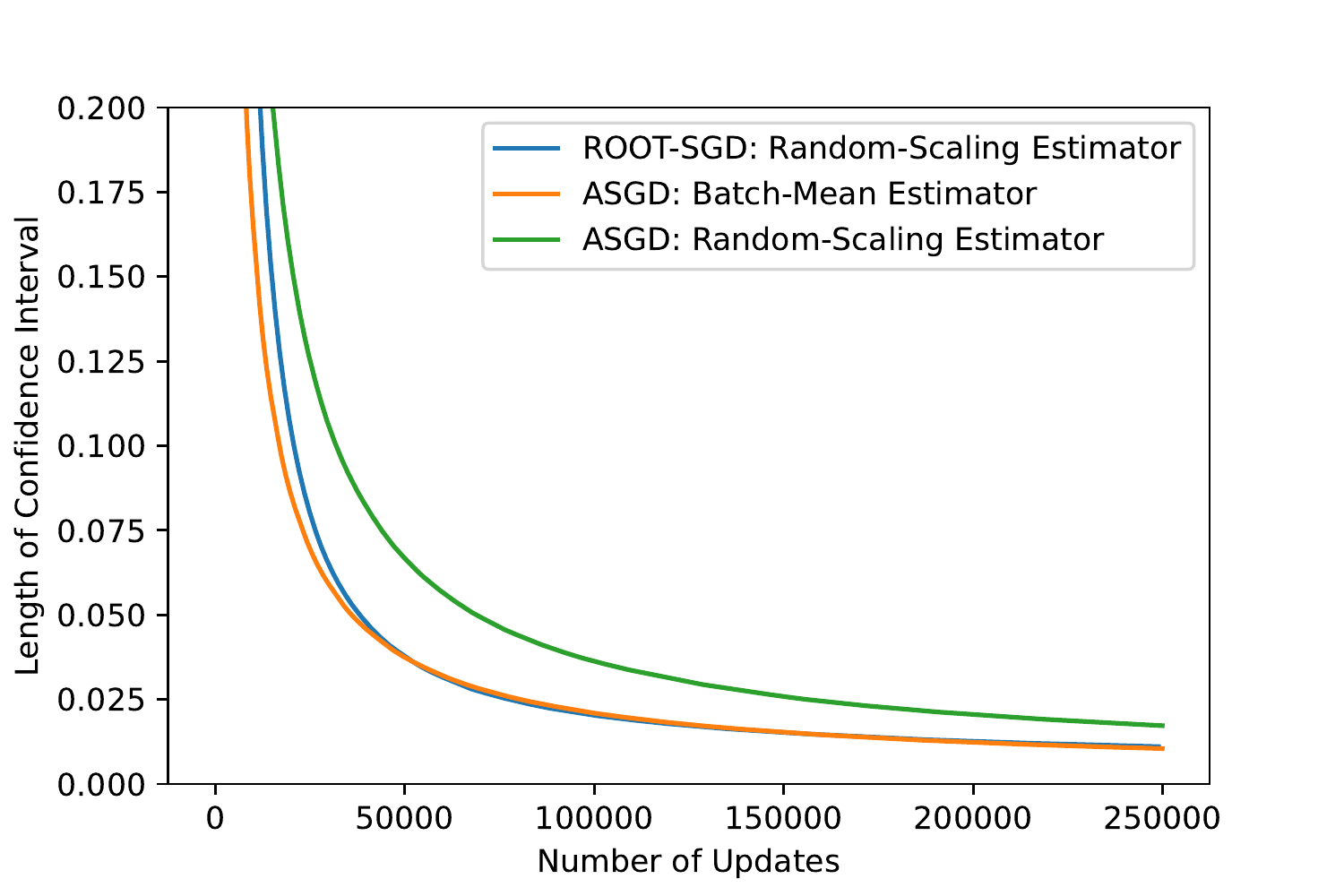}
     \end{subfigure}
               \hfill
    \begin{subfigure}[b]{0.49\textwidth}
         \centering
         \includegraphics[width=\textwidth]{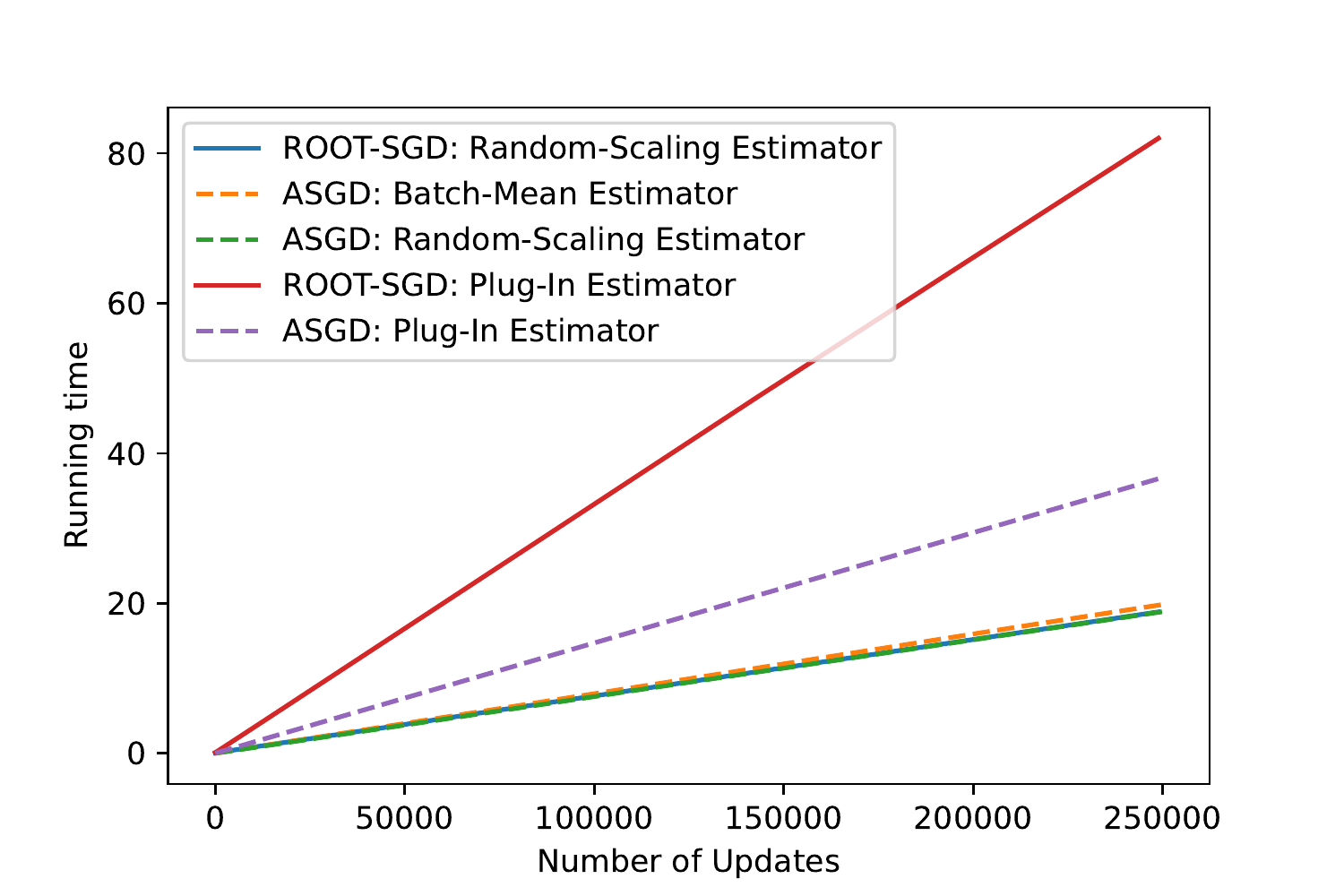}
         \caption{d = 5}
     \end{subfigure}
               \hfill
    \begin{subfigure}[b]{0.49\textwidth}
         \centering
         \includegraphics[width=\textwidth]{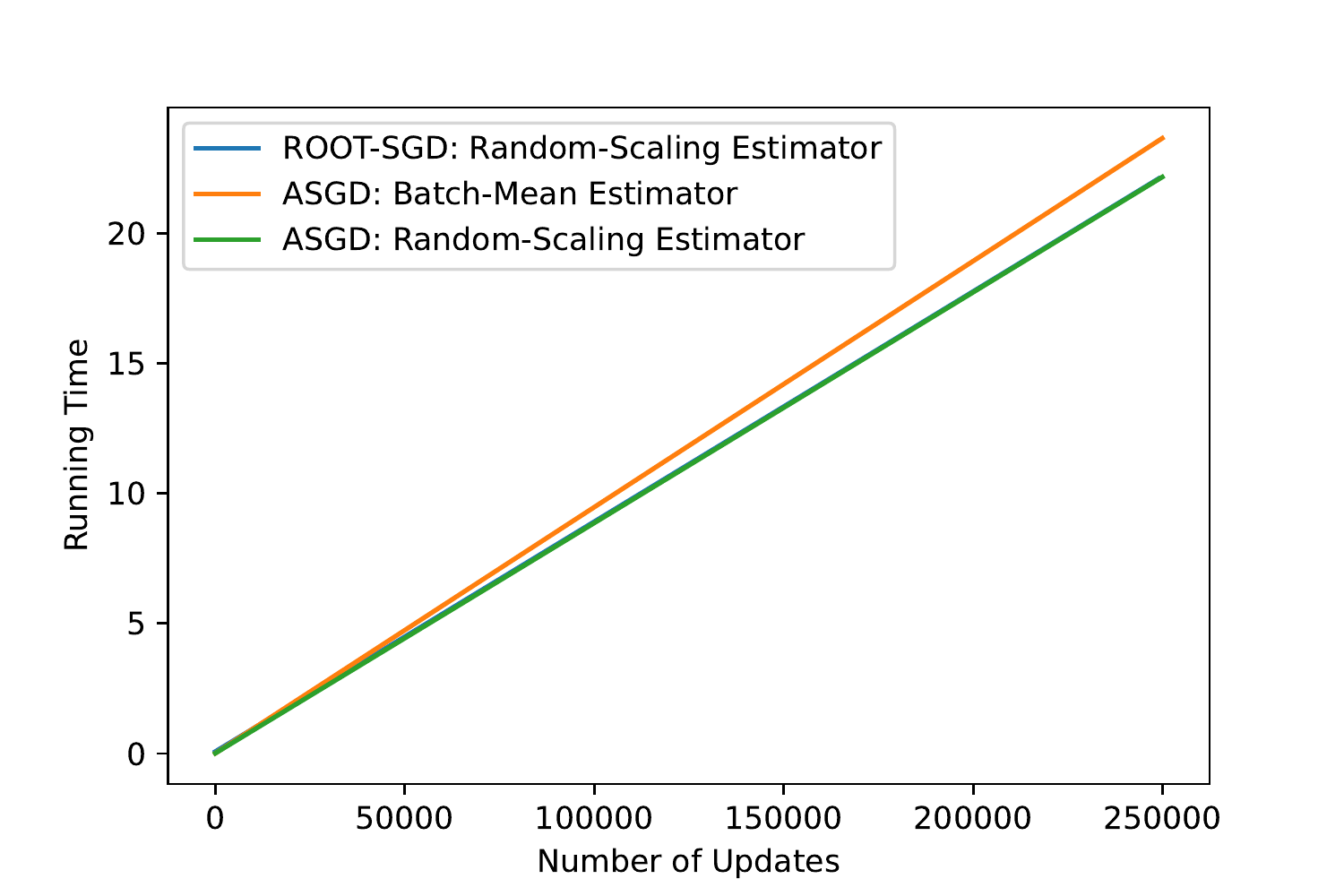}
         \caption{d = 20}
     \end{subfigure}
\caption{Comparison of the confidence intervals in linear regression. The first row is the coverage probability against the number of updates, with a red dashed line denoting the nominal coverage rate of $0.95$. The second row is the average length of the confidence interval. The last row is the total computation time to update the covariance estimator and the confidence interval.}
\label{fig:lr_ci}
\end{center}    
%\vskip -18pt    
\end{figure}

\textbf{Experiment Results.} 
We run $200$ repetitions of experiments for each estimator and compare the average coverage probability and length of the confidence interval from $200$ runs. 
The results of the linear regression experiment are shown in Figure \ref{fig:lr_ci}, where we have the two dimension settings: $d = 5$ and $d = 20$. 
Comparing the confidence intervals, we have the following.
\begin{itemize}
    \item For $d = 5$, all confidence intervals (CI) have coverage probability converging to the nominal value of $0.95$, except the one corresponding to the ASGD batch-mean estimator. 
    The coverage probability of the CI from the ROOT-SGD random-scaling estimator is higher than the counterpart from ASGD and is closer to the nominal coverage rate. 
    For the confidence intervals from plug-in estimators for ROOT-SGD and ASGD, we observe that they have the same confidence interval length. 
    However, the coverage probability of CI from ROOT-SGD is higher than that of ASGD and is closer to the nominal coverage rate. 
    Thus the CIs from ROOT-SGD are better than those from ASGD. 
    These observations show that the statistical inference of ROOT-SGD is more accurate than that of ASGD, possibly thanks to the fast convergence of ROOT-SGD. 
   
   \item For $d = 20$, due to the large computation of the plug-in estimators for both ROOT-SGD and ASGD, we do not compute them. 
   We compare the coverage of CIs from different Hessian-free estimators and conclude that the coverage probability of the CI from the ASGD batch mean estimator does not reach the nominal rate.
   The coverage probabilities of the CIs from both ROOT-SGD and ASGD random-scaling estimators are higher than the nominal rate. 
   However, the CI length of the ROOT-SGD random-scaling estimator is smaller than that of ASGD, which shows the advantage of the ROOT-SGD algorithm in building the confidence interval.
   
   \item In the running time comparison, we can see that the plug-in estimators take more time than the Hessian-free estimators (including the batch mean estimator and the random-scaling estimator), which is not surprising due to the second-order nature of the plug-in estimators. 
   The random-scaling estimators from ROOT-SGD and SGD take almost the same time, as they have lines on top of each other.
   The batch-mean estimator takes slightly more time than the random-scaling estimator.
   
   %\item Based on the above comparisons, the CI derived from the random-scaling covariance estimator using the ROOT-SGD algorithm is most preferred in terms of coverage probability, CI length, and computation time. 
\end{itemize}

For the linear regression example, we further check the empirical distribution of the random-scaling quantity of the ROOT-SGD algorithm (i.e., the l.h.s. of \eqref{eq:robust_testing_rootsgd}) to see if it matches the theoretic limiting distribution $X_{rs}$ (i.e., the r.h.s. of \eqref{eq:robust_testing_rootsgd}). 
In particular, we let $\bm{w}$ be the standard basis vectors in the random-scaling quantity. 
The result is shown in Figure \ref{fig:rs_dist}.
For both $d = 5$ and $d = 20$, we can see that the empirical distribution is close to the theoretical limiting distribution.  

\begin{figure}
%\vskip -0.2in
\begin{center}
     \begin{subfigure}[b]{0.49\textwidth}
         \centering
         \includegraphics[width=\textwidth]{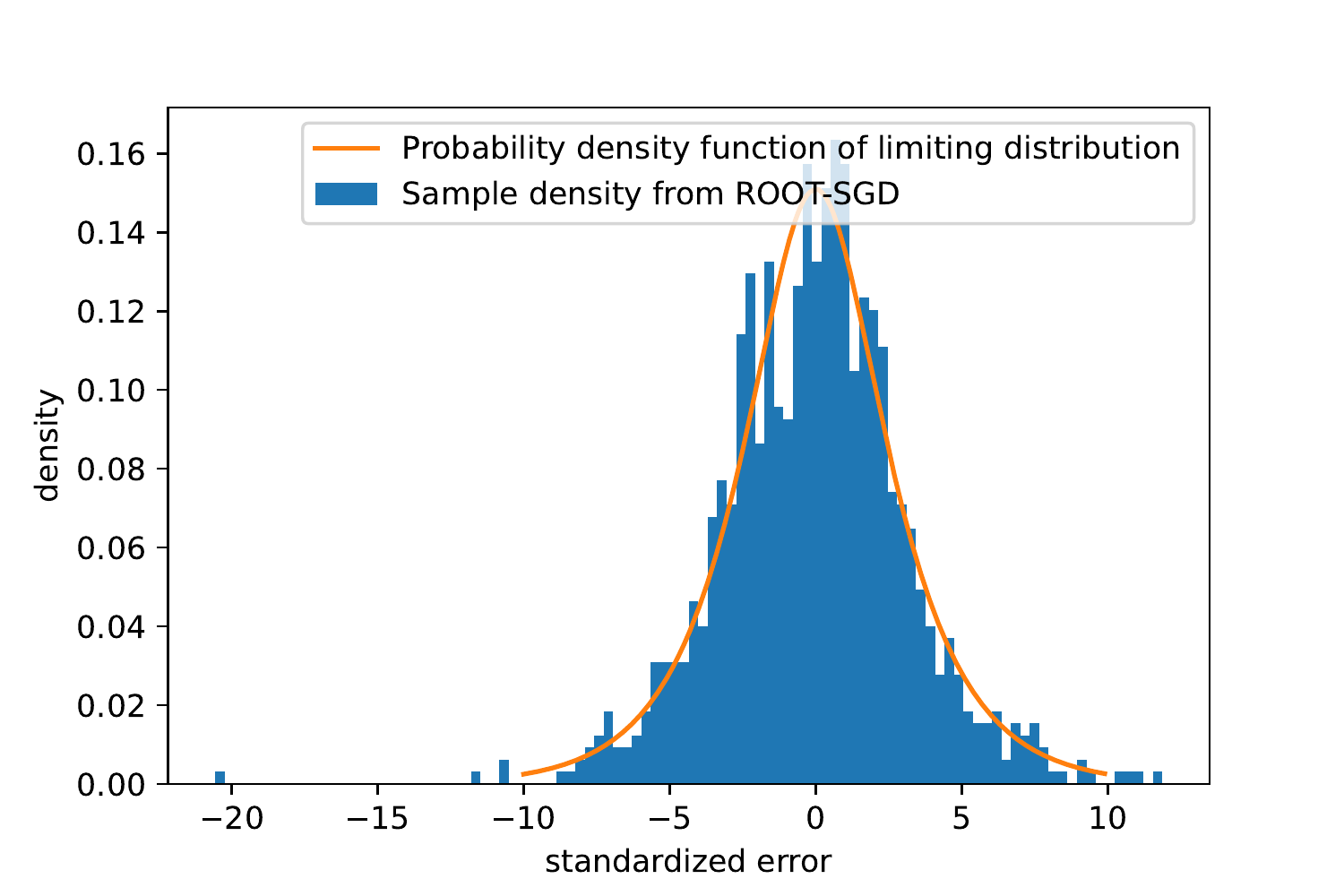}
         \caption{d = 5}
     \end{subfigure}
     \hfill
          \begin{subfigure}[b]{0.49\textwidth}
         \centering
         \includegraphics[width=\textwidth]{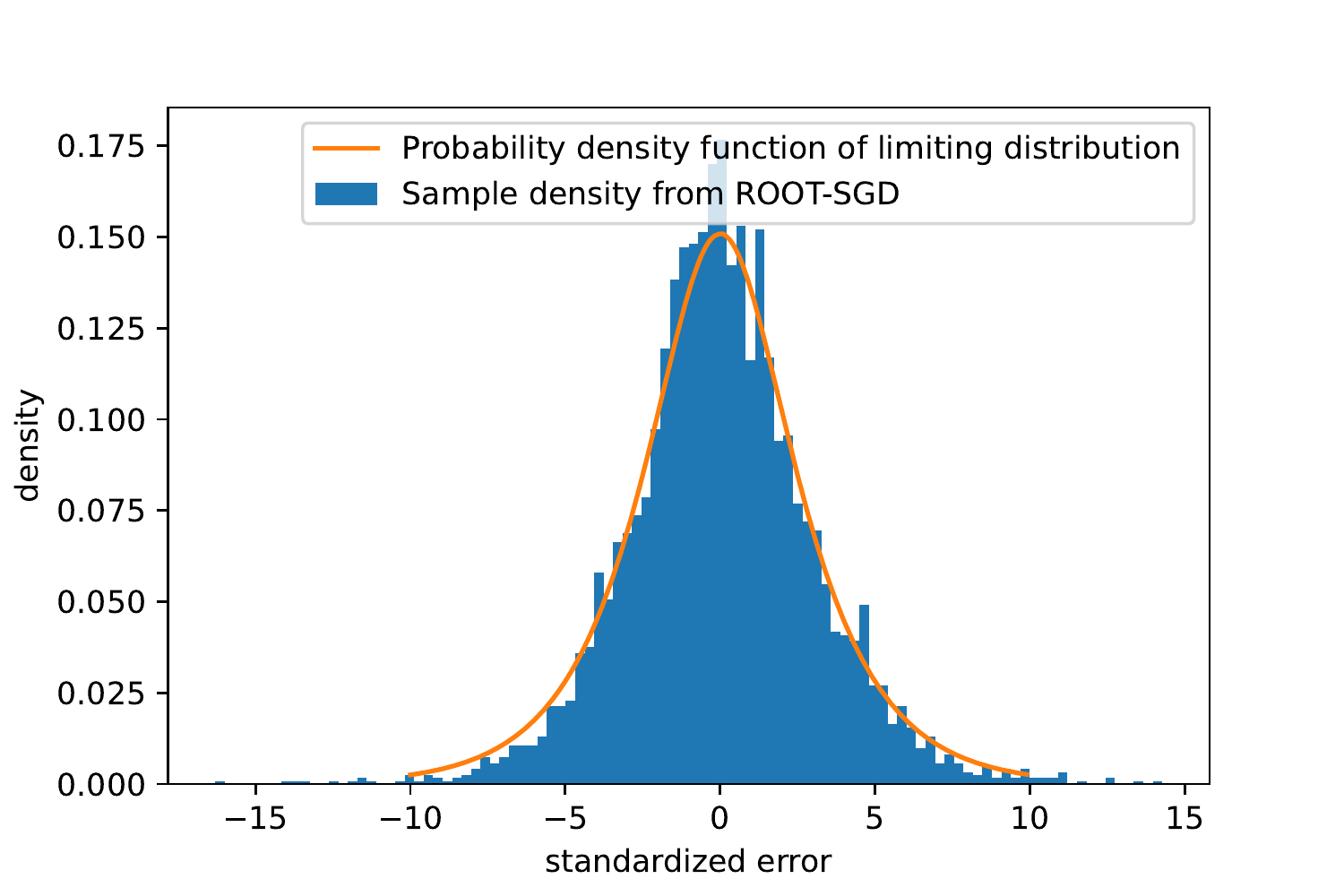}
         \caption{d = 20}
     \end{subfigure}
\caption{Density of the Random Scaling Quantity.}
\label{fig:rs_dist}
\end{center}
\end{figure}

The experiment result for logistic regression are in Figure \ref{fig:logistic_ci}. 
For $d = 5$, all confidence intervals (CI) have the coverage probability converge to the nominal value of $0.95$, except the SGD batch mean estimator -- it has a much lower coverage probability than the nominal probability. 
The plug-in estimator and the random-scaling estimator from ROOT-SGD have better coverage than the SGD counterparts while having the same length of the confidence interval. 
For $d = 20$, the ROOT-SGD random-scaling estimator has a coverage probability that converges faster to the nominal rate compared to the estimators from SGD. 
Furthermore, the ROOT-SGD random-scaling estimator has a higher coverage probability than the SGD random-scaling estimator, although their CI lengths are comparable after $200,000$ updates. 
The observations above all show the advantage of statistical inference by ROOT-SGD. %, thanks to its fast convergence. 

\begin{figure}[ht]
%\vskip -0.2in
\begin{center}
     \begin{subfigure}[b]{0.49\textwidth}
         \centering
         \includegraphics[width=\textwidth]{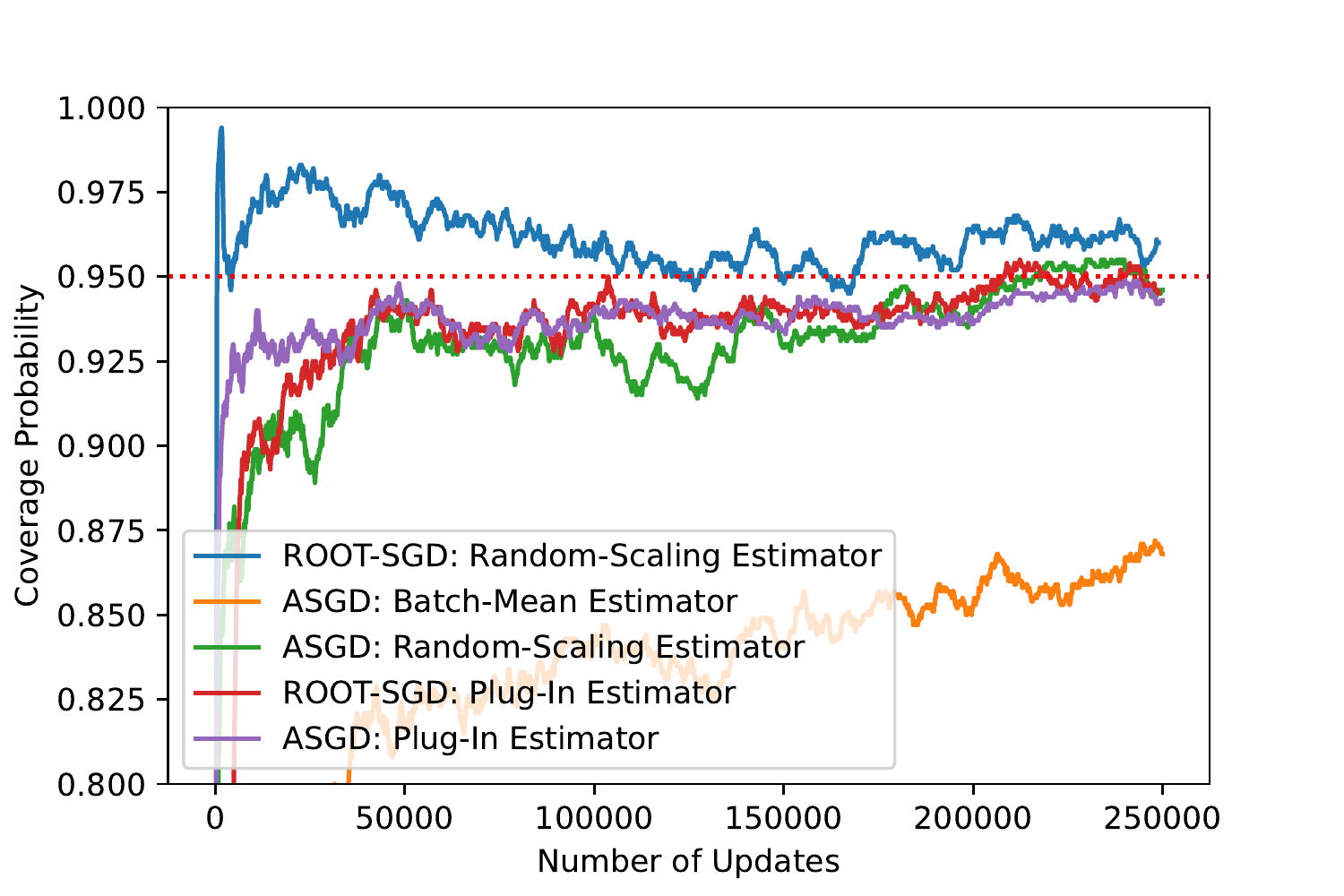}
     \end{subfigure}
     \hfill
    \begin{subfigure}[b]{0.49\textwidth}
         \centering
         \includegraphics[width=\textwidth]{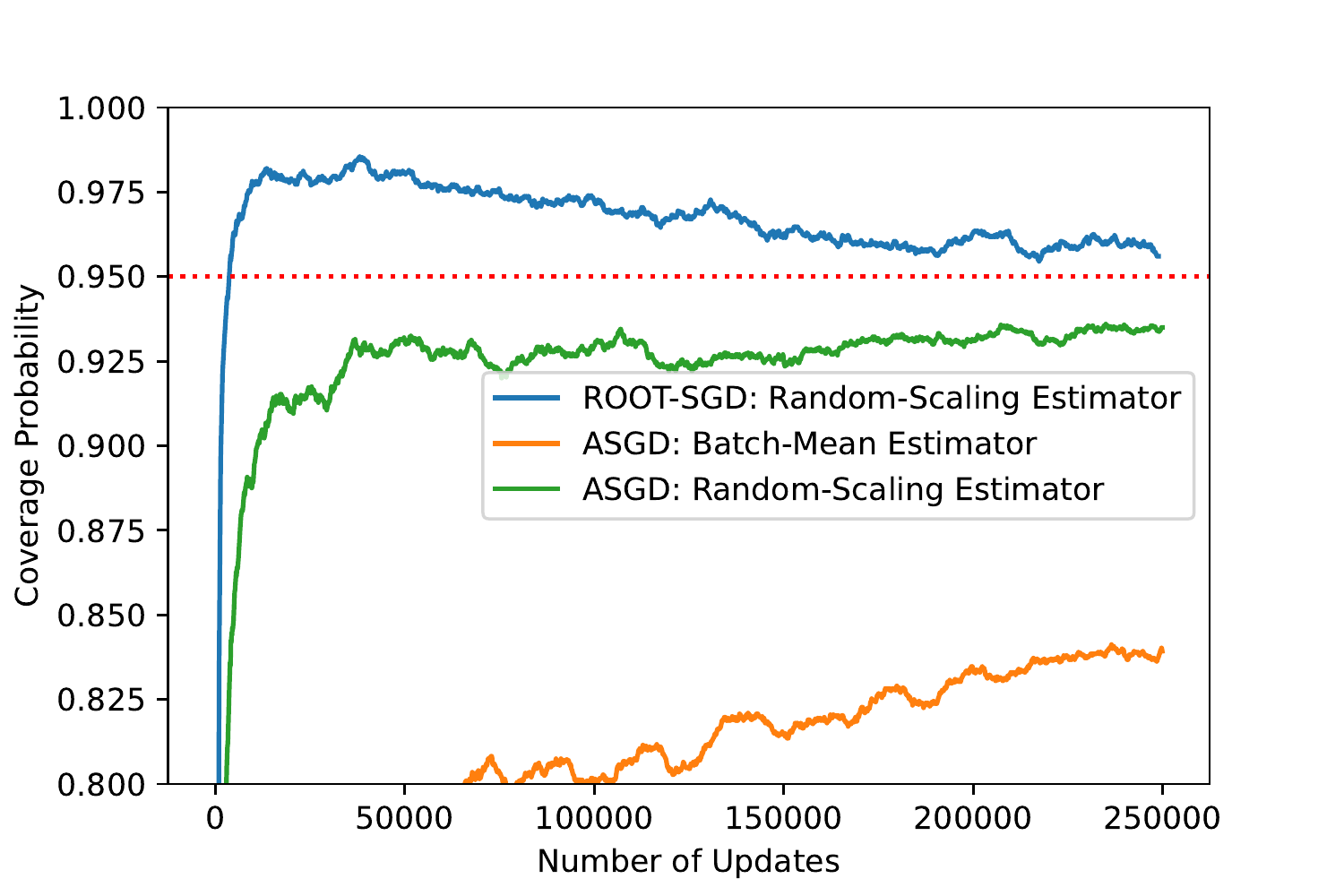}
     \end{subfigure}
          \hfill
    \begin{subfigure}[b]{0.49\textwidth}
         \centering
         \includegraphics[width=\textwidth]{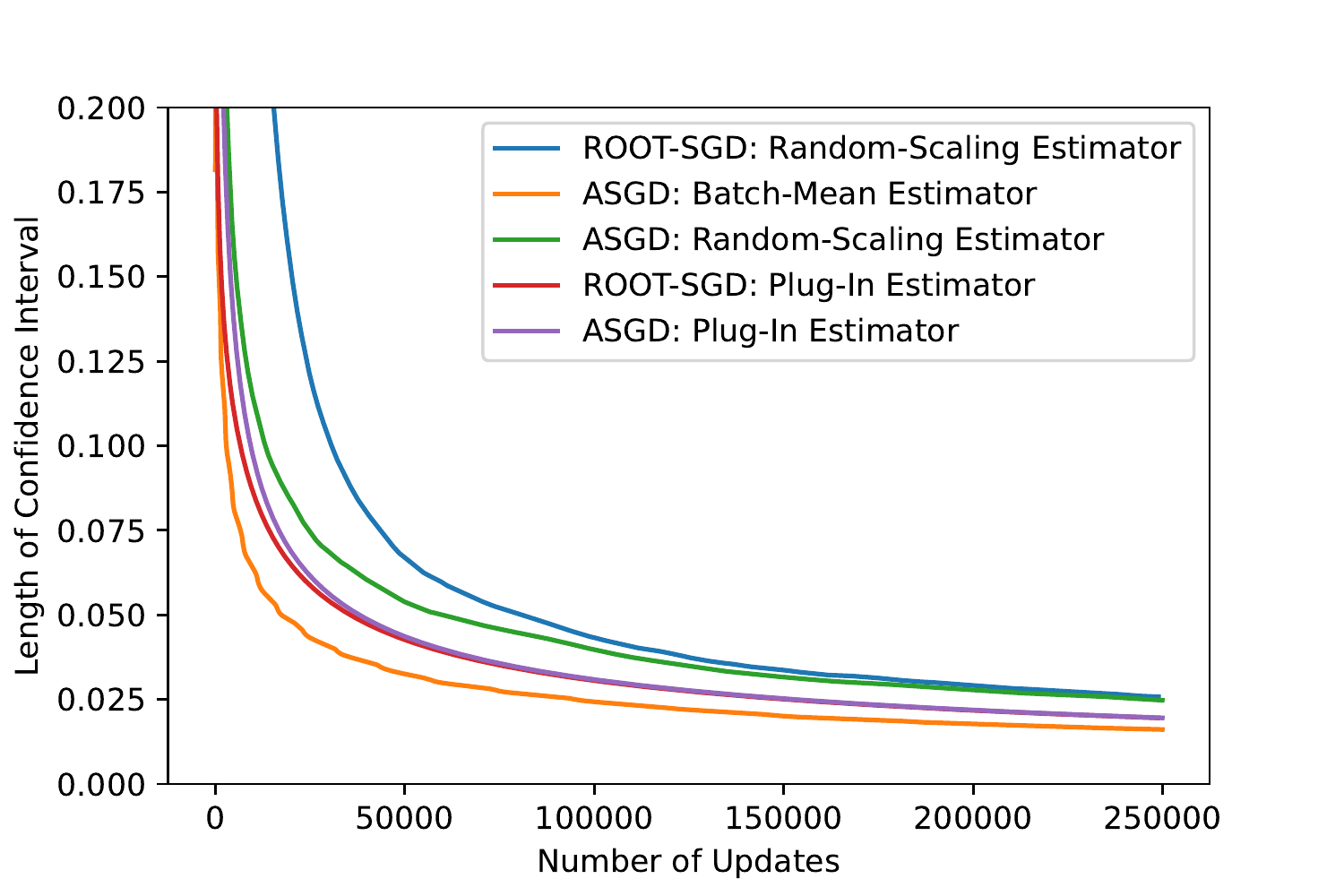}
         \caption{d = 5}
     \end{subfigure}
          \hfill
    \begin{subfigure}[b]{0.49\textwidth}
         \centering
         \includegraphics[width=\textwidth]{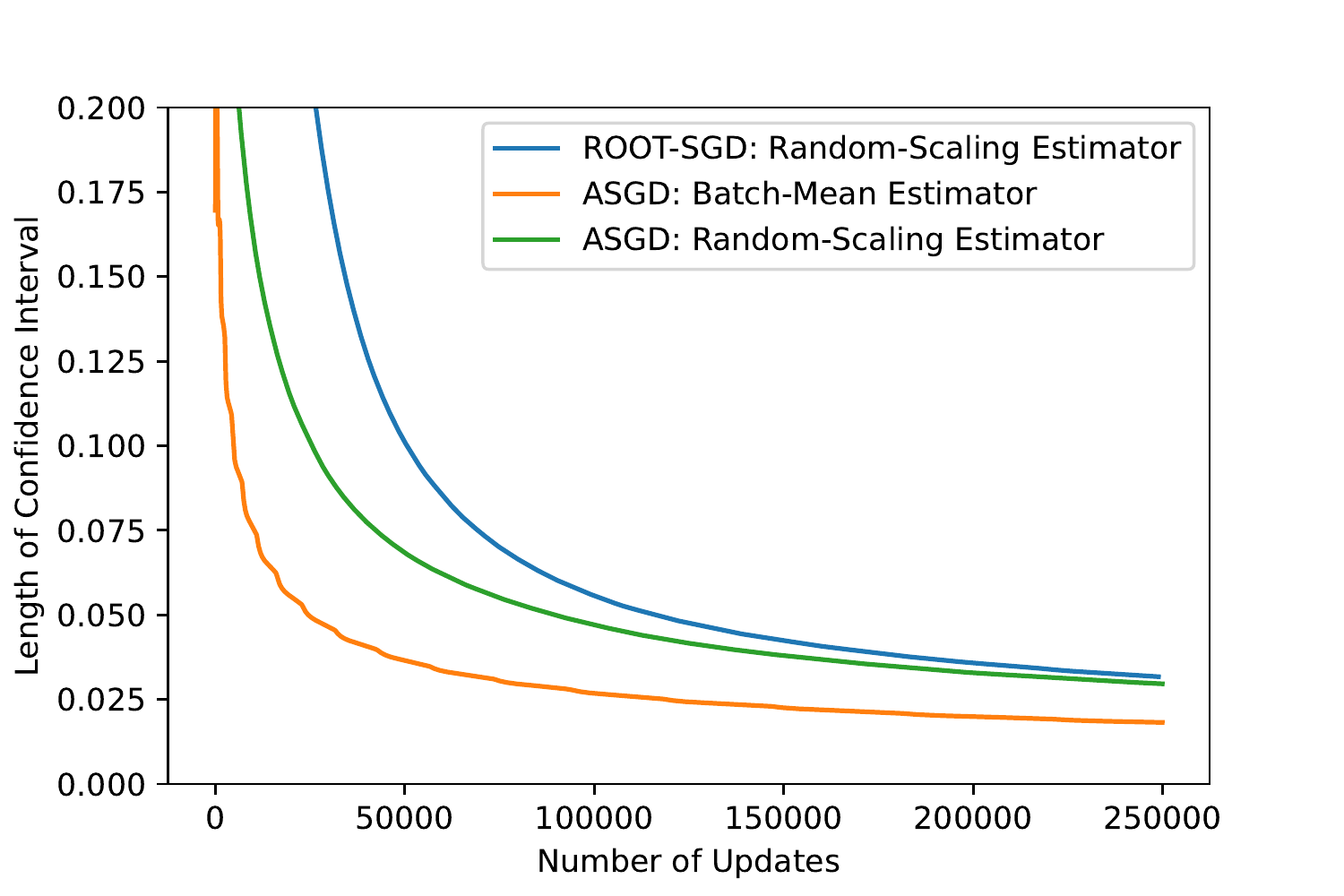}
         \caption{d = 20}
     \end{subfigure}
\caption{Comparison of the confidence intervals estimators in logistics regression. The first row is the coverage probability against the number of updates, and the red dashed line denotes the nominal coverage rate of $0.95$. The second row is the average length of the confidence interval.}
\label{fig:logistic_ci}
\end{center}    
%\vskip -18pt    
\end{figure}

%We further compare our result with those in \citep{chen2020statistical}. For $T = 10^5$ data samples, their estimated $95\%$ confidence intervals have the coverage probability and length as in Table \ref{tab:compare_sgd} 
%\begin{table}[H]
%    \centering
%    \begin{tabular}{|l|l|l|}
%    \hline
%   A&&  \\\hline
%    \multirow{2}{*}{Identity}& Cov Prob & 0.9499 (0.0094)\\
%     & Length & 0.0144 (0.0001)\\\hline
%    \multirow{2}{*}{Toeplitz}& Cov Prob & 0.9484 (0.0097)\\
%     & Length & 0.0181 (0.0005)\\\hline
%    \multirow{2}{*}{Equi Corr}& Cov Prob & 0.9510 (0.0099)\\
%     & Length & 0.0159 (0.0001)\\\hline
%    \end{tabular}
%    \caption{Performance of Statistical Inference Based on SGD}
%    \label{tab:compare_sgd}
%\end{table}

\subsection{Hand-Written Digit Analysis}\label{sec:04.02}
This experiment aims to visualize the confidence interval estimator that is induced by our plug-in covariance estimator. 
In particular, we consider the parameter estimation problem of mean estimation for the MNIST hand-written digit image set. 

The data-set description is as follows. 
There are $60,000$ training images of dimension $28*28$, each labeled as a digit between $0$ and $9$. 
For each label, we assume that it has a mean image and that the image instances are samples from a normal mean model. 

We apply the ROOT-SGD algorithm to estimate the mean image and use our plug-in covariance estimator to estimate the covariance for each digit. 
The implementation details are as follows. 
Recall that the risk function for normal mean estimation is $f(\bm{\theta};\bm{x}) = \frac{1}{2}\| \bm{\theta} - \bm{x}\|_2^2$. 
Under this risk function, a stochastic gradient in ROOT-SGD is calculated by randomly sampling data from the training set of that digit. 
When implementing the ROOT-SGD algorithm, we set the step size $\eta = 0.05$, the burn-in period $B = 10,000$, and the number of samples $t = 100,000$. 
We calculate the plug-in estimator for the covariance of the mean. 
Note that for this risk function, we have $\nabla^2 f(\bm{\theta}^*;\bm{x}) \equiv \nabla^2 F(\bm{\theta}^*) = I$. 
Thus, the plug-in estimator for the asymptotic covariance of ROOT-SGD reduces to $\widehat{A}_t^{-1}\widehat{S}_t \widehat{A}_t^{-1}$. 
This allows a fast computation of the covariance estimator.

Using the mean image estimate and plug-in covariance estimate, we further compute the confidence interval for the mean image. 
We show examples of mean image estimation and the 95$\%$ confidence bound of mean estimation of digits 0,1,2 in Figure \ref{fig:mnist_mean}. 
In Figure \ref{fig:mnist_mean}, the confidence intervals all look reasonable.  

\begin{figure}[ht]
    \centering
    \includegraphics[scale = 0.4]{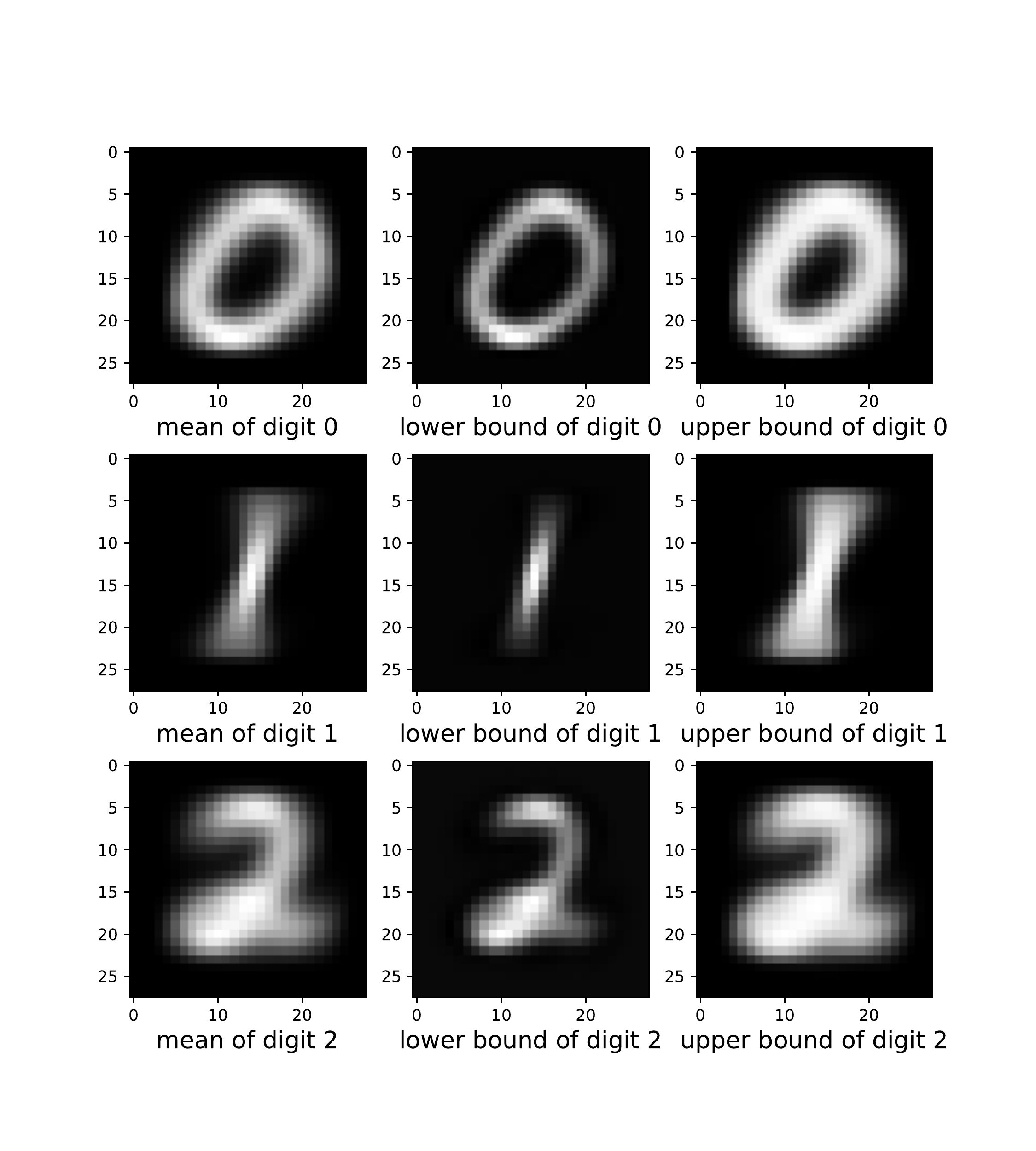}
    \caption{MNIST example: the mean and confidence intervals are reasonably estimated in an online fashion.}
    \label{fig:mnist_mean}
\end{figure}

\section{Discussion and Future Direction}\label{sec:05}
In this paper, we provide two online covariance estimators for the ROOT-SGD algorithm. 
Our plug-in covariance estimator converges to the true asymptotic covariance at the optimal rate $\mathcal{O}(1/\sqrt{t})$, where $t$ is the sample size. 
Our random-scaling covariance estimator is asymptotically consistent. 
Comparing the two estimators, the random-scaling covariance estimator requires less computation but does not have a guaranteed convergence rate. 
Numerical studies are performed to verify the asymptotic consistency of our covariance estimators.

We discuss some future directions in the following. 
We have developed asymptotically consistent statistical inference using our covariance estimators in this paper. 
One can consider further improving the statistical inference in a finite learning case. 
For example, one may debias the point estimator, then use the debiased point estimator combined with our covariance estimators for statistical inference.  
We have shown statistical inference examples in linear regression, logistic regression, and normal mean estimation in this paper. 
One may further apply the statistical inference to other problems since our formulation can be adapted to a broad range of stochastic optimization problems. 
For example, one can consider the statistical inference for the entropic optimal transport \citep{bercu2021asymptotic}, which is an important problem in modern machine learning. 
In summary, our work can have broad applications that will be of interest not only to the statistics community but also to the machine learning community.

\bigskip
\begin{center}
{\large\bf SUPPLEMENTARY MATERIAL}
\end{center}

\begin{description}
\item[proof:] All omitted proof in the paper. (.pdf file)
\item[algos:] Code for implementing the statistical inference methods in Section \ref{sec:04}. (.py file)
\item[inference:] Jupyter notebook to reproduce the experimental result. (.ipynb file)
\end{description}

\bibliographystyle{apalike}

\bibliography{ref}

\newpage
\appendix
\section{Proof for Proposition \ref{prop:root_sgd_convergence_exponential}}\label{app:prof_root_sgd_convergence_exponential}
In this section, we prove Proposition \ref{prop:root_sgd_convergence_exponential}. 
\begin{proof}
We start the proof by writing the equivalent condition for inequality \eqref{eq:linear_conv_exponential_family}. 
\begin{align*}
    \eqref{eq:linear_conv_exponential_family} &\Longleftrightarrow \|\widehat{\bm{\theta}}_t - \bm{\theta}^*_t\|_2^2 \leq (1 - \eta\alpha)^2 \|\widehat{\bm{\theta}}_{t-1} - \bm{\theta}^*_t\|_2^2\\
    &\Longleftrightarrow \left\|\widehat{\bm{\theta}}_{t-1} - \eta \left(\nabla B(\widehat{\bm{\theta}}_{t-1}) - \nabla B(\bm{\theta}^*_t)\right) - \bm{\theta}^*_t\right\|_2^2 \leq (1 - \eta\alpha)^2 \|\widehat{\bm{\theta}}_{t-1} - \bm{\theta}^*_t\|_2^2\\
    &\Longleftrightarrow \left\|\nabla B(\widehat{\bm{\theta}}_{t-1}) - \nabla B(\bm{\theta}^*_t)\right\|_2^2 - \frac{2}{\eta}\left\langle \nabla B(\widehat{\bm{\theta}}_{t-1}) - \nabla B(\bm{\theta}^*_t), \widehat{\bm{\theta}}_{t-1} - \bm{\theta}^*_t\right\rangle \\
    &\quad\quad + \alpha(\frac{2}{\eta} - \alpha) \|\widehat{\bm{\theta}}_{t-1} - \bm{\theta}^*_t\|_2^2 \leq 0
\end{align*}
Since $B(\cdot)$ is smooth and strongly convex, by some standard equivalent conditions of smoothness and strongly convexity (for example, Lemma 2 and Lemma 4 in \citep{zhou2018duality}), we have
\begin{align*}
&\left\langle \nabla B(\widehat{\bm{\theta}}_{t-1}) - \nabla B(\bm{\theta}^*_t), \widehat{\bm{\theta}}_{t-1} - \bm{\theta}^*_t\right\rangle\\
\geq& \mu \|\widehat{\bm{\theta}}_{t-1} - \bm{\theta}^*_t\|_2^2\geq \frac{\mu}{l} \|\widehat{\bm{\theta}}_{t-1} - \bm{\theta}^*_t\|_2 \left\|\nabla B(\widehat{\bm{\theta}}_{t-1}) - \nabla B(\bm{\theta}^*_t)\right\|_2.
\end{align*}
For the convenience of proof, we define a new quantity $\kappa = \frac{\eta(\mu+l)}{2}$.
Then by our range of $\eta$, we immediately have $\kappa\in (0, \frac{\mu}{l}]$. 
Using this defined quantity, a sufficient condition for \eqref{eq:linear_conv_exponential_family} to hold is:
\begin{align}
    &\left\|\nabla B(\widehat{\bm{\theta}}_{t-1}) - \nabla B(\bm{\theta}^*_t)\right\|_2^2 - \frac{2\kappa}{\eta} \|\widehat{\bm{\theta}}_{t-1} - \bm{\theta}^*_t\|_2 \left\|\nabla B(\widehat{\bm{\theta}}_{t-1}) - \nabla B(\bm{\theta}^*_t)\right\|_2 \nonumber \\
    &\quad\quad + \alpha(\frac{2}{\eta} - \alpha) \|\widehat{\bm{\theta}}_{t-1} - \bm{\theta}^*_t\|_2^2 \leq 0. \label{eq:linear_conv_step1}
\end{align}
Take $\alpha = \frac{\mu+l - \sqrt{(\mu+l)^2 - 4\mu l\kappa^2}}{2\kappa}\in(0,\frac{1}{\eta})$, then we have
\begin{align}
    &\left\|\nabla B(\widehat{\bm{\theta}}_{t-1}) - \nabla B(\bm{\theta}^*_t)\right\|_2^2 - \frac{2\kappa}{\eta} \|\widehat{\bm{\theta}}_{t-1} - \bm{\theta}^*_t\|_2 \left\|\nabla B(\widehat{\bm{\theta}}_{t-1}) - \nabla B(\bm{\theta}^*_t)\right\|_2 \nonumber \\
    &\quad\quad + \alpha(\frac{2}{\eta} - \alpha) \|\widehat{\bm{\theta}}_{t-1} - \bm{\theta}^*_t\|_2^2 \nonumber\\
    =& \left(\left\|\nabla B(\widehat{\bm{\theta}}_{t-1}) - \nabla B(\bm{\theta}^*_t)\right\|_2 - \mu \|\widehat{\bm{\theta}}_{t-1} - \bm{\theta}^*_t\|_2\right)\nonumber\\
    & \quad* \left(\left\|\nabla B(\widehat{\bm{\theta}}_{t-1}) - \nabla B(\bm{\theta}^*_t)\right\|_2 - l \|\widehat{\bm{\theta}}_{t-1} - \bm{\theta}^*_t\|_2\right)\nonumber\\
    \leq& 0, \label{eq:linear_conv_step2}
\end{align}
where \eqref{eq:linear_conv_step2} by $\mu \|\widehat{\bm{\theta}}_{t-1} - \bm{\theta}^*_t\|_2\leq \left\|\nabla B(\widehat{\bm{\theta}}_{t-1}) - \nabla B(\bm{\theta}^*_t)\right\|_2\leq l \|\widehat{\bm{\theta}}_{t-1} - \bm{\theta}^*_t\|_2$. 
This conclude the proof for Proposition \ref{prop:root_sgd_convergence_exponential}. 
\end{proof}

\section{Connections Between Assumptions}\label{app:connect_assumption}
In Section \ref{sec:2.2}, we list several assumptions. 
Among those assumptions, Assumptions \ref{ass:smooth} and \ref{ass:6} are consequences of other assumptions. 
In this section, we prove such statement. 

\begin{proof}
We check that Assumption \ref{ass:smooth} is implied by Assumptions \ref{ass:sc} and \ref{ass:4} as follows. 
Assume Assumptions \ref{ass:sc} and \ref{ass:4} hold, then we can bound
\begin{align*}
    &\mathbbm{E}\|\nabla f( \bm{\theta}_1;\bm{x}) - \nabla f( \bm{\theta}_2;\bm{x})\|_2^2\\
    =& \|\nabla F( \bm{\theta}_1) - \nabla F( \bm{\theta}_2)\|_2^2 + \mathbbm{E}\|\delta ( \bm{\theta}_1;\bm{x}) - \delta( \bm{\theta}_2;\bm{x})\|_2^2\\
    \leq & (L_1^2 + L_2^2) \|\bm{\theta}_1 - \bm{\theta}_2\|_2^2. 
\end{align*} 
Thus, Assumption \ref{ass:smooth} must hold with $L_5^2 = L_1^2 + L_2^2$.

We check that Assumption \ref{ass:6} is implied by Assumption \ref{ass:5} as follows. 
Assume Assumption \ref{ass:5} holds, then the first inequality in Assumption \ref{ass:6} can be derived by: 
\begin{align}
    &\mathbbm{E}\|\nabla^2 f(\bm{\theta};\bm{x})\otimes \nabla^2 f(\bm{\theta};\bm{x}) - \nabla^2 f(\bm{\theta}^*;\bm{x})\otimes \nabla^2 f(\bm{\theta}^*;\bm{x})\|_2 \nonumber\\
    \leq & \mathbbm{E}\| (\nabla^2 f(\bm{\theta};\bm{x}) - \nabla^2 f(\bm{\theta}^*;\bm{x}))  \otimes (\nabla^2 f(\bm{\theta};\bm{x}) -\nabla^2 f(\bm{\theta}^*;\bm{x}))\|_2 \nonumber\\
    & + \mathbbm{E}\|\nabla^2 f(\bm{\theta}^*;\bm{x}) \otimes (\nabla^2 f(\bm{\theta};\bm{x}) -\nabla^2 f(\bm{\theta}^*;\bm{x}))\|_2 \nonumber\\
    & + \mathbbm{E}\|(\nabla^2 f(\bm{\theta};\bm{x}) -\nabla^2 f(\bm{\theta}^*;\bm{x}))\otimes \nabla^2 f(\bm{\theta}^*;\bm{x})\|_2\nonumber\\
    = & \mathbbm{E}\|\nabla^2 f(\bm{\theta};\bm{x}) - \nabla^2 f(\bm{\theta}^*;\bm{x})\|_2^2 + 2\mathbbm{E}\|\nabla^2 f(\bm{\theta};\bm{x}) -\nabla^2 f(\bm{\theta}^*;\bm{x})\|_2 \|\nabla^2 f(\bm{\theta}^*;\bm{x})\|_2\nonumber\\
    \leq & L_3^2\|\bm{\theta} - \bm{\theta}^*\|_2^2 + 2 \sqrt{\mathbbm{E}\|\nabla^2 f(\bm{\theta};\bm{x}) -\nabla^2 f(\bm{\theta}^*;\bm{x})\|_2^2 \mathbbm{E}\|\nabla^2 f(\bm{\theta}^*;\bm{x})\|_2^2}\nonumber\\
    \leq & L_3^2\|\bm{\theta} - \bm{\theta}^*\|_2^2 + 2 L_3 L_4^2\|\bm{\theta} - \bm{\theta}^*\|_2. \label{eq:assum5_assum6a}
\end{align}

And the second inequality of Assumption \ref{ass:6} can be obtained as:
\begin{align}
    & \mathbbm{E} \|\nabla^2 f(\bm{\theta}^*;\bm{x})\otimes \nabla^2 f(\bm{\theta}^*;\bm{x}) - \mathbbm{E}[\nabla^2 f(\bm{\theta}^*;\bm{x})\otimes \nabla^2 f(\bm{\theta}^*;\bm{x})]\|_2^2 \nonumber\\
    \leq & \mathbbm{E} \|\nabla^2 f(\bm{\theta}^*;\bm{x})\otimes \nabla^2 f(\bm{\theta}^*;\bm{x})\|_2^2 = \mathbbm{E} \|\nabla^2 f(\bm{\theta}^*;\bm{x})\|_2^4 \leq L_4^4.\label{eq:assum5_assum6b}
\end{align}
In this way, Assumption \ref{ass:6} must hold with $L_6 = 2L_3L_4^2$, $L_6' = L_3^2$, and $L_7 = L_4^4$.
\end{proof}

\section{Check Assumptions for Common Examples}\label{app:A}
In this section, we check that all assumptions in Section \ref{sec:2.2} hold for the common parameter estimation problems of exponential family model, linear regression and logistic regression. 
\subsection{Exponential Family Model}
We consider the exponential family model with natural parameters, where the likelihood function is
\[L(\bm{\theta};\bm{x}) = h(\bm{x})\exp[\langle\bm{\theta}, \bm{T}(\bm{x})\rangle - B(\bm{\theta})].\]
Let the risk function be the negative log-likelihood
\[f(\bm{\theta};\bm{x}) = -\langle\bm{\theta}, \bm{T}(\bm{x})\rangle + B(\bm{\theta}).\]
Then 
\[\bm{\theta}^* = \arg\min F(\bm{\theta}) = -\langle\bm{\theta}, \mathbbm{E}\bm{T}(\bm{x})\rangle + B(\bm{\theta}),\]
thus $\mathbbm{E}T(\bm{x}) = \nabla B(\bm{\theta}^*)$. 
Compute that 
\[\nabla f(\bm{\theta};\bm{x}) = - \bm{T}(\bm{x}) + \nabla B(\bm{\theta});\]
\[\nabla^2 f(\bm{\theta};\bm{x}) = \nabla^2 B(\bm{\theta}).\]
\[ \nabla F(\bm{\theta}) = - \mathbbm{E}\bm{T}(\bm{x}) + \nabla B(\bm{\theta});\]
\[ \nabla^2 F(\bm{\theta}) = \nabla^2 B(\bm{\theta}).\]

Suppose that $B(\cdot)$ is $\mu$-strongly convex and $l$-smooth, $\nabla^2 B(\cdot)$ is $l'$-Lipchitz continuous, and the true parameter $\bm{\theta}^*$ is taken such that the second and fourth moment of $\bm{T}(\bm{x})$ are bounded. 
Then we can check that all assumptions in Section \ref{sec:2.2} hold as follows. 

\begin{itemize}

\item \textbf{For Assumption \ref{ass:sc}:} $\nabla^2 F(\bm{\theta}) = \nabla^2 B(\bm{\theta})$, so $F(\cdot)$ is $\gamma$-strongly convex and $L_1$-smooth for $\gamma = \mu$, $L_1 = l$.

\item \textbf{For Assumption \ref{ass:bv}:} It is clear that $\nabla f(\bm{\theta};\bm{x})$ is unbiased, and we check its covariance at $\bm{\theta}^*$
\[\mathbbm{E} [\|\nabla f(\bm{\theta}^*;\bm{x}) - \nabla F(\bm{\theta}^*)\|_2^2] = \mathbbm{E} \|\bm{T}(\bm{x}) - \mathbbm{E}\bm{T}(\bm{x})\|_2^2\leq \sigma_1^2\]
for a $\sigma_1^2 <\infty$ since the second moment of $\bm{T}(\bm{x})$ is bounded. 

\item \textbf{For Assumption \ref{ass:4}:} We have $\delta(\bm{\theta};\bm{x}) = \mathbbm{E}\bm{T}(\bm{x}) - \bm{T}(\bm{x})$, then 
\[ \mathbbm{E} \|\delta(\bm{\theta}_1;\bm{x}) - \delta(\bm{\theta}_2;\bm{x})\|_2^2 = 0,\]
so Assumption \ref{ass:4} holds with $L_2^2 = 0$.

\item \textbf{For Assumption \ref{ass:5}:}  
\[\mathbbm{E}\|\nabla^2 f(\bm{\theta};\bm{x}) - \nabla^2 f(\bm{\theta}^*;\bm{x})\|_2^2= \|\nabla^2 B(\bm{\theta}) - \nabla^2 B(\bm{\theta}^*)\|_2^2  \leq  (l')^2 \|\bm{\theta} - \bm{\theta}^*\|_2^2,\]
\[\mathbbm{E}\|\nabla f(\bm{\theta}^*;\bm{x})\|_2^4 =  \mathbbm{E}\|\nabla B(\bm{\theta}^*) - \bm{T}(\bm{x})\|_2^4 =  \mathbbm{E}\|\mathbbm{E}[\bm{T}(\bm{x})] - \bm{T}(\bm{x}) \|_2^4 \leq l_4^4, \]
for a $l_4^4<\infty$ since the fourth moment of $\bm{T}(\bm{x})$ is bounded.
\[\mathbbm{E}\|\nabla^2 f(\bm{\theta}^*;\bm{x})\|_2^4  = \|\nabla^2 B(\bm{\theta}^*)\|_2^4 \leq l^4.\]
So Assumption \ref{ass:5} holds with $L_3^2 = (l')^2$, $L_4^4 = l^4$.

\item \textbf{For Assumption \ref{ass:smooth}:} 
$$\mathbbm{E}\|\nabla f( \bm{\theta}_1;\bm{x}) - \nabla f( \bm{\theta}_2;\bm{x})\|_2^2 = \|\nabla B(\bm{\theta}_1) - \nabla B(\bm{\theta}_2)\|_2^2
\leq l^2 \| \bm{\theta}_1 -  \bm{\theta}_2\|_2^2.$$
So Assumption \ref{ass:smooth} holds with $L_5^2 = l^2$.

\item \textbf{For Assumption \ref{ass:6}:}
By Remark \ref{remark:02}, we have
\begin{align*}
&\mathbbm{E}\|\nabla^2 f(\bm{\theta};\bm{x})\otimes \nabla^2 f(\bm{\theta};\bm{x}) - \nabla^2 f(\bm{\theta}^*;\bm{x})\otimes \nabla^2 f(\bm{\theta}^*;\bm{x})\|_2\\
\leq & L_3^2\|\bm{\theta} - \bm{\theta}^*\|_2^2 + 2 L_3 L_4^2\|\bm{\theta} - \bm{\theta}^*\|_2.
\end{align*}
And we check that
\begin{align*}
    &\mathbbm{E} \|\nabla^2 f(\bm{\theta}^*;\bm{x})\otimes \nabla^2 f(\bm{\theta}^*;\bm{x}) - \mathbbm{E}[\nabla^2 f(\bm{\theta}^*;\bm{x})\otimes \nabla^2 f(\bm{\theta}^*;\bm{x})]\|_2^2\\
    \leq& \mathbbm{E} \|\nabla^2 B(\bm{\theta}^*) \otimes \nabla^2 B(\bm{\theta}^*) - \mathbbm{E}\nabla^2 B(\bm{\theta}^*) \otimes \nabla^2 B(\bm{\theta}^*) \|_2^2 = 0.
\end{align*}
So Assumption \ref{ass:6} holds with $L_6^\prime = L_3^2 = (l')^2, L_6 =2L_3 L_4^2 = 2l' l^2$ and $L_7 = 0$.
\end{itemize}

\subsection{Linear Regression Model}\label{app:lr_model}
Consider the standard linear regression framework, where data $\bm{x}_i^T = (\bm{a}_i^T,b_i)$. 
The input $\bm{a}_i\in R^p$ are random samples from the same multivariate distribution, and the response $b_i = \bm{a}_i^T \bm{\theta}^* + \epsilon_i$, where $\bm{\theta}^*$ is the true parameter and $\epsilon_i$'s are i.i.d. centered random variables independent of $\bm{a}_i$'s. 
In particular, to track the dependence of Theorem \ref{thm:consis} on the problem dimension $p$, we further assume $\bm{a}\sim N(0,I_p), \epsilon\sim N(0,1)$. 

Use the squared loss
\[f(\bm{\theta};\bm{x}) = \frac{1}{2}(\bm{a}^T \bm{\theta} - b )^2.\]
Then 
\[\bm{\theta^*} = \arg\min F(\bm{\theta}) = \mathbbm{E}_{\bm{x}} \left[ \frac{1}{2}(\bm{a}^T \bm{\theta} - b )^2\right].\]
Compute that
\[\nabla f(\bm{\theta};\bm{x}) = \bm{a}(\bm{a}^T \bm{\theta} - b ) = \bm{a}\bm{a}^T (\bm{\theta} - \bm{\theta}^*) -  \bm{a}\epsilon ;\]
\[\nabla^2 f(\bm{\theta};\bm{x}) = \bm{a}\bm{a}^T;\]
\[\nabla F(\bm{\theta}) =  \mathbbm{E}[\nabla f(\bm{\theta}; \bm{x})] = \mathbbm{E}[\bm{a}\bm{a}^T] (\bm{\theta} - \bm{\theta}^*);\]
\[\nabla^2 F(\bm{\theta}) = \mathbbm{E}[\nabla^2 f(\bm{\theta};\bm{x})] = \mathbbm{E}[\bm{a}\bm{a}^T] = I_p.\]

We check assumptions in Section \ref{sec:2.2}:

\begin{itemize}

\item \textbf{For Assumption \ref{ass:sc}:} Since $\nabla^2 F(\bm{\theta}) = I_p$, $F(\bm{\theta})$ is $\gamma$-strongly convex and $L_1$-smooth with $\gamma = L_1 = 1$.

\item \textbf{For Assumption \ref{ass:bv}:} It is clear that $\nabla f(\bm{\theta};\bm{x})$ is unbiased, and we check its covariance at $\bm{\theta}^*$
\[\mathbbm{E} [\|\nabla f(\bm{\theta}^*;\bm{x}) - \nabla F(\bm{\theta}^*)\|_2^2] = \mathbbm{E} \|\bm{a}\epsilon  \|_2^2 = p,\]
so Assumption \ref{ass:bv} holds with $\sigma_1^2 = p$.

\item \textbf{For Assumption \ref{ass:4}:} We have $\delta(\bm{\theta};\bm{x}) = (\bm{a}\bm{a}^T - I_p) (\bm{\theta} - \bm{\theta}^*) -  \bm{a}\epsilon$, then 
\[ \mathbbm{E} \|\delta(\bm{\theta}_1;\bm{x}) - \delta(\bm{\theta}_2;\bm{x})\|_2^2 =  \mathbbm{E} \|(\bm{a}\bm{a}^T - I_p) (\bm{\theta}_1 - \bm{\theta}_2)\|_2^2
\leq \mathbbm{E} \|\bm{a}\bm{a}^T - I_p\|_2^2 \|\bm{\theta}_1 - \bm{\theta}_2\|_2^2,\]
where $\mathbbm{E} \|\bm{a}\bm{a}^T - I_p\|_2^2 \leq 2\mathbbm{E} \|\bm{a}\bm{a}^T\|_2^2 + 2\|I_p\|_2^2 = 2p+2$, so Assumption \ref{ass:4} holds with $L_2^2 =2p+2$. 

\item \textbf{For Assumption \ref{ass:5}:}  
\[\mathbbm{E}\|\nabla^2 f(\bm{\theta};\bm{x}) - \nabla^2 f(\bm{\theta}^*;\bm{x})\|_2^2= \mathbbm{E}\|\bm{a}\bm{a}^T - \bm{a}\bm{a}^T\|_2^2  = 0,\]
\[\mathbbm{E}\|\nabla f(\bm{\theta}^*;\bm{x})\|_2^4 =  \mathbbm{E}\|\bm{a}\epsilon\|_2^4  \leq  \mathbbm{E}\|\bm{a}\|_2^4 \|\epsilon\|_2^4 =\mathbbm{E}\|\bm{a}\|_2^4 \mathbbm{E} \|\epsilon\|_2^4 = 3(2p + p^2), \]
\[\mathbbm{E}\|\nabla^2 f(\bm{\theta}^*;\bm{x})\|_2^4  = \mathbbm{E}\| \bm{a}\bm{a}^T\|_2^4 = \mathbbm{E}\| \bm{a}\|_2^8 = p(p+2)(p+4)(p+6).\]
Assumption \ref{ass:5} holds with $L_3 = 0, l_4^4 = 3p(p+2), L_4^4 = p(p+2)(p+4)(p+6)$.

\item \textbf{For Assumption \ref{ass:smooth}:} 
$$\mathbbm{E}\|\nabla f( \bm{\theta}_1;\bm{x}) - \nabla f( \bm{\theta}_2;\bm{x})\|_2^2 = \mathbbm{E}\|\bm{a}\bm{a}^T(\bm{\theta}_1 - \bm{\theta}_2)\|_2^2
\leq \mathbbm{E}\|\bm{a}\bm{a}^T\|_2^2 \| \bm{\theta}_1 -  \bm{\theta}_2\|_2^2.$$
So Assumption \ref{ass:smooth} holds with $L_5^2 = \mathbbm{E}\|\bm{a}\bm{a}^T\|_2^2 = \mathbbm{E}\|\bm{a}\|_2^4 = 2p + p^2$.

\item \textbf{For Assumption \ref{ass:6}:}
\[ \mathbbm{E}\|\nabla^2 f(\bm{\theta};\bm{x})\otimes \nabla^2 f(\bm{\theta};\bm{x}) - \nabla^2 f(\bm{\theta}^*;\bm{x})\otimes \nabla^2 f(\bm{\theta}^*;\bm{x})\|_2 = 0,
\]
\begin{align*}
    &\mathbbm{E} \|\nabla^2 f(\bm{\theta}^*;\bm{x})\otimes \nabla^2 f(\bm{\theta}^*;\bm{x}) - \mathbbm{E}[\nabla^2 f(\bm{\theta}^*;\bm{x})\otimes \nabla^2 f(\bm{\theta}^*;\bm{x})]\|_2^2\\
    \leq& \mathbbm{E} \|\nabla^2 f(\bm{\theta}^*;\bm{x})\otimes \nabla^2 f(\bm{\theta}^*;\bm{x})\|_2^2\leq p(p+2)(p+4)(p+6).
\end{align*}
Assumption \ref{ass:6} holds with $L_6 = L_6^\prime = 0, L_7 = p(p+2)(p+4)(p+6)$.
\end{itemize}
For linear regression, we have the constant in Theorem \ref{thm:consis} is
\[C_p \lesssim \max\{\sigma_1/\gamma,L_3, \sqrt{L_4},l_4,\sqrt{L_5},L_6\} \asymp \sqrt{p}. \] %, $\|S\|_F = \|I_p\|_F = \sqrt{p}$. 

\subsection{Logistic Regression Model}
Consider the logistic regression model as follows. 
The data sample $\bm{x}_i^T = (\bm{a}_i^T,b_i) \in  R^p\times\{-1,1\}$. 
Suppose that $\bm{a}\sim N(0,I_p)$, and the data pair $(\bm{a}_i^T,b_i)$ is related by a  $\bm{\theta}^* \in  R^p$ such that $P(b_i = 1) = \frac{1}{1 + \exp(-\langle\bm{a}_i,\bm{\theta}^*\rangle)}$. 
Then the negative log-likelihood as objective function
$$f(\bm{\theta};\bm{x}) = \log(1 + \exp(- b \langle\bm{a},\bm{\theta}\rangle)),$$
then 
$$\nabla f(\bm{\theta};\bm{x}) = - \frac{b\bm{a}}{1 + \exp(b \langle\bm{a},\bm{\theta}\rangle)},$$
 $$\nabla^2 f(\bm{\theta};\bm{x}) = \frac{\exp(b \langle\bm{a},\bm{\theta}\rangle) \bm{a}\bm{a}^T}{(1 + \exp(b \langle\bm{a},\bm{\theta}\rangle))^2} = \frac{\bm{a}\bm{a}^T}{(1 + \exp(\langle\bm{a},\bm{\theta}\rangle))(1 + \exp( - \langle\bm{a},\bm{\theta}\rangle))}. $$
 
We check that all assumptions hold for logistic regression example. 
\begin{itemize}
\item \textbf{For Assumption \ref{ass:sc}:} 
Calculate that 
\[\nabla^2 F(\bm{\theta}) = \mathbbm{E}\left[\frac{ \bm{a}\bm{a}^T}{(1 + \exp( \langle\bm{a},\bm{\theta}\rangle))(1 + \exp(- \langle\bm{a},\bm{\theta}\rangle))}\right].\]
Then by Lemma A.3. in \citep{chen2020statistical}, $\nabla^2 F(\bm{\theta})$ is positive definite; thus $F(\bm{\theta})$ is strongly convex. 

Furthermore, we have $\exp(b \langle\bm{a},\bm{\theta}\rangle)/(1 + \exp(b \langle\bm{a},\bm{\theta}\rangle))^2\leq 1/4$. 
Thus
\[\nabla^2 F(\bm{\theta}) = \mathbbm{E}\left[\frac{\exp(b \langle\bm{a},\bm{\theta}\rangle) \bm{a}\bm{a}^T}{(1 + \exp(b \langle\bm{a},\bm{\theta}\rangle))^2}\right]\preceq \frac{1}{4} \mathbbm{E}[\bm{a}\bm{a}^T] = \frac{1}{4}I_p.\]
$F(\bm{\theta})$ is $1/4$-smooth.

\item  \textbf{For Assumption \ref{ass:bv}:} It is clear that $\nabla f(\bm{\theta};\bm{x})$ is unbiased, and we check its covariance at $\bm{\theta}^*$
\begin{align*}
&\mathbbm{E} [\|\nabla f(\bm{\theta}^*;\bm{x}) - \nabla F(\bm{\theta}^*)\|_2^2] \\
= &\mathbbm{E} [\|\nabla f(\bm{\theta}^*;\bm{x}) - \mathbbm{E}\nabla f(\bm{\theta}^*;\bm{x})\|_2^2]\\
\leq &\mathbbm{E} [\|\nabla f(\bm{\theta}^*;\bm{x})\|_2^2]\\
= & \mathbbm{E}\left[\left\vert-\frac{b}{1 + \exp(b \langle\bm{a},\bm{\theta}^*\rangle)}\right\vert*\|\bm{a}\|_2^2\right]\\
\leq & \mathbbm{E}\|\bm{a}\|_2^2 = p
\end{align*}
so Assumption \ref{ass:bv} holds with $\sigma_1^2 = p$.

\item \textbf{For Assumption \ref{ass:4}:} 
We have $\delta(\bm{\theta};\bm{x}) = \nabla f(\bm{\theta};\bm{x}) - \nabla F(\bm{\theta})$, then 
\[ \mathbbm{E} \|\delta(\bm{\theta}_1;\bm{x}) - \delta(\bm{\theta}_2;\bm{x})\|_2^2 =  2\mathbbm{E} \|\nabla f(\bm{\theta}_1;\bm{x}) - \nabla f(\bm{\theta}_2;\bm{x})\|_2^2 + 2\|\nabla F(\bm{\theta}_1) - \nabla F(\bm{\theta}_2)\|_2^2.\]
As we have checked in Assumption \ref{ass:sc},
\[\|\nabla F(\bm{\theta}_1) - \nabla F(\bm{\theta}_2)\|_2^2\leq \frac{1}{16}\|\bm{\theta}_1 - \bm{\theta}_2\|_2^2.\]

Since $\exp(b \langle\bm{a},\bm{\theta}\rangle)/(1 + \exp(b \langle\bm{a},\bm{\theta}\rangle))^2\leq 1/4$, we have
$$\nabla^2 f(\bm{\theta};\bm{x}) = \frac{\exp(b \langle\bm{a},\bm{\theta}\rangle) \bm{a}\bm{a}^T}{(1 + \exp(b \langle\bm{a},\bm{\theta}\rangle))^2}\preceq \frac{1}{4} \bm{a}\bm{a}^T.$$
Thus
$$\mathbbm{E}\|\nabla f( \bm{\theta}_1;\bm{x}) - \nabla f( \bm{\theta}_2;\bm{x})\|_2^2 \leq  \mathbbm{E}\|\bm{a}\bm{a}^T/4\|_2^2 \|\bm{\theta}_1 - \bm{\theta}_2\|_2^2=(2p + p^2)/16\|\bm{\theta}_1 - \bm{\theta}_2\|_2^2.$$
Assumption \ref{ass:4} holds with $L_2^2 = (1 + 2p + p^2)/8$, thus $L_2 \sim \mathcal{O}(p)$. 

\item  \textbf{For Assumption \ref{ass:5}:}  
\begin{align*}
&\mathbbm{E}\|\nabla^2 f(\bm{\theta};\bm{x}) - \nabla^2 f(\bm{\theta}^*;\bm{x})\|_2^2\\
=&\mathbbm{E}\left\|\frac{\bm{a}\bm{a}^T}{(1 + \exp(\langle\bm{a},\bm{\theta}\rangle))(1 + \exp( - \langle\bm{a},\bm{\theta}\rangle))} - \frac{\bm{a}\bm{a}^T}{(1 + \exp(\langle\bm{a},\bm{\theta}^*\rangle))(1 + \exp( - \langle\bm{a},\bm{\theta}^*\rangle))}\right\|_2^2\\
=&\mathbbm{E}\Bigg\{\left(\frac{1}{(1 + \exp(\langle\bm{a},\bm{\theta}\rangle))(1 + \exp( - \langle\bm{a},\bm{\theta}\rangle))} - \frac{1}{(1 + \exp(\langle\bm{a},\bm{\theta}^*\rangle))(1 + \exp( - \langle\bm{a},\bm{\theta}^*\rangle))}\right)^2\\
&*\|\bm{a}\bm{a}^T\|_2^2\Bigg\}\\
\leq & \frac{1}{16}\mathbbm{E}(\langle\bm{a},\bm{\theta}^*\rangle - \langle\bm{a},\bm{\theta}\rangle)^2*\|\bm{a}\|_2^4\\
\leq& \frac{1}{16}\mathbbm{E}\|\bm{a}\|_2^6 \|\bm{\theta}^* - \bm{\theta}\|_2^2 = [p(p+2)(p+4)/16] \|\bm{\theta}^* - \bm{\theta}\|_2^2
\end{align*}
\[\mathbbm{E}\|\nabla f(\bm{\theta}^*;\bm{x})\|_2^4 \leq \mathbbm{E}\|\bm{a}\|_2^4  = 2p + p^2, \]
\[\mathbbm{E}\|\nabla^2 f(\bm{\theta}^*;\bm{x})\|_2^4 \leq \mathbbm{E}\| \bm{a}\bm{a}^T/4\|_2^4 = \mathbbm{E}\| \bm{a}\|_2^8/256 = p(p+2)(p+4)(p+6)/256.\]
Assumption \ref{ass:5} holds with $L_3^2 = p(p+2)(p+4)/16, l_4^4 = p(p+2), L_4^4 = p(p+2)(p+4)(p+6)/256$. So $L_3 \sim \mathcal{O}(p^{1.5})$, $l_4 \sim \mathcal{O}(p^{.5})$, $L_4 \sim \mathcal{O}(p)$.

\item \textbf{For Assumption \ref{ass:smooth}:} 
Since $\exp(b \langle\bm{a},\bm{\theta}\rangle)/(1 + \exp(b \langle\bm{a},\bm{\theta}\rangle))^2\leq 1/4$, we have
$$\nabla^2 f(\bm{\theta};\bm{x}) = \frac{\exp(b \langle\bm{a},\bm{\theta}\rangle) \bm{a}\bm{a}^T}{(1 + \exp(b \langle\bm{a},\bm{\theta}\rangle))^2}\preceq \frac{1}{4} \bm{a}\bm{a}^T.$$
Thus
$$\mathbbm{E}\|\nabla f( \bm{\theta}_1;\bm{x}) - \nabla f( \bm{\theta}_2;\bm{x})\|_2^2 \leq  \mathbbm{E}\|\bm{a}\bm{a}^T/4\|_2^2 \|\bm{\theta}_1 - \bm{\theta}_2\|_2^2.$$
So Assumption \ref{ass:smooth} holds with $L_5^2 = \mathbbm{E}\|\bm{a}\bm{a}^T\|_2^2/16 = \mathbbm{E}\|\bm{a}\|_2^4/16 = (2p + p^2)/16$, thus $L_5\sim \mathcal{O}(p)$.

\item \textbf{For Assumption \ref{ass:6}:}
By Remark \ref{remark:02}, Assumption \ref{ass:6} holds with $L_6^\prime = L_3^2$, $L_6 = 2L_3 L_4^2$ and $L_7 = L_4^4 $. So $L_6^\prime \sim \mathcal{O}(p^3)$, $L_6 \sim \mathcal{O}(p^{3.5})$, $L_7 \sim \mathcal{O}(p^4)$.
\end{itemize}

\section{Ancillary Lemmas and Proof for Theorem \ref{thm:consis}}\label{app:B}
In this section, we prove Theorem \ref{thm:consis}. 
We first provide some ancillary lemmas that will be used to prove Theorem \ref{thm:consis} in Section \ref{app:thm1_lemma}. 
Then we give the proof of Theorem \ref{thm:consis} in Section \ref{app:thm1_proof}. 

\subsection{Lemmas}\label{app:thm1_lemma}
We first review some ancillary lemmas that will be useful in proving Theorem \ref{thm:consis}. 

\begin{lemma}[Implication of strong convexity]
The following condition is implied by $\gamma-$strong convexity for a differentiable function $F$:
\[\|\nabla F(\bm{\theta}_1) - \nabla F(\bm{\theta}_2)\|_2\geq \gamma\|\bm{\theta}_1-\bm{\theta}_2\|_2, \forall \bm{\theta}_1,\bm{\theta}_2.\]
\end{lemma}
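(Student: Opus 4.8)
The plan is to use the first-order characterization of $\gamma$-strong convexity stated in Assumption \ref{ass:sc}, applied symmetrically in the two points, and then invoke Cauchy--Schwarz. First I would write down the strong-convexity lower bound twice: once as $F(\bm{\theta}_1)\geq F(\bm{\theta}_2) + \nabla F(\bm{\theta}_2)^T(\bm{\theta}_1-\bm{\theta}_2) + \tfrac{\gamma}{2}\|\bm{\theta}_1-\bm{\theta}_2\|_2^2$, and once with the roles of $\bm{\theta}_1$ and $\bm{\theta}_2$ interchanged. Adding the two inequalities makes the function values cancel and yields the co-coercivity-type bound
\[
(\nabla F(\bm{\theta}_1)-\nabla F(\bm{\theta}_2))^T(\bm{\theta}_1-\bm{\theta}_2) \;\geq\; \gamma\,\|\bm{\theta}_1-\bm{\theta}_2\|_2^2.
\]

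Next I would bound the left-hand side from above by Cauchy--Schwarz: $(\nabla F(\bm{\theta}_1)-\nabla F(\bm{\theta}_2))^T(\bm{\theta}_1-\bm{\theta}_2)\leq \|\nabla F(\bm{\theta}_1)-\nabla F(\bm{\theta}_2)\|_2\,\|\bm{\theta}_1-\bm{\theta}_2\|_2$. Chaining this with the previous display gives $\|\nabla F(\bm{\theta}_1)-\nabla F(\bm{\theta}_2)\|_2\,\|\bm{\theta}_1-\bm{\theta}_2\|_2\geq \gamma\|\bm{\theta}_1-\bm{\theta}_2\|_2^2$. Finally I would divide through by $\|\bm{\theta}_1-\bm{\theta}_2\|_2$ to conclude $\|\nabla F(\bm{\theta}_1)-\nabla F(\bm{\theta}_2)\|_2\geq \gamma\|\bm{\theta}_1-\bm{\theta}_2\|_2$, after disposing of the degenerate case $\bm{\theta}_1=\bm{\theta}_2$ separately (where the inequality holds trivially with both sides equal to zero).

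There is essentially no substantive obstacle here; the only points requiring a moment of care are the symmetric addition step (making sure signs are tracked so that the mixed gradient term appears with the correct orientation) and the division by $\|\bm{\theta}_1-\bm{\theta}_2\|_2$, which is why the trivial case must be handled first. No smoothness or second-order information is needed — only differentiability and the strong-convexity inequality — so the argument is short and self-contained.
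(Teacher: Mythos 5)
Your proof is correct and complete: symmetrizing the strong-convexity inequality from Assumption \ref{ass:sc} to obtain $(\nabla F(\bm{\theta}_1)-\nabla F(\bm{\theta}_2))^T(\bm{\theta}_1-\bm{\theta}_2)\geq\gamma\|\bm{\theta}_1-\bm{\theta}_2\|_2^2$, then applying Cauchy--Schwarz and dividing (with the case $\bm{\theta}_1=\bm{\theta}_2$ handled separately) is the standard argument, and the paper itself states this ancillary lemma without giving any proof, so there is nothing to diverge from. One terminological nit: the intermediate display is the strong monotonicity of the gradient rather than a ``co-coercivity-type'' bound, which ordinarily refers to the inequality $(\nabla F(\bm{\theta}_1)-\nabla F(\bm{\theta}_2))^T(\bm{\theta}_1-\bm{\theta}_2)\geq \tfrac{1}{L}\|\nabla F(\bm{\theta}_1)-\nabla F(\bm{\theta}_2)\|_2^2$ associated with smoothness.
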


\begin{lemma}[Kronecker product rule] We have
\begin{align*}
    \VEC(A B C) = (C^T\otimes A) \VEC(B).
\end{align*}
Suppose that $A$ and $B$ are square matrices of size $m$ and $n$, respectively. Let $\lambda_1,\cdots,\lambda_m$ be the eigenvalues of $A$ and $\mu_1,\cdots,\mu_n$ be the eigenvalues of $B$, then the eigenvalues of $A\otimes B$ are
\[\lambda_i\mu_j, i=1,\cdots,m, j = 1,\cdots,n.\]
Thus
\[\|A\otimes B\|_2 = \|A\|_2 \|B\|_2,\]
and  
\[\|AB\|_F = \|\VEC(AB)\|_2 = \|(I\otimes A)\VEC(B)\|_2\leq \|I\otimes A\|_2\|\VEC(B)\|_2 = \|A\|_2 \|B\|_F ,\]
similarly, 
\[\|AB\|_F\leq \|B\|_2 \|A\|_F.\]
\end{lemma}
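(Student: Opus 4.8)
The plan is to establish the three assertions in order --- the vectorization identity, the eigenvalue formula for $A\otimes B$ together with its operator-norm corollary, and finally the two Frobenius-norm inequalities, which will drop out immediately from the first two. Throughout I would rely only on the elementary mixed-product property of the Kronecker product, $(X\otimes Y)(\bm{p}\otimes\bm{q}) = (X\bm{p})\otimes(Y\bm{q})$, and on $\VEC(\bm{u}\bm{v}^T) = \bm{v}\otimes\bm{u}$.

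First I would prove $\VEC(ABC) = (C^T\otimes A)\VEC(B)$ by reducing to rank-one matrices. For $B = \bm{u}\bm{v}^T$ one has $ABC = (A\bm{u})(C^T\bm{v})^T$, so
\[
\VEC(ABC) = (C^T\bm{v})\otimes(A\bm{u}) = (C^T\otimes A)(\bm{v}\otimes\bm{u}) = (C^T\otimes A)\VEC(\bm{u}\bm{v}^T).
\]
Both sides are linear in $B$ and every matrix is a finite sum of rank-one matrices, so the identity extends to all $B$. Specializing $C = I$ gives $\VEC(AB) = (I\otimes A)\VEC(B)$, and writing $AB = I\cdot A\cdot B$ gives $\VEC(AB) = (B^T\otimes I)\VEC(A)$; these are the two forms the last step needs.

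Next I would handle the eigenvalues. If $A\bm{u}_i = \lambda_i\bm{u}_i$ and $B\bm{v}_j = \mu_j\bm{v}_j$, the mixed-product property gives $(A\otimes B)(\bm{u}_i\otimes\bm{v}_j) = \lambda_i\mu_j(\bm{u}_i\otimes\bm{v}_j)$, so each product $\lambda_i\mu_j$ is an eigenvalue of $A\otimes B$. To see that these exhaust the spectrum with the correct multiplicities, I would invoke Schur triangularization: write $A = Q_1 T_1 Q_1^*$ and $B = Q_2 T_2 Q_2^*$ with $Q_1,Q_2$ unitary and $T_1,T_2$ upper triangular with diagonals $(\lambda_i)$ and $(\mu_j)$; then $A\otimes B = (Q_1\otimes Q_2)(T_1\otimes T_2)(Q_1\otimes Q_2)^*$, where $Q_1\otimes Q_2$ is unitary and $T_1\otimes T_2$ is upper triangular with diagonal entries $\lambda_i\mu_j$, which are therefore precisely the $mn$ eigenvalues of $A\otimes B$. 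For the operator norm I would then apply this formula to the positive semidefinite matrix $(A\otimes B)^*(A\otimes B) = (A^*A)\otimes(B^*B)$: its eigenvalues are products of those of $A^*A$ and of $B^*B$, i.e. products of squared singular values, so $\|A\otimes B\|_2^2 = \|A\|_2^2\|B\|_2^2$, hence $\|A\otimes B\|_2 = \|A\|_2\|B\|_2$.

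Finally I would dispatch the Frobenius bounds, which are now bookkeeping: using $\|X\|_F = \|\VEC(X)\|_2$ together with $\|I\otimes A\|_2 = \|A\|_2$ and $\|B^T\otimes I\|_2 = \|B\|_2$ from the previous step,
\[
\|AB\|_F = \|(I\otimes A)\VEC(B)\|_2 \le \|A\|_2\|B\|_F,
\qquad
\|AB\|_F = \|(B^T\otimes I)\VEC(A)\|_2 \le \|B\|_2\|A\|_F.
\]
The only point that is not entirely routine is verifying that $\{\lambda_i\mu_j\}$ is the full multiset of eigenvalues of $A\otimes B$ rather than merely a subset, and the Schur-triangularization argument above is the cleanest route around it; everything else is manipulation of the mixed-product property.
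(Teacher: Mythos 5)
Your proof is correct. Note that the paper itself offers no proof of this lemma at all: it is listed among the ancillary lemmas in Appendix D as a standard fact about Kronecker products, so there is no "paper approach" to compare against, and your write-up simply supplies the missing (textbook) argument in a self-contained way. Two points in your proposal are worth highlighting as genuinely careful rather than routine. First, you are right that the eigenvector pairing $(A\otimes B)(\bm{u}_i\otimes\bm{v}_j)=\lambda_i\mu_j(\bm{u}_i\otimes\bm{v}_j)$ only shows containment, and your Schur-triangularization step (unitarity of $Q_1\otimes Q_2$ via the mixed-product property, upper-triangularity of $T_1\otimes T_2$ with diagonal $\lambda_i\mu_j$) is the correct way to get the full multiset with multiplicities, including non-diagonalizable $A$ or $B$. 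Second, the lemma's "Thus $\|A\otimes B\|_2=\|A\|_2\|B\|_2$" does not literally follow from the eigenvalue statement for non-normal matrices, since the operator norm is a singular-value quantity; your detour through $(A\otimes B)^*(A\otimes B)=(A^*A)\otimes(B^*B)$ is exactly the right fix and is consistent with the paper's column-stacking convention for $\VEC$ (so $\VEC(\bm{u}\bm{v}^T)=\bm{v}\otimes\bm{u}$ as you use it). The final Frobenius bounds then follow as you say, using $\|B^T\otimes I\|_2=\|B\|_2$. No gaps.
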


\begin{lemma}[Matrix perturbation for inverse, Lemma C.1 in \citep{chen2020statistical}]
Let $B = A + E$ and assume that $A,B$ are invertible. If $\|A^{-1}E\|_2 < 1/2$, we have
\begin{equation}\label{eq:inv_pertub}
    \|B^{-1} - A^{-1}\|_2 \leq 2 \|E\|_2 \|A^{-1}\|_2^2.
\end{equation}
\end{lemma}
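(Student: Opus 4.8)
The plan is to establish the bound via the standard resolvent identity together with a Neumann-series control of $\|B^{-1}\|_2$. First I would record the exact identity
\[
B^{-1} - A^{-1} = -A^{-1} E B^{-1},
\]
which follows from $B^{-1} - A^{-1} = A^{-1}(A - B)B^{-1}$ together with $A - B = -E$. Taking operator norms and applying submultiplicativity immediately gives
\[
\|B^{-1} - A^{-1}\|_2 \leq \|A^{-1}\|_2\,\|E\|_2\,\|B^{-1}\|_2,
\]
so the whole problem reduces to showing $\|B^{-1}\|_2 \leq 2\|A^{-1}\|_2$.

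To bound $\|B^{-1}\|_2$, I would factor $B = A + E = A(I + A^{-1}E)$, whence $B^{-1} = (I + A^{-1}E)^{-1}A^{-1}$. Since the hypothesis supplies $\|A^{-1}E\|_2 < 1/2 < 1$, the matrix $I + A^{-1}E$ is invertible with inverse given by the Neumann expansion $\sum_{k\geq 0}(-A^{-1}E)^k$, which yields
\[
\|(I + A^{-1}E)^{-1}\|_2 \leq \frac{1}{1 - \|A^{-1}E\|_2} < \frac{1}{1 - 1/2} = 2.
\]
Combining this with submultiplicativity gives $\|B^{-1}\|_2 \leq \|(I + A^{-1}E)^{-1}\|_2\,\|A^{-1}\|_2 \leq 2\|A^{-1}\|_2$.

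Substituting this estimate back into the reduction above produces
\[
\|B^{-1} - A^{-1}\|_2 \leq \|A^{-1}\|_2\,\|E\|_2\cdot 2\|A^{-1}\|_2 = 2\|E\|_2\,\|A^{-1}\|_2^2,
\]
which is exactly \eqref{eq:inv_pertub}. I do not anticipate any serious obstacle: the sole quantitative input is the Neumann-series estimate, and the constant $2$ in the conclusion is precisely what the threshold $\|A^{-1}E\|_2 < 1/2$ is engineered to deliver. As a minor remark, invertibility of $B$ need not be assumed separately, since it is already forced by $\|A^{-1}E\|_2 < 1$; but as the statement grants it, I would simply invoke it.
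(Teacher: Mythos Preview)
Your argument is correct and is the standard Neumann-series proof of this perturbation bound. The paper does not actually supply its own proof of this lemma; it merely quotes the statement as Lemma~C.1 of \citep{chen2020statistical}, so there is nothing to compare against beyond noting that your resolvent-identity-plus-Neumann approach is exactly the canonical one.
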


\begin{lemma}[Asymptotic convergence of ROOT-SGD, Proposition 2 in \citep{li2022root}]\label{lem:asymptotic_root}
Suppose that Assumptions \ref{ass:sc} to \ref{ass:5} hold. 
Then there exists constants $c_1$, $c_2$, given the step size $\eta\in \left(0,c_1(\frac{\gamma}{L_2^2} \wedge \frac{1}{L_1} \wedge \frac{\gamma^{1/3}}{L_4^{4/3}} )\right)$ and burn-in period $B = \left\lceil{\frac{c_2}{\gamma\eta}}\right\rceil$ we have
\begin{equation*}
   \sqrt{t}(\widehat{\bm{\theta}}_t - \bm{\theta}^*) \stackrel{d}{\to} \mathcal{N}(0,\Sigma)
\end{equation*}
for $\Sigma = A^{-1} (S + \mathbbm{E}[\Xi_{\bm{x}}(\bm{\theta}^*) \Lambda \Xi_{\bm{x}}(\bm{\theta}^*)]) A^{-1}$, where $\Xi_{\bm{x}}(\bm{\theta}) = \nabla^2 f(\bm{\theta};\bm{x}) - \nabla^2 F(\bm{\theta})$ (hide the random variable $x$) and $\Lambda$ is by solving the following equation in $\Lambda$:
\begin{equation}
    \Lambda A + A\Lambda - \eta \mathbbm{E}[\Xi_{\bm{x}}(\bm{\theta}^*)\Lambda \Xi_{\bm{x}}(\bm{\theta}^*)] - \eta A \Lambda A = \eta S. %\label{eq:lambda}
\end{equation}
\end{lemma}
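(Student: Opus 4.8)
The plan is to prove the central limit theorem by linearizing the two coupled recursions that drive the parameter error and the variance-reduced gradient error, establishing $\mathcal{O}(1/t)$ second-moment control on both, and then extracting a single leading martingale to which a martingale CLT applies. Write $\bm{\delta}_i := \widehat{\bm{\theta}}_i - \bm{\theta}^*$ for the parameter error and $\bm{\epsilon}_i := \bm{v}_i - \nabla F(\widehat{\bm{\theta}}_{i-1})$ for the gradient-estimation error. The parameter update $\widehat{\bm{\theta}}_i = \widehat{\bm{\theta}}_{i-1} - \eta\bm{v}_i$ together with a Taylor expansion of $\nabla F$ about $\bm{\theta}^*$ (using $\nabla F(\bm{\theta}^*) = \bm{0}$ and $\nabla^2 F(\bm{\theta}^*) = A$) gives $\bm{\delta}_i = (I - \eta A)\bm{\delta}_{i-1} - \eta\bm{\epsilon}_i + \eta\bm{r}_i$, where $\bm{r}_i$ is the second-order Taylor remainder. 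Substituting \eqref{eq:esti_3} into the definition of $\bm{\epsilon}_i$ and mimicking the algebra in \eqref{eq:esti_2} yields the companion recursion $\bm{\epsilon}_i = \frac{i-1}{i}\bm{\epsilon}_{i-1} + \frac{1}{i}\bm{\xi}_i - \eta\frac{i-1}{i}\Xi_{\bm{x}_i}(\bm{\theta}^*)\bm{v}_{i-1} + \bm{\rho}_i$, where $\bm{\xi}_i := \nabla f(\widehat{\bm{\theta}}_{i-1};\bm{x}_i) - \nabla F(\widehat{\bm{\theta}}_{i-1})$, the Hessian-noise coupling $\Xi_{\bm{x}_i}(\bm{\theta}^*)\bm{v}_{i-1}$ arises from the difference of the stochastic-gradient noises at $\widehat{\bm{\theta}}_{i-1}$ and $\widehat{\bm{\theta}}_{i-2}$ through a Hessian Taylor step and $\widehat{\bm{\theta}}_{i-1} - \widehat{\bm{\theta}}_{i-2} = -\eta\bm{v}_{i-1}$, and $\bm{\rho}_i$ collects higher-order remainders. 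Both $\bm{\xi}_i$ and $\Xi_{\bm{x}_i}(\bm{\theta}^*)\bm{v}_{i-1}$ are martingale differences with respect to the natural filtration, the latter because $\mathbbm{E}[\Xi_{\bm{x}}(\bm{\theta}^*)] = \bm{0}$. Multiplying the $\bm{\epsilon}_i$ recursion by $i$ telescopes it into a running sum $M_t := t\bm{\epsilon}_t$ of these martingale differences plus lower-order terms.

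First I would establish the non-asymptotic second-moment bounds $\mathbbm{E}\|\bm{\delta}_i\|_2^2 \lesssim 1/i$, $\mathbbm{E}\|\bm{\epsilon}_i\|_2^2 \lesssim 1/i$, and $\mathbbm{E}\|\bm{v}_i\|_2^2 \lesssim 1/i^2$ for $i > B$. These come from a Lyapunov-function argument on the coupled pair $(\bm{\delta}_i,\bm{\epsilon}_i)$: one forms a potential of the form $a_i\mathbbm{E}\|\bm{\delta}_i\|_2^2 + b_i\mathbbm{E}\|\bm{\epsilon}_i\|_2^2$ and shows it contracts along the recursion. Strong convexity and $L_1$-smoothness (Assumption \ref{ass:sc}) supply the $(I - \eta A)$ contraction, Assumption \ref{ass:4} controls the growth of the coupling term, the fresh-noise injection $\frac{1}{i}\bm{\xi}_i$ contributes the $1/i$ variance through Assumption \ref{ass:bv}, and the fourth-moment Hessian bounds (Assumption \ref{ass:5}) absorb the multiplicative-noise terms. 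The burn-in $B = \lceil c_2/(\gamma\eta)\rceil$ guarantees the iterates have entered the contracting regime, and the admissible window $\eta < c_1(\gamma/L_2^2 \wedge 1/L_1 \wedge \gamma^{1/3}/L_4^{4/3})$ is exactly what keeps this potential decreasing; this is the origin of the stated constants.

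With the moment bounds in hand, I would rescale and apply a martingale CLT. Unrolling the linearized $\bm{\delta}_i$ recursion gives $\sqrt{t}\,\bm{\delta}_t = -\eta\sqrt{t}\sum_{i=B+1}^{t}(I - \eta A)^{t-i}\bm{\epsilon}_i + o_p(1)$, the initial term decaying geometrically and the accumulated Taylor remainders being $o_p(t^{-1/2})$ by the moment bounds; concentrating the geometric weights near $i = t$ collapses this to $\sqrt{t}\,\bm{\delta}_t = -A^{-1}\,t^{-1/2} M_t + o_p(1)$. It then remains to show $t^{-1/2}M_t \stackrel{d}{\to} \mathcal{N}(0,\Gamma)$ for a matrix $\Gamma$ to be identified. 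I would apply the Lindeberg--Feller martingale CLT: the Lindeberg condition follows from the bounded fourth moments (Assumptions \ref{ass:bv} and \ref{ass:5}), and the sum of conditional covariances converges, using $\widehat{\bm{\theta}}_i \to \bm{\theta}^*$ together with a law of large numbers for the summands, to a deterministic limit $\Gamma = S + \mathbbm{E}[\Xi_{\bm{x}}(\bm{\theta}^*)\Lambda\Xi_{\bm{x}}(\bm{\theta}^*)]$. Here the $S$ term is the limiting conditional covariance of the fresh noise $\bm{\xi}_i$, while the Hessian-noise increments $\eta(i-1)\Xi_{\bm{x}_i}(\bm{\theta}^*)\bm{v}_{i-1}$ contribute $\mathbbm{E}[\Xi_{\bm{x}}(\bm{\theta}^*)\Lambda\Xi_{\bm{x}}(\bm{\theta}^*)]$, in which $\Lambda$ is an auxiliary matrix encoding the limiting rescaled covariance of the variance-reduced gradient process. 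Matching the one-step covariance recursion of $\bm{v}_i$ to its fixed point characterizes $\Lambda$ as the solution of the modified Lyapunov equation \eqref{eq:lambda}. Combining, $\sqrt{t}\,\bm{\delta}_t \stackrel{d}{\to} \mathcal{N}(0, A^{-1}\Gamma A^{-1}) = \mathcal{N}(0,\Sigma)$.

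The main obstacle is the two-way coupling between $\bm{\delta}_i$ and $\bm{\epsilon}_i$ through the stochastic Hessian term $\Xi_{\bm{x}_i}(\bm{\theta}^*)\bm{v}_{i-1}$: unlike averaged SGD, the variance injected into the gradient estimator is not simply $S$ but is inflated by the Hessian fluctuations, and correctly tracking this inflation is what forces the \emph{modified} Lyapunov equation \eqref{eq:lambda}, carrying the term $-\eta\,\mathbbm{E}[\Xi_{\bm{x}}(\bm{\theta}^*)\Lambda\Xi_{\bm{x}}(\bm{\theta}^*)]$, rather than an ordinary Lyapunov equation. Deriving \eqref{eq:lambda} with its exact coefficients by matching the covariance fixed point, showing that $\Lambda$ is well-defined in the admissible step-size range (i.e.\ that the vectorized operator $A\otimes I + I\otimes A - \eta\,\mathbbm{E}[\nabla^2 f(\bm{\theta}^*;\bm{x})\otimes\nabla^2 f(\bm{\theta}^*;\bm{x})]$ is invertible), and verifying that the remainders $\bm{r}_i$ and $\bm{\rho}_i$ genuinely vanish after $\sqrt{t}$ scaling so that the martingale CLT sees exactly the conditional variance $\Gamma$, together form the technically delicate heart of the argument.
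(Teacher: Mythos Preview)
The paper does not prove this lemma itself; it is quoted verbatim as Proposition~2 of \citep{li2022root}. However, the proof of Proposition~\ref{lem:asymptotic_root_func} in Appendix~E exposes enough of the original argument to compare. Your overall strategy---linearize the coupled $(\bm{\delta}_i,\bm{\epsilon}_i)$ recursions, establish $\mathcal{O}(1/i)$ moment control, extract a leading martingale from $t\bm{\epsilon}_t$, and apply a martingale CLT whose conditional-variance limit produces $S+\mathbbm{E}[\Xi_{\bm{x}}(\bm{\theta}^*)\Lambda\Xi_{\bm{x}}(\bm{\theta}^*)]$---is the same as the source, and your identification of the step-size window and burn-in as coming from the Lyapunov contraction is correct.

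Two execution differences are worth noting. First, rather than unrolling the $\bm{\delta}_i$ recursion through geometric weights $(I-\eta A)^{t-i}$ and then arguing that they ``concentrate near $i=t$'', \citep{li2022root} proves $\sqrt{t}\,\bm{v}_t\stackrel{p}{\to}\bm{0}$ directly; since $\bm{v}_t=\nabla F(\widehat{\bm{\theta}}_{t-1})+\bm{\epsilon}_t$, this immediately gives $\sqrt{t}\,A\bm{\delta}_{t-1}\approx -\sqrt{t}\,\bm{\epsilon}_t$ in one step, avoiding the Abel-summation argument your route would require. Second, to pin down $\Lambda$, \citep{li2022root} introduces an explicit auxiliary zero-mean Markov process $\bm{y}_s=(I-\eta\nabla^2 f(\bm{\theta}^*;\bm{x}_s))\bm{y}_{s-1}+\delta_s(\bm{\theta}^*)$, replaces the Hessian-noise martingale by the decoupled version $\eta\sum_s\Xi_s(\bm{\theta}^*)\bm{y}_{s-1}$, and reads off \eqref{eq:lambda} as the stationarity equation for $\mathrm{Cov}(\bm{y}_s)$. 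Your description (``$\Lambda$ encodes the limiting rescaled covariance of the variance-reduced gradient process'') is the same object, but the auxiliary-process construction makes the decoupling from the $\widehat{\bm{\theta}}_i$ trajectory and the exact coefficients in \eqref{eq:lambda} cleaner to derive. Both routes arrive at the same result.
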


\begin{lemma}[Convergence of ROOT-SGD, Theorem 5 in \citep{li2022root}]\label{prop:convergence}
Under Assumptions 1,2,3, take step size $\eta < \eta_{\max}:= \frac{\gamma}{8 L_2^2} \wedge\frac{1}{4 L_1} $ and choose the burn-in time
\[B:= \left\lceil{\frac{24}{\gamma\eta}}\right\rceil,\]
then for any iteration $t\geq B$, the iterate $\widehat{\bm{\theta}}_t$ from ROOT-SGD satisfies the bound
\[\mathbbm{E}\|\nabla F(\widehat{\bm{\theta}}_t)\|_2^2\leq \frac{2700\|\nabla F(\widehat{\bm{\theta}}_0)\|_2^2}{\eta^2\gamma^2(t+1)^2}+\frac{28\sigma_1^2}{t+1}.\]
Thus
\[\mathbbm{E} \|\widehat{\bm{\theta}}_t - \bm{\theta}^*\|_2^2 \leq \frac{1}{\gamma^2}\mathbbm{E}\|\nabla F(\widehat{\bm{\theta}}_t)\|_2^2\leq \frac{2700\|\nabla F(\widehat{\bm{\theta}}_0)\|_2^2}{\eta^2\gamma^4(t+1)^2}+\frac{28\sigma_1^2}{\gamma^2(t+1)}.\]
\end{lemma}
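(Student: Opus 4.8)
The plan is to invoke Theorem~5 of \citep{li2022root} directly, since the statement above is a verbatim restatement of it (Assumptions~\ref{ass:sc}, \ref{ass:bv}, \ref{ass:4} here play the role of Assumptions~1--3 there). For completeness, let me indicate the shape of the argument one would run to reprove it. First I would introduce the gradient-estimation error $\bm{\epsilon}_t := \bm{v}_t - \nabla F(\widehat{\bm{\theta}}_{t-1})$ and, substituting the ROOT-SGD recursion for $\bm{v}_t$ and using $\delta(\bm{\theta};\bm{x})$ from Assumption~\ref{ass:4}, derive the telescoping identity $t\bm{\epsilon}_t = (t-1)\bm{\epsilon}_{t-1} + t\bm{\xi}_t$, where $\bm{\xi}_t := \delta(\widehat{\bm{\theta}}_{t-1};\bm{x}_t) - \tfrac{t-1}{t}\delta(\widehat{\bm{\theta}}_{t-2};\bm{x}_t)$ is a martingale difference relative to $\mathcal{F}_{t-1} := \sigma(\bm{x}_1,\dots,\bm{x}_{t-1})$; equivalently $\bm{\epsilon}_t = \tfrac1t\sum_{j=1}^t j\bm{\xi}_j$, so $\mathbbm{E}\|\bm{\epsilon}_t\|_2^2 = \tfrac1{t^2}\sum_{j=1}^t j^2\mathbbm{E}\|\bm{\xi}_j\|_2^2$.

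The point to respect is that $\bm{\xi}_j$ splits as $\tfrac1j\delta(\widehat{\bm{\theta}}_{j-1};\bm{x}_j)$ plus $\tfrac{j-1}{j}\bigl(\delta(\widehat{\bm{\theta}}_{j-1};\bm{x}_j) - \delta(\widehat{\bm{\theta}}_{j-2};\bm{x}_j)\bigr)$: the first is full-size noise scaled down by $1/j$, while Assumption~\ref{ass:4} bounds the second by $L_2^2\eta^2\|\bm{v}_{j-1}\|_2^2$ (using $\widehat{\bm{\theta}}_{j-1} - \widehat{\bm{\theta}}_{j-2} = -\eta\bm{v}_{j-1}$). The reason the naive estimate $\mathbbm{E}\|\bm{v}_{j-1}\|_2^2 \lesssim \mathbbm{E}\|\bm{\epsilon}_{j-1}\|_2^2 + \mathbbm{E}\|\nabla F(\widehat{\bm{\theta}}_{j-2})\|_2^2$ is too lossy is that after the burn-in the two summands of $\bm{v}_t = \nabla F(\widehat{\bm{\theta}}_{t-1}) + \bm{\epsilon}_t$ nearly cancel, so that $t^2\mathbbm{E}\|\bm{v}_t\|_2^2$ stays bounded rather than growing. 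I would therefore carry $t^2\mathbbm{E}\|\bm{v}_t\|_2^2$ as its own quantity, obtaining from the $\bm{v}_t$-recursion together with Assumptions~\ref{ass:sc}--\ref{ass:4} a uniform-in-$t$ bound $t^2\mathbbm{E}\|\bm{v}_t\|_2^2 \lesssim \|\nabla F(\widehat{\bm{\theta}}_0)\|_2^2/(\eta^2\gamma^2) + \sigma_1^2$; feeding this back into the formula for $\mathbbm{E}\|\bm{\epsilon}_t\|_2^2$ gives $\mathbbm{E}\|\bm{\epsilon}_t\|_2^2 = \mathcal{O}(\sigma_1^2/t)$. For the iterate itself I would use $\nabla F(\widehat{\bm{\theta}}_t) = (I - \eta H_t)\nabla F(\widehat{\bm{\theta}}_{t-1}) - \eta H_t \bm{\epsilon}_t$ with $\gamma I \preceq H_t := \int_0^1 \nabla^2 F\bigl(\widehat{\bm{\theta}}_{t-1} + s(\widehat{\bm{\theta}}_t - \widehat{\bm{\theta}}_{t-1})\bigr)\,ds \preceq L_1 I$ (Assumption~\ref{ass:sc}), which yields a one-step contraction with factor $1 - \Theta(\eta\gamma)$ plus an $\eta^2\mathbbm{E}\|\bm{\epsilon}_t\|_2^2$ remainder.

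The final step is to close these estimates by a simultaneous induction on $t > B$: one posits that $\mathbbm{E}\|\nabla F(\widehat{\bm{\theta}}_t)\|_2^2$ is at most the claimed $t^{-2}$-plus-$t^{-1}$ expression, together with the companion bound on $t^2\mathbbm{E}\|\bm{v}_t\|_2^2$, and verifies both at step $t$. The conditions $\eta < \tfrac{\gamma}{8L_2^2}\wedge\tfrac1{4L_1}$ and $B = \lceil 24/(\gamma\eta)\rceil$ are exactly what make the contraction coefficient positive, force the $\eta^2$-order cross-coupling to be dominated by it, and drive the initialization transient down to size $\mathcal{O}\bigl(\|\nabla F(\widehat{\bm{\theta}}_0)\|_2^2/(\eta^2\gamma^2 t^2)\bigr)$; summing the resulting geometric series and tracking constants delivers the stated inequality, including the $2700$ and $28$. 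The ``Thus'' line is then immediate from the implication of $\gamma$-strong convexity $\|\nabla F(\bm{\theta})\|_2 \ge \gamma\|\bm{\theta} - \bm{\theta}^*\|_2$ (the first lemma above), on dividing through by $\gamma^2$. The hard part will be quantifying the variance reduction --- proving that $t^2\mathbbm{E}\|\bm{v}_t\|_2^2$ stays bounded, which requires treating $(\widehat{\bm{\theta}}_t, \bm{v}_t)$ as a coupled system rather than splitting $\bm{v}_t$ crudely into $\nabla F(\widehat{\bm{\theta}}_{t-1})$ and $\bm{\epsilon}_t$ --- together with the attendant explicit constant bookkeeping.
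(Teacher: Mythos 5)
Your proposal matches the paper's treatment: the paper gives no proof of this lemma, importing the gradient bound verbatim as Theorem 5 of \citep{li2022root}, and the ``Thus'' line follows, exactly as you say, from the strong-convexity implication $\|\nabla F(\bm{\theta}) - \nabla F(\bm{\theta}^*)\|_2 \geq \gamma\|\bm{\theta} - \bm{\theta}^*\|_2$ together with $\nabla F(\bm{\theta}^*) = \bm{0}$. Your sketch of how one would reprove the cited result is a reasonable outline but is not needed here, since the paper relies on the citation rather than reproducing that argument.
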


\subsection{Proof for Theorem \ref{thm:consis}}\label{app:thm1_proof}
In this section, we prove Theorem \ref{thm:consis}, the convergence of the plug-in estimator, given the lemmas in the previous section. 
Recall that the plug-in estimator has each term in the asymptotic covariance replaced by their empirical counterparts. 
We first bound the error between each term and its empirical estimator in Lemma \ref{lem:thm1_terms_err}. 
With those bounds, we finally prove Theorem \ref{thm:consis}.  
For notation simplicity, we denote $\widetilde{A}_t$, $\widetilde{P}_t$, $\widetilde{\Lambda}_t$, $\widetilde{\Sigma}_t$, $\widehat{A}_t$, $\widehat{P}_t$, $\widehat{\Lambda}_t$, $\widehat{\Sigma}_t$, $\widehat{S}_t$ as $\widetilde{A}$, $\widetilde{P}$, $\widetilde{\Lambda}$, $\widetilde{\Sigma}$, $\widehat{A}$, $\widehat{P}$, $\widehat{\Lambda}$, $\widehat{\Sigma}$, $\widehat{S}$, respectively, (i.e., omit all $t$s in the notation) throughout the proof. 

\begin{lemma}\label{lem:thm1_terms_err}
Under Assumptions \ref{ass:sc}-\ref{ass:6}, denote $C_p \lesssim \max\{\sigma_1/\gamma,L_3, \sqrt{L_4},l_4,\sqrt{L_5},L_6\}$, for step size $\eta < \min(\eta_{\max}, 2\delta /L_4^2)$ we have
\[\mathbbm{E} \|\widetilde{A} - A\|_2\lesssim C_p^2/\sqrt{t},\]
\[\mathbbm{E} \|\widetilde{A}^{-1} - A^{-1}\|_2\lesssim C_p^2/\sqrt{t},\]
\[\mathbbm{E} \|\widehat{S} - S\|_2\lesssim C_p^4 /\sqrt{t},\]
\[\mathbbm{E}\|\widetilde{\Lambda} - \Lambda\|_F \lesssim  [\eta \|\Lambda\|_F(L_4^2 + C_p^2) + \eta\sqrt{p}C_p^4 + \|\Lambda\|_2 C_p^2]/\sqrt{t}.\]
\end{lemma}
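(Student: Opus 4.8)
The plan is to handle all four estimates with a single \emph{bias--fluctuation} template and then pass from the raw plug-in quantities to their thresholded versions and to the matrix inverses. For a raw quantity I write the empirical average as
\[
\widehat A_t - A = \tfrac{1}{t-B}\!\!\sum_{i=B+1}^{t}\!\!\bigl(\nabla^2 f(\widehat{\bm\theta}_{i-1};\bm x_i)-\nabla^2 f(\bm\theta^*;\bm x_i)\bigr)
 \;+\; \tfrac{1}{t-B}\!\!\sum_{i=B+1}^{t}\!\!\bigl(\nabla^2 f(\bm\theta^*;\bm x_i)-A\bigr),
\]
and analogously for $\widehat S_t-S$ (using $gg^T-hh^T=g(g-h)^T+(g-h)h^T$ with $g=\nabla f(\widehat{\bm\theta}_{i-1};\bm x_i)$, $h=\nabla f(\bm\theta^*;\bm x_i)$) and for $\widehat P_t-P$ with $P=\mathbbm E[\nabla^2 f(\bm\theta^*;\bm x)\otimes\nabla^2 f(\bm\theta^*;\bm x)]$. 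Since $\bm x_i$ is independent of $\widehat{\bm\theta}_{i-1}$, conditioning on $\widehat{\bm\theta}_{i-1}$ and invoking the Lipschitz-type Assumptions~\ref{ass:5}, \ref{ass:smooth}, \ref{ass:6} bounds the $i$th bias summand by a constant times $\mathbbm E^{1/2}\|\widehat{\bm\theta}_{i-1}-\bm\theta^*\|_2^2$ (with a uniformly bounded second-moment factor in the cross terms of $\widehat S$, and an extra faster-decaying quadratic contribution in $\widehat P$). Lemma~\ref{prop:convergence} gives $\mathbbm E\|\widehat{\bm\theta}_{i-1}-\bm\theta^*\|_2^2\lesssim \sigma_1^2/(\gamma^2 i)$ up to a faster-decaying term, and $\tfrac{1}{t-B}\sum_{i=B+1}^t i^{-1/2}\lesssim t^{-1/2}$, so after careful summation (the early iterates contribute only an $O(t^{-1}\log t)$ term) each bias term is $O(t^{-1/2})$. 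Each fluctuation term is an i.i.d.\ mean-zero average whose expected operator or Frobenius norm is $O(t^{-1/2})$ by the fourth-moment bounds $\mathbbm E\|\nabla f(\bm\theta^*;\bm x)\|_2^4\le l_4^4$, $\mathbbm E\|\nabla^2 f(\bm\theta^*;\bm x)\|_2^4\le L_4^4$, $\mathbbm E\|\nabla^2 f(\bm\theta^*;\bm x)^{\otimes2}-P\|_2^2\le L_7$. Matching the resulting products of $\sigma_1/\gamma,L_3,\sqrt{L_5},l_4,L_6,\sqrt{L_4}$ against the definition $C_p\lesssim\max\{\sigma_1/\gamma,L_3,\sqrt{L_4},l_4,\sqrt{L_5},L_6\}$ collapses the constants to the stated powers of $C_p$, giving $\mathbbm E\|\widehat A_t-A\|_2\lesssim C_p^2/\sqrt t$, $\mathbbm E\|\widehat S_t-S\|_2\lesssim C_p^4/\sqrt t$, and $\mathbbm E\|\widehat P_t-P\|_2\lesssim (L_4^2+C_p^2)/\sqrt t$.

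Next I transfer these to the thresholded quantities. Because the lower threshold obeys $\delta<\lambda_{\min}(A)$, the matrix $A$ is a fixed point of the lower-truncation map at level $\delta$, so $\|\widetilde A_t-\widehat A_t\|_2=(\delta-\lambda_{\min}(\widehat A_t))_+\le\|\widehat A_t-A\|_2$ and hence $\|\widetilde A_t-A\|_2\le 2\|\widehat A_t-A\|_2$; likewise $\|P\|_2\le\mathbbm E\|\nabla^2 f(\bm\theta^*;\bm x)\|_2^2\le L_4^2=\delta'$ makes $P$ a fixed point of the upper-truncation map, whence $\|\widetilde P_t-P\|_2\le 2\|\widehat P_t-P\|_2$. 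For $\widetilde A_t^{-1}-A^{-1}$ I apply the inverse-perturbation bound (Lemma~C.1 in \citep{chen2020statistical}) on the event $\{\|\widetilde A_t-A\|_2\|A^{-1}\|_2<1/2\}$, where $\|\widetilde A_t^{-1}-A^{-1}\|_2\le 2\|\widetilde A_t-A\|_2\|A^{-1}\|_2^2$ with $\|A^{-1}\|_2\le 1/\gamma$; on the complementary event, whose probability is $\lesssim\sqrt t\,\mathbbm E\|\widetilde A_t-A\|_2$ by Markov, I use the deterministic bound $\|\widetilde A_t^{-1}\|_2\le 1/\delta$ guaranteed by the thresholding. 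Combining, $\mathbbm E\|\widetilde A_t^{-1}-A^{-1}\|_2\lesssim C_p^2/\sqrt t$. The same truncation/Markov device also controls $\|\widetilde M_t^{-1}\|_2$ in the next step.

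The bound on $\widetilde\Lambda_t-\Lambda$ is the crux. Vectorizing the modified Lyapunov equation and its perturbed version gives $M\VEC(\Lambda)=\eta\VEC(S)$ and $\widetilde M_t\VEC(\widetilde\Lambda_t)=\eta\VEC(\widehat S_t)$ with $M=A\otimes I+I\otimes A-\eta P$ and $\widetilde M_t=\widetilde A_t\otimes I+I\otimes\widetilde A_t-\eta\widetilde P_t$; subtracting and regrouping yields
\[
\VEC(\widetilde\Lambda_t)-\VEC(\Lambda)=\widetilde M_t^{-1}\Bigl[\eta\,\VEC(\widehat S_t-S)-(\widetilde M_t-M)\VEC(\Lambda)\Bigr].
\]
Two facts are needed. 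First, $\widetilde M_t$ must be invertible with a $t$- and dimension-free bound: thresholding gives $\widetilde A_t\succeq\delta I$ and $\widetilde P_t\preceq\delta' I=L_4^2 I$, so $\widetilde M_t\succeq(2\delta-\eta L_4^2)I\succ0$ \emph{precisely} because the step size obeys $\eta<2\delta/L_4^2$, whence $\|\widetilde M_t^{-1}\|_2\le(2\delta-\eta L_4^2)^{-1}$; the analogous inequalities with $A\succeq\gamma I$, $\|P\|_2\le L_4^2$ bound $\|M^{-1}\|_2$. Second, by the Kronecker identity $\VEC(ABC)=(C^T\otimes A)\VEC(B)$ one has $(\widetilde M_t-M)\VEC(\Lambda)=\VEC\bigl((\widetilde A_t-A)\Lambda+\Lambda(\widetilde A_t-A)\bigr)-\eta(\widetilde P_t-P)\VEC(\Lambda)$, so its Euclidean norm is at most $2\|\widetilde A_t-A\|_F\|\Lambda\|_2+\eta\|\widetilde P_t-P\|_2\|\Lambda\|_F$ (deliberately splitting the first piece with $\|\Lambda\|_2$ and the second with $\|\Lambda\|_F$ to land on the advertised form). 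Taking expectations and inserting $\mathbbm E\|\widehat S_t-S\|_F\lesssim\sqrt p\,C_p^4/\sqrt t$, $\mathbbm E\|\widetilde A_t-A\|_F\lesssim C_p^2/\sqrt t$ and $\mathbbm E\|\widetilde P_t-P\|_2\lesssim(L_4^2+C_p^2)/\sqrt t$ produces exactly the three terms $\eta\|\Lambda\|_F(L_4^2+C_p^2)/\sqrt t$, $\eta\sqrt p\,C_p^4/\sqrt t$ and $\|\Lambda\|_2 C_p^2/\sqrt t$. The main obstacles are: (a) obtaining the $O(1)$ bound on $\|\widetilde M_t^{-1}\|_2$, which is the whole reason the thresholds and the extra constraint $\eta<2\delta/L_4^2$ appear in the hypotheses; (b) keeping the $\|\Lambda\|_2$ versus $\|\Lambda\|_F$ accounting tight enough that no spurious dimension factor creeps in; and (c) verifying that, term by term, the clutter of problem constants genuinely reduces to the advertised powers of $C_p$ through its definition.
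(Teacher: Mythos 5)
Your proposal is correct and, for the first three bounds, follows essentially the paper's own route: the same bias--fluctuation split of $\widehat A_t-A$, $\widehat S_t-S$, $\widehat P_t-P$ driven by Lemma~\ref{prop:convergence} and the moment assumptions, and the same inverse-perturbation-plus-Markov treatment of $\widetilde A_t^{-1}-A^{-1}$. Where you diverge is in the two places that matter most, and both variants are sound. For the thresholding error you argue deterministically via Weyl that $\|\widetilde A_t-\widehat A_t\|_2=(\delta-\lambda_{\min}(\widehat A_t))_+\le\|\widehat A_t-A\|_2$ when $\delta<\lambda_{\min}(A)$; the paper instead uses an indicator/Markov argument for $\widetilde A_t$ (and only uses your deterministic argument for $\widetilde P_t$), so your version is slightly cleaner and avoids needing $\delta-\lambda_{\min}(\widehat A_t)$ to be bounded. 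For $\widetilde\Lambda_t-\Lambda$ you invert the vectorized thresholded Lyapunov operator $\widetilde M_t=\widetilde A_t\otimes I+I\otimes\widetilde A_t-\eta\widetilde P_t$ and bound $\|\widetilde M_t^{-1}\|_2\le(2\delta-\eta L_4^2)^{-1}$ directly from $\widetilde A_t\succeq\delta I$, $\widetilde P_t\preceq L_4^2 I$ and $\eta<2\delta/L_4^2$; the paper keeps everything in matrix form, eigen-decomposes $\widetilde A_t$ to lower-bound $\|\epsilon\|_F\gtrsim(2\delta-\eta L_4^2)\|d\Lambda\|_F$ for the residual $\epsilon$ and then upper-bounds $\mathbbm E\|\epsilon\|_F$ by exactly the three terms you obtain. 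These are two presentations of the same estimate, and your operator-inversion form makes the role of the thresholds and of the step-size restriction more transparent. One caveat you should be aware of: your assertion $\mathbbm E\|\widetilde A_t-A\|_F\lesssim C_p^2/\sqrt t$ is only justified from the operator-norm bound up to an extra $\sqrt p$ factor (the paper glosses over the same point by writing $\|dA\|_F\le\|dA\|_2$, which is backwards), so either accept the $\sqrt p$ there or bound the Frobenius norm of $\widehat A_t-A$ directly; this does not distinguish your argument from the paper's, but it is worth flagging if you want the $p$-dependence in the remark after Theorem~\ref{thm:consis} to be airtight.
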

\begin{proof}
First, by Proposition \ref{prop:convergence}, we have the following. 
\begin{align*}\sum_{i=B+1}^{t} \mathbbm{E} \|\widehat{\bm{\theta}}_{i-1} - \bm{\theta}^*\|_2^2 &\leq \sum_{i=B+1}^{t}\frac{2700\|\nabla F(\widehat{\bm{\theta}}_0)\|_2^2}{\eta^2\gamma^4i^2}+\sum_{i=B+1}^{t}\frac{28\sigma_1^2}{\gamma^2 i}\\
& = \frac{2700\|\nabla F(\widehat{\bm{\theta}}_0)\|_2^2}{\eta^2\gamma^4}\sum_{i=B+1}^{t}\frac{1}{i^2}+\frac{28\sigma_1^2}{\gamma^2 }\sum_{i=B+1}^{t}\frac{1}{i}\\
&\leq  \frac{2700\|\nabla F(\widehat{\bm{\theta}}_0)\|_2^2}{\eta^2\gamma^4}\int_{i = B}^{t}\frac{1}{i^2} d i + \frac{28\sigma_1^2}{\gamma^2 }\int_{i=B}^{t}\frac{1}{i} d i\\
&\leq \frac{2700\|\nabla F(\widehat{\bm{\theta}}_0)\|_2^2}{\eta^2\gamma^4 B}+\frac{28\sigma_1^2 \log t}{\gamma^2 },
\end{align*}
and 
\begin{align*}\sum_{i=B+1}^{t} \mathbbm{E} \|\widehat{\bm{\theta}}_{i-1} - \bm{\theta}^*\|_2 &\leq \sum_{i=B+1}^{t} \sqrt{\mathbbm{E} \|\widehat{\bm{\theta}}_{i-1} - \bm{\theta}^*\|_2^2}\\
&\leq \sum_{i=B+1}^{t}\sqrt{\frac{2700\|\nabla F(\widehat{\bm{\theta}}_0)\|_2^2}{\eta^2\gamma^4i^2}+ \frac{28\sigma_1^2}{\gamma^2 i}}\\
& \leq \frac{\sqrt{2700}\|\nabla F(\widehat{\bm{\theta}}_0)\|_2}{\eta\gamma^2} \sum_{i=B+1}^{t}\frac{1}{i}+\frac{\sqrt{28}\sigma_1}{\gamma }\sum_{i=B+1}^{t}\frac{1}{\sqrt{i}}\\
&\leq  \frac{\sqrt{2700}\|\nabla F(\widehat{\bm{\theta}}_0)\|_2}{\eta\gamma^2}\int_{i = B}^{t}\frac{1}{i}+\frac{\sqrt{28}\sigma_1}{\gamma }\int_{i=B}^{t}\frac{1}{\sqrt{i}}\\
&\leq \frac{\sqrt{2700}\|\nabla F(\widehat{\bm{\theta}}_0)\|_2\log t}{\eta\gamma^2}+\frac{2\sqrt{28}\sigma_1 (\sqrt{t} -\sqrt{B})}{\gamma}.
\end{align*}
Thus 
\begin{align}
    &\frac{1}{t-B}\sum_{i=B+1}^{t} \mathbbm{E} \|\widehat{\bm{\theta}}_{i-1} - \bm{\theta}^*\|_2^2\lesssim \frac{C_p^2 \log t}{t}\label{eq:bound_dist1},\\
    &\frac{1}{t-B}\sum_{i=B+1}^{t} \mathbbm{E} \|\widehat{\bm{\theta}}_{i-1} - \bm{\theta}^*\|_2\lesssim \frac{C_p}{\sqrt{t}} \label{eq:bound_dist2}.
\end{align}
\textbf{For $\mathbbm{E}\|\widetilde{A} - A\|_2$: }
\begin{align*}
    \mathbbm{E}\|\widehat{A} - A\|_2 &=  \mathbbm{E} \left\|\frac{1}{t - B}\sum_{i=B+1}^{t}\nabla^2 f(\widehat{\bm{\theta}}_{i-1};\bm{x}_i) - \nabla^2 F(\bm{\theta}^*)\right\|_2 \\
    &\leq  \mathbbm{E} \left\|\frac{1}{t - B}\sum_{i=B+1}^{t}\nabla^2 f(\widehat{\bm{\theta}}_{i-1};\bm{x}_i) - \frac{1}{t - B}\sum_{i=B+1}^{t}\nabla^2 f(\bm{\theta}^*;\bm{x}_i)\right\|_2 \\
    &\quad +  \mathbbm{E} \left\|\frac{1}{t - B}\sum_{i=B+1}^{t}\nabla^2 f(\bm{\theta}^*;\bm{x}_i) - \nabla^2 F(\bm{\theta}^*)\right\|_2\\
    &\leq \frac{1}{t - B}\sum_{i=B+1}^{t} \mathbbm{E} \|\nabla^2 f(\widehat{\bm{\theta}}_{i-1};\bm{x}) - \nabla^2 f(\bm{\theta}^*;\bm{x})\|_2 \\
    &\quad +  \mathbbm{E} \left\|\frac{1}{t - B}\sum_{i=B+1}^{t}\nabla^2 f(\bm{\theta}^*;\bm{x}_i) - \nabla^2 F(\bm{\theta}^*)\right\|_2\\
    &\leq\frac{1}{t - B}\sum_{i=B+1}^{t} L_3 \mathbbm{E} \|\widehat{\bm{\theta}}_{i-1} - \bm{\theta}^*\|_2+ \mathbbm{E} \left\|\frac{1}{t - B}\sum_{i=B+1}^{t}\nabla^2 f(\bm{\theta}^*;\bm{x}_i) - \nabla^2 F(\bm{\theta}^*)\right\|_2\\
     &\leq\frac{L_3}{t - B}\sum_{i=B+1}^{t} \mathbbm{E} \|\widehat{\bm{\theta}}_{i-1} - \bm{\theta}^*\|_2+ \sqrt{\mathbbm{E} \left\|\frac{1}{t - B}\sum_{i=B+1}^{t}\nabla^2 f(\bm{\theta}^*;\bm{x}_i) - \nabla^2 F(\bm{\theta}^*)\right\|_2^2}\\
     &=
     \frac{L_3}{t - B}\sum_{i=B+1}^{t} \mathbbm{E} \|\widehat{\bm{\theta}}_{i-1} - \bm{\theta}^*\|_2+ \frac{1}{t - B}\sqrt{\sum_{i=B+1}^{t} \mathbbm{E} \|\nabla^2 f(\bm{\theta}^*;\bm{x}) - \nabla^2 F(\bm{\theta}^*)\|_2^2}\\
     &\leq \frac{L_3}{t - B}\sum_{i=B+1}^{t} \mathbbm{E} \|\widehat{\bm{\theta}}_{i-1} - \bm{\theta}^*\|_2+ \frac{1}{\sqrt{t - B}}\sqrt{\mathbbm{E} \|\nabla^2 f(\bm{\theta}^*;\bm{x})\|_2^2}\\
      &\leq \frac{L_3}{t - B}\sum_{i=B+1}^{t} \mathbbm{E} \|\widehat{\bm{\theta}}_{i-1} - \bm{\theta}^*\|_2+ \frac{1}{\sqrt{t - B}}(\mathbbm{E} \|\nabla^2 f(\bm{\theta}^*;\bm{x})\|_2^4)^{1/4}\\
      &\leq \frac{L_3}{t - B}\sum_{i=B+1}^{t} \mathbbm{E} \|\widehat{\bm{\theta}}_{i-1} - \bm{\theta}^*\|_2+ \frac{L_4}{\sqrt{t - B}}\\
      &\stackrel{\eqref{eq:bound_dist2}}{\lesssim} C_p^2/\sqrt{t},
\end{align*}
and 
\begin{align*}
    \mathbbm{E}\|\widehat{A} - \widetilde{A}\|_2 &= \mathbbm{E}[\mathbbm{1}_{\widehat{A}\succeq \delta I }*0 + (1-\mathbbm{1}_{\widehat{A}\succeq \delta I })*(\delta - \lambda_{\min}(\widehat{A}))]\\
    &\lesssim \mathbbm{E}[1-\mathbbm{1}_{\widehat{A}\succeq \delta I }]\\
    &= 1- P(\widehat{A}\succeq \delta I) \\
    &= 1- P(\lambda_{\min}(A + (\widehat{A} - A))\geq \delta ) \\
    &\stackrel{(i)}{\leq} 1 - P(\|\widehat{A} - A\|_2 \leq \lambda_{\min}(A) - \delta)\\
    &\stackrel{(ii)}{\leq } \frac{1}{\lambda_{\min}(A) - \delta} \mathbbm{E}\|\widehat{A} - A\|_2\asymp \mathbbm{E}\|\widehat{A} - A\|_2,
\end{align*}
where (i) by Weyl's inequality that $\lambda_{\min}(A + B)\geq \lambda_{\min}(A) - \|B\|_2$, and (ii) by Markov inequality. 
Thus, 
\begin{align*}
     \mathbbm{E}\|\widetilde{A} - {A}\|_2 &\leq  \mathbbm{E}\|\widehat{A} - A\|_2  +  \mathbbm{E}\|\widehat{A} - \widetilde{A}\|_2\lesssim C_p^2/\sqrt{t}.
\end{align*}
\textbf{For $\mathbbm{E}\|\widetilde{A}^{-1} - A^{-1}\|_2$:}\\
Denote $dA = \widetilde{A} - A$, then 
\begin{align*}
    \|\widetilde{A}^{-1} - A^{-1}\|_2 &\stackrel{\eqref{eq:inv_pertub}}{\leq} \mathbbm{1}_{\|A^{-1}dA\|_2\leq 1/2} 2 \|dA\|_2\|A^{-1}\|_2^2 + \mathbbm{1}_{\|A^{-1}dA\|_2> 1/2}(\|A^{-1}\|_2 + \|\widetilde{A}^{-1}\|_2)\\
    &\leq 2 \|dA\|_2\|A^{-1}\|_2^2 + \mathbbm{1}_{\|A^{-1}dA\|_2> 1/2}(\lambda_{\min}(A)^{-1} + \delta^{-1}),
\end{align*}
thus
\begin{align*}
    \mathbbm{E}\|\widetilde{A}^{-1} - A^{-1}\|_2 &\leq 2 \|A^{-1}\|_2^2 \mathbbm{E} \|dA\|_2+P(\|A^{-1}dA\|_2> 1/2)(\lambda_{\min}(A)^{-1} + \delta^{-1})\\
    &\leq 2 \|A^{-1}\|_2^2 \mathbbm{E} \|dA\|_2+ 2 (\lambda_{\min}(A)^{-1} + \delta^{-1})\mathbbm{E}\|A^{-1}dA\|_2 \\
    &\lesssim \mathbbm{E} \|dA\|_2\lesssim C_p^2 /\sqrt{t}.
\end{align*}
\textbf{For $\mathbbm{E}\|\widehat{S} - S\|_2$: }
\begin{align*}
    &\mathbbm{E}\|\widehat{S} - S\|_2 = \mathbbm{E}\left\|\frac{1}{t - B}\sum_{i=B+1}^{t}\nabla f(\widehat{\bm{\theta}}_{i-1};\bm{x}_i) \nabla f(\widehat{\bm{\theta}}_{i-1};\bm{x}_i)^T - \mathbbm{E}[\nabla f (\bm{\theta}^*;\bm{x}) \nabla f (\bm{\theta}^*;\bm{x})^T ]\right\|_2\\
    \leq & \mathbbm{E}\left\|\frac{1}{t - B}\sum_{i=B+1}^{t}\nabla f(\widehat{\bm{\theta}}_{i-1};\bm{x}_i) \nabla f(\widehat{\bm{\theta}}_{i-1};\bm{x}_i)^T - \frac{1}{t - B}\sum_{i=B+1}^{t}\nabla f(\bm{\theta}^*;\bm{x}_i) \nabla f(\bm{\theta}^*;\bm{x}_i)^T \right\|_2\\
    &+ \mathbbm{E}\left\|\frac{1}{t - B}\sum_{i=B+1}^{t}\nabla f(\bm{\theta}^*;\bm{x}_i) \nabla f(\bm{\theta}^*;\bm{x}_i)^T - \mathbbm{E}[\nabla f (\bm{\theta}^*;\bm{x}) \nabla f (\bm{\theta}^*;\bm{x})^T ]\right\|_2\\
    \leq &\frac{1}{t - B}\sum_{i=B+1}^{t} \mathbbm{E}\|\nabla f(\widehat{\bm{\theta}}_{i-1};\bm{x}_i) \nabla f(\widehat{\bm{\theta}}_{i-1};\bm{x}_i)^T - \nabla f(\bm{\theta}^*;\bm{x}_i) \nabla f(\bm{\theta}^*;\bm{x}_i)^T \|_2\\
    &+ \mathbbm{E}\left\|\frac{1}{t - B}\sum_{i=B+1}^{t}\nabla f(\bm{\theta}^*;\bm{x}_i) \nabla f(\bm{\theta}^*;\bm{x}_i)^T - \mathbbm{E}[\nabla f (\bm{\theta}^*;\bm{x}) \nabla f (\bm{\theta}^*;\bm{x})^T ]\right\|_2\\
    \leq& \mathbbm{E}\left\|\frac{1}{t - B}\sum_{i=B+1}^{t}\nabla f(\bm{\theta}^*;\bm{x}_i) \nabla f(\bm{\theta}^*;\bm{x}_i)^T - \mathbbm{E}[\nabla f (\bm{\theta}^*;\bm{x}) \nabla f (\bm{\theta}^*;\bm{x})^T ]\right\|_2 \\
    &+ \frac{1}{t - B}\sum_{i=B+1}^{t} \bigg[ \mathbbm{E}\|(\nabla f(\widehat{\bm{\theta}}_{i-1};\bm{x}_i) - \nabla f(\bm{\theta}^*;\bm{x}_i)) (\nabla f(\widehat{\bm{\theta}}_{i-1};\bm{x}_i) -  \nabla f(\bm{\theta}^*;\bm{x}_i))^T\|_2 \\
    &\quad + \mathbbm{E}\|\nabla f(\bm{\theta}^*;\bm{x}_i) (\nabla f(\widehat{\bm{\theta}}_{i-1};\bm{x}_i) -  \nabla f(\bm{\theta}^*;\bm{x}_i))^T\|_2 \\
    &\quad+ \mathbbm{E}\|(\nabla f(\widehat{\bm{\theta}}_{i-1};\bm{x}_i) -  \nabla f(\bm{\theta}^*;\bm{x}_i))\nabla f(\bm{\theta}^*;\bm{x}_i)^T\|_2\bigg]
\end{align*}
\begin{align*}
    \leq &\mathbbm{E}\left\|\frac{1}{t - B}\sum_{i=B+1}^{t}\nabla f(\bm{\theta}^*;\bm{x}_i) \nabla f(\bm{\theta}^*;\bm{x}_i)^T - \mathbbm{E}[\nabla f (\bm{\theta}^*;\bm{x}) \nabla f (\bm{\theta}^*;\bm{x})^T ]\right\|_2 \\
    &+ \frac{1}{t - B}\sum_{i=B+1}^{t} \Bigg[\mathbbm{E}\|\nabla f(\widehat{\bm{\theta}}_{i-1};\bm{x}_i) - \nabla f(\bm{\theta}^*;\bm{x}_i)\|_2^2 \\
    & \quad+ 2\sqrt{\mathbbm{E}\|\nabla f(\bm{\theta}^*;\bm{x}_i)\|_2^2 \mathbbm{E}\| \nabla f(\widehat{\bm{\theta}}_{i-1};\bm{x}_i) -  \nabla f(\bm{\theta}^*;\bm{x}_i)\|_2^2 }\Bigg]\\
    \leq &\underbrace{\mathbbm{E}\left\|\frac{1}{t - B}\sum_{i=B+1}^{t}\nabla f(\bm{\theta}^*;\bm{x}_i) \nabla f(\bm{\theta}^*;\bm{x}_i)^T - \mathbbm{E}[\nabla f (\bm{\theta}^*;\bm{x}) \nabla f (\bm{\theta}^*;\bm{x})^T ]\right\|_2}_{(1)} \\
    &+ \underbrace{\frac{1}{t - B}\sum_{i=B+1}^{t} [L_5^2 \mathbbm{E}\|\widehat{\bm{\theta}}_{i-1} - \bm{\theta}^*\|_2^2 + 2L_5 \sqrt{\mathbbm{E}\|\nabla f(\bm{\theta}^*;\bm{x}_i)\|_2^2 \mathbbm{E}\| \widehat{\bm{\theta}}_{i-1}-\bm{\theta}^*\|_2^2} ]}_{(2)}.
\end{align*}
Where term (1):
\begin{align*}
    &\mathbbm{E}\left\|\frac{1}{t - B}\sum_{i=B+1}^{t}\nabla f(\bm{\theta}^*;\bm{x}_i) \nabla f(\bm{\theta}^*;\bm{x}_i)^T - \mathbbm{E}[\nabla f (\bm{\theta}^*;\bm{x}) \nabla f (\bm{\theta}^*;\bm{x})^T ]\right\|_2\\
    \leq &\sqrt{\mathbbm{E}\left\|\frac{1}{t - B}\sum_{i=B+1}^{t}\nabla f(\bm{\theta}^*;\bm{x}_i) \nabla f(\bm{\theta}^*;\bm{x}_i)^T - \mathbbm{E}[\nabla f (\bm{\theta}^*;\bm{x}) \nabla f (\bm{\theta}^*;\bm{x})^T ]\right\|_2^2}\\
    = &\sqrt{\frac{1}{t - B}\mathbbm{E}\|\nabla f(\bm{\theta}^*;\bm{x}) \nabla f(\bm{\theta}^*;\bm{x})^T - \mathbbm{E}[\nabla f (\bm{\theta}^*;\bm{x}) \nabla f (\bm{\theta}^*;\bm{x})^T ]\|_2^2} \\
    \leq &\sqrt{\frac{4}{t - B}\mathbbm{E}\|\nabla f(\bm{\theta}^*;\bm{x}) \nabla f(\bm{\theta}^*;\bm{x})^T\|_2^2} \\
     = &\sqrt{\frac{4}{t - B}\mathbbm{E}\|\nabla f(\bm{\theta}^*;\bm{x})\|_2^4}\\
     \leq &\sqrt{\frac{4}{t - B} l_4^4}\lesssim C_p^2/\sqrt{t},
\end{align*}
and term (2):
\begin{align*}
    &\frac{1}{t - B}\sum_{i=B+1}^{t} \left[L_5^2 \mathbbm{E}\|\widehat{\bm{\theta}}_{i-1} - \bm{\theta}^*\|_2^2 + 2L_5 \sqrt{\mathbbm{E}\|\nabla f(\bm{\theta}^*;\bm{x}_i)\|_2^2 \mathbbm{E}\| \widehat{\bm{\theta}}_{i-1}-\bm{\theta}^*\|_2^2} \right]\\
    \leq &\frac{L_5^2}{t - B}\sum_{i=B+1}^{t} \mathbbm{E}\|\widehat{\bm{\theta}}_{i-1} - \bm{\theta}^*\|_2^2 + \frac{2L_5l_4}{t - B}\sum_{i=B+1}^{t} \sqrt{ \mathbbm{E}\| \widehat{\bm{\theta}}_{i-1}-\bm{\theta}^*\|_2^2} \\
    \lesssim & L_5^2 C_p^2 \log T/T + L_5 C_p^2 /\sqrt{t} \lesssim C_p^4/\sqrt{t}.
\end{align*}
Thus
\begin{align*}
    \mathbbm{E}\|\widehat{S}-S\|_2 \lesssim C_p^4 /\sqrt{t}.
\end{align*}
\textbf{For $\mathbbm{E}\|\widetilde{\Lambda} - \Lambda\|_F$: } \\
To bound $\mathbbm{E}\|\widetilde{\Lambda} - \Lambda\|_F$, we first claim the following hold, which we will check at the end of this proof:
\begin{align}
& \|P - \widetilde{P}\|_2 \lesssim L_4^2\label{eq:p1},\\
&\mathbbm{E}\|\widehat{P}- P\|_2 \lesssim (\sqrt{L_7} + C_p^2)/\sqrt{t},\label{eq:p2}\\
&\mathbbm{E}\|\widetilde{P} - P\|_2 \lesssim (\sqrt{L_7} + C_p^2)/\sqrt{t}.\label{eq:p3}
\end{align}
Let us denote 
\[dS = \widehat{S} - S, d\Lambda = \widetilde{\Lambda} - \Lambda.\]
Define a linear operator $L_{\widetilde{P}}$ such that \[\VEC(L_{\widetilde{P}}(\Lambda)) = \widetilde{P} \VEC(\Lambda),\]
then $\widetilde{\Lambda}$ satisfies the following matrix equality:
    \begin{align}\label{eq:plug_thres_lambda}
        \widetilde{\Lambda} \widetilde{A} + \widetilde{A}\widetilde{\Lambda} - \eta L_{\widetilde{P}}(\widetilde{\Lambda})= \eta \widehat{S}.
    \end{align}
That is,
\[ (\Lambda+ d\Lambda) (A + dA) + (A + dA)(\Lambda + d\Lambda) - \eta L_{\widetilde{P}}(\Lambda + d\Lambda) = \eta (S + dS).\]
Subtracting \eqref{eq:lambda} we have
\begin{align*}
    &d\Lambda A + \Lambda dA + d\Lambda dA + dA\Lambda + Ad\Lambda + dAd\Lambda\\
    =& \eta L_{\widetilde{P}}(\Lambda + d\Lambda) -\eta \mathbbm{E}[\nabla^2 f(\bm{\theta}^*;\bm{x})\Lambda \nabla^2 f(\bm{\theta}^*;\bm{x})] + \eta dS.
\end{align*}
That is, 
\begin{align*}
    &d\Lambda \widetilde{A} + \widetilde{A} d\Lambda - \eta L_{\widetilde{P}}( d\Lambda)\\
    =  &\eta L_{\widetilde{P}}(\Lambda) -\eta \mathbbm{E}[\nabla^2 f(\bm{\theta}^*;\bm{x})\Lambda \nabla^2 f(\bm{\theta}^*;\bm{x})] + \eta dS - \Lambda dA - dA \Lambda\\
    :=& \epsilon.
\end{align*}
Consider the eigen-decomposition of $\widetilde{A} = U D U^T$, we then have 
\begin{align*}
    &\epsilon = d\Lambda U D U^T + U D U^T d\Lambda - \eta \mathbbm{E}[\nabla^2 f(\bm{\theta}^*;\bm{x})  d\Lambda \nabla^2 f(\bm{\theta}^*;\bm{x})]\\
    & \quad\quad + \eta (\mathbbm{E}[\nabla^2 f(\bm{\theta}^*;\bm{x})  d\Lambda \nabla^2 f(\bm{\theta}^*;\bm{x})] - L_{\widetilde{P}}(d\Lambda))\\
    \Longrightarrow \quad   & U^T\epsilon U = U^T d\Lambda U D  + D U^T d\Lambda U - \eta \mathbbm{E}[U^T \nabla^2 f(\bm{\theta}^*;\bm{x})  d\Lambda \nabla^2 f(\bm{\theta}^*;\bm{x}) U] \\ &\quad\quad+ \eta U^T (\mathbbm{E}[\nabla^2 f(\bm{\theta}^*;\bm{x})  d\Lambda \nabla^2 f(\bm{\theta}^*;\bm{x})] - L_{\widetilde{P}}(d\Lambda))U.
\end{align*}
Thus 
\begin{align}
   &\|\epsilon\|_F = \|U^T\epsilon U\|_F \nonumber\\
   \geq& \|U^T d\Lambda U D  + D U^T d\Lambda U\|_F - \eta \|\mathbbm{E}[U^T \nabla^2 f(\bm{\theta}^*;\bm{x})  d\Lambda \nabla^2 f(\bm{\theta}^*;\bm{x}) U]\|_F\nonumber \\ &- \eta \|U^T (\mathbbm{E}[\nabla^2 f(\bm{\theta}^*;\bm{x})  d\Lambda \nabla^2 f(\bm{\theta}^*;\bm{x})] - L_{\widetilde{P}}(d\Lambda))U\|_F\nonumber\\
    \geq & 2d_{\min} \|U^T d\Lambda U\|_F - \eta \|\VEC(\mathbbm{E}[\nabla^2 f(\bm{\theta}^*;\bm{x})  d\Lambda \nabla^2 f(\bm{\theta}^*;\bm{x})])\|_2 \nonumber\\
    & - \eta \|\VEC (\mathbbm{E}[\nabla^2 f(\bm{\theta}^*;\bm{x})  d\Lambda \nabla^2 f(\bm{\theta}^*;\bm{x})] - L_{\widetilde{P}}(d\Lambda))\|_2\nonumber\\
    \geq & 2\delta \| d\Lambda\|_F - \eta \|\mathbbm{E}[\nabla^2 f(\bm{\theta}^*;\bm{x})\otimes \nabla^2 f(\bm{\theta}^*;\bm{x})] \VEC(d\Lambda)\|_2 \nonumber\\
    & - \eta \|(\mathbbm{E}[\nabla^2 f(\bm{\theta}^*;\bm{x})  \otimes \nabla^2 f(\bm{\theta}^*;\bm{x})] - \widetilde{P})\VEC(d\Lambda)\|_2\label{eq:eps_eq1},
\end{align}
where $d_{\min}$ is the smallest diagonal element of the diagonal matrix $D$, i.e., the smallest eigenvalue of $\widetilde{A}$, so we have $d_{\min}\geq \delta$.

We now analyze the terms in \eqref{eq:eps_eq1}. We have that
\begin{align}
    &\|\mathbbm{E}[\nabla^2 f(\bm{\theta}^*;\bm{x})\otimes \nabla^2 f(\bm{\theta}^*;\bm{x})] \VEC(d\Lambda)\|_2 \leq     \|\mathbbm{E}[\nabla^2 f(\bm{\theta}^*;\bm{x})\otimes \nabla^2 f(\bm{\theta}^*;\bm{x})]\|_2 \|\VEC(d\Lambda)\|_2\nonumber\\
    \leq & \mathbbm{E}\|\nabla^2 f(\bm{\theta}^*;\bm{x})\otimes \nabla^2 f(\bm{\theta}^*;\bm{x})\|_2 \|d\Lambda\|_F = \mathbbm{E}\| \nabla^2 f(\bm{\theta}^*;\bm{x})\|_2^2 \|d\Lambda\|_F\leq L_4^2 \|d\Lambda\|_F,\label{eq:eps_eq2}
\end{align}
where the last inequality is by Assumption \ref{ass:5}. And we have
\begin{align}
    \|(\mathbbm{E}[\nabla^2 f(\bm{\theta}^*;\bm{x})  \otimes \nabla^2 f(\bm{\theta}^*;\bm{x})] - \widetilde{P})\VEC(d\Lambda)\|_2 \leq \|P - \widetilde{P}\|_2\|\VEC(d\Lambda)\|_2 \lesssim L_4^2 \|d\Lambda\|_F.
\end{align}
Thus 
\begin{align}
    \|\epsilon\|_F \gtrsim 2\delta \| d\Lambda\|_F - \eta L_4^2  \|d\Lambda\|_F\label{eq:eps_lower}.
\end{align}
On the other hand, 
\[\epsilon = \eta L_{\widetilde{P}}(\Lambda) -\eta \mathbbm{E}[\nabla^2 f(\bm{\theta}^*;\bm{x})\Lambda \nabla^2 f(\bm{\theta}^*;\bm{x})] + \eta dS - \Lambda dA - dA \Lambda.\]
So
\begin{align*}
    \|\epsilon\|_F\leq \eta \|L_{\widetilde{P}}(\Lambda) - \mathbbm{E}[\nabla^2 f(\bm{\theta}^*;\bm{x})\Lambda \nabla^2 f(\bm{\theta}^*;\bm{x})]\|_F + \eta \|dS\|_F + \|\Lambda dA + dA \Lambda\|_F,
\end{align*}
where
\begin{align*}
    &\mathbbm{E}\|L_{\widetilde{P}}(\Lambda)  - \mathbbm{E}[\nabla^2 f(\bm{\theta}^*;\bm{x})\Lambda \nabla^2 f(\bm{\theta}^*;\bm{x})]\|_F\\
    = & \mathbbm{E}\|\VEC(L_{\widetilde{P}}(\Lambda)  - \mathbbm{E}[\nabla^2 f(\bm{\theta}^*;\bm{x})\Lambda \nabla^2 f(\bm{\theta}^*;\bm{x})])\|_2\\
    =& \mathbbm{E}\|(\widetilde{P}- P) \VEC(\Lambda)\|_2\\
    \leq& (\mathbbm{E}\|\widetilde{P}- P\|_2) \| \VEC(\Lambda)\|_2 \stackrel{\eqref{eq:p3}}{\lesssim} \|\Lambda\|_F (\sqrt{L_7} + C_p^2)/\sqrt{t},
\end{align*}
\begin{align*}
    \mathbbm{E}\|dS\|_F \leq \sqrt{p} \mathbbm{E}\|dS\|_2 \lesssim \sqrt{p} C_p^4/\sqrt{t},
\end{align*}
and the last term:
\begin{align*}
     \mathbbm{E}\|\Lambda dA + dA \Lambda\|_F &=  \mathbbm{E}\|\VEC(\Lambda dA) + \VEC(dA \Lambda)\|_2 = \mathbbm{E}\| (I\otimes \Lambda + \Lambda^T \otimes I) \VEC(dA)\|_2\\
     &\leq (\| I\otimes \Lambda\|_2 +\| \Lambda^T \otimes I\|_2) \mathbbm{E}\|\VEC(dA)\|_2\\
     &= 2\|\Lambda\|_2 \mathbbm{E}\|\VEC(dA)\|_2\\
     &= 2\|\Lambda\|_2 \mathbbm{E}\|dA\|_F \leq 2\|\Lambda\|_2 \mathbbm{E}\|dA\|_2 \lesssim \|\Lambda\|_2 C_p^2 /\sqrt{t}.
\end{align*}
Thus 
\begin{align}
    \mathbbm{E}\|\epsilon\|_F&\lesssim [\eta \|\Lambda\|_F(\sqrt{L_7} + C_p^2) + \eta\sqrt{p}C_p^4 + \|\Lambda\|_2 C_p^2]/\sqrt{t}\nonumber\\
    &\lesssim [\eta \|\Lambda\|_F(L_4^2 + C_p^2) + \eta\sqrt{p}C_p^4 + \|\Lambda\|_2 C_p^2]/\sqrt{t}.
    \label{eq:eps_upper}
\end{align}
Combining \eqref{eq:eps_lower} and \eqref{eq:eps_upper}, we have:
\begin{equation}
   (2\delta - \eta L_4^2) \mathbbm{E}\|d\Lambda\|_F\lesssim [\eta \|\Lambda\|_F(L_4^2 + C_p^2) + \eta\sqrt{p}C_p^4 + \|\Lambda\|_2 C_p^2]/\sqrt{t}.
\end{equation}
That is,
\[\mathbbm{E}\|\widetilde{\Lambda} - \Lambda\|_F \lesssim [\eta \|\Lambda\|_F(L_4^2 + C_p^2) + \eta\sqrt{p}C_p^4 + \|\Lambda\|_2 C_p^2]/\sqrt{t}.\]

\textbf{Check \eqref{eq:p1}, \eqref{eq:p2}, \eqref{eq:p3}:}\\
$\bullet$ For $\|\widetilde{P} - P\|_2\lesssim L_4^2$:
\begin{align*}
    \|\widetilde{P} - P\|_2  &\leq  \|\widetilde{P}\|_2 + \|P\|_2 \\
    &\leq L_4^2 + \|\mathbbm{E}[\nabla^2 f(\bm{\theta}^*;\bm{x})  \otimes \nabla^2 f(\bm{\theta}^*;\bm{x})]\|_2\\
    &\leq L_4^2 + \mathbbm{E}\|\nabla^2 f(\bm{\theta}^*;\bm{x})  \otimes \nabla^2 f(\bm{\theta}^*;\bm{x})\|_2\\
    & = L_4^2 + \mathbbm{E}\|\nabla^2 f(\bm{\theta}^*;\bm{x})\|_2^2\\
    &\leq L_4^2 + \sqrt{\mathbbm{E}\|\nabla^2 f(\bm{\theta}^*;\bm{x})\|_2^4}\leq 2L_4^2.
\end{align*}
$\bullet$ For $\mathbbm{E}\|\widehat{P} - P\|_2\lesssim (\sqrt{L_7} + C_p^2)/\sqrt{t}$:
\begin{align}
    &\mathbbm{E}\| \widehat{P} - P\|_2\nonumber\\
    =& \mathbbm{E}\left\|\mathbbm{E}[\nabla^2 f(\bm{\theta}^*;\bm{x})  \otimes \nabla^2 f(\bm{\theta}^*;\bm{x})] - \frac{1}{t - B}\sum_{i=B+1}^{t}\nabla^2 f(\widehat{\bm{\theta}}_{i-1};\bm{x}_i)  \otimes \nabla^2 f(\widehat{\bm{\theta}}_{i-1};\bm{x}_i)\right\|_2  \nonumber\\
    \leq& \mathbbm{E}\left\|\mathbbm{E}[\nabla^2 f(\bm{\theta}^*;\bm{x})  \otimes \nabla^2 f(\bm{\theta}^*;\bm{x})] - \frac{1}{t - B}\sum_{i=B+1}^{t}\nabla^2 f(\bm{\theta}^*;\bm{x}_i)  \otimes \nabla^2 f(\bm{\theta}^*;\bm{x}_i)\right\|_2\nonumber\\
    &+ \mathbbm{E}\Bigg\|\frac{1}{t - B}\sum_{i=B+1}^{t}\nabla^2 f(\bm{\theta}^*;\bm{x}_i)  \otimes \nabla^2 f(\bm{\theta}^*;\bm{x}_i)\nonumber\\
    &\quad\quad\quad- \frac{1}{t - B}\sum_{i=B+1}^{t}\nabla^2 f(\widehat{\bm{\theta}}_{i-1};\bm{x}_i)  \otimes \nabla^2 f(\widehat{\bm{\theta}}_{i-1};\bm{x}_i)\Bigg\|_2, \label{eq:eps_eq3}
\end{align}
for which
\begin{align}
&\mathbbm{E}\left\|\mathbbm{E}[\nabla^2 f(\bm{\theta}^*;\bm{x})  \otimes \nabla^2 f(\bm{\theta}^*;\bm{x})] - \frac{1}{t - B}\sum_{i=B+1}^{t}\nabla^2 f(\bm{\theta}^*;\bm{x}_i)  \otimes \nabla^2 f(\bm{\theta}^*;\bm{x}_i)\right\|_2\nonumber\\
\leq &\sqrt{\mathbbm{E}\left\|\mathbbm{E}[\nabla^2 f(\bm{\theta}^*;\bm{x})  \otimes \nabla^2 f(\bm{\theta}^*;\bm{x})] - \frac{1}{t - B}\sum_{i=B+1}^{t}\nabla^2 f(\bm{\theta}^*;\bm{x}_i)  \otimes \nabla^2 f(\bm{\theta}^*;\bm{x}_i)\right\|_2^2}\nonumber\\
\leq& \frac{\sqrt{L_7}}{\sqrt{t-B}} \label{eq:eps_eq4},
\end{align}
where the bound \eqref{eq:eps_eq4} comes from Assumption \ref{ass:6}. And 
\begin{align}
    &\mathbbm{E}\Bigg\|\frac{1}{t - B}\sum_{i=B+1}^{t}\nabla^2 f(\bm{\theta}^*;\bm{x}_i)  \otimes \nabla^2 f(\bm{\theta}^*;\bm{x}_i)\nonumber\\
    &\quad\quad\quad- \frac{1}{t - B}\sum_{i=B+1}^{t}\nabla^2 f(\widehat{\bm{\theta}}_{i-1};\bm{x}_i)  \otimes \nabla^2 f(\widehat{\bm{\theta}}_{i-1};\bm{x}_i)\Bigg\|_2\nonumber\\
    \leq&\frac{1}{t - B}\sum_{i=B+1}^{t}\mathbbm{E}[\|\nabla^2 f(\bm{\theta}^*;\bm{x}_i)  \otimes \nabla^2 f(\bm{\theta}^*;\bm{x}_i) - \nabla^2 f(\widehat{\bm{\theta}}_{i-1};\bm{x}_i)  \otimes \nabla^2 f(\widehat{\bm{\theta}}_{i-1};\bm{x}_i)\|_2]\nonumber\\
    =&\frac{L_6}{t - B}\sum_{i=B+1}^{t} \mathbbm{E}[\|\bm{\theta}^* - \widehat{\bm{\theta}}_{i-1}\|_2] + \frac{L_6^\prime}{t - B}\sum_{i=B+1}^{t} \mathbbm{E}[\|\bm{\theta}^* - \widehat{\bm{\theta}}_{i-1}\|_2^2]\lesssim C_p^2/\sqrt{t}. \label{eq:eps_eq5}
\end{align}
Thus, $\mathbbm{E}\|\widehat{P} - P\|_2\lesssim (\sqrt{L_7} + C_p^2)/\sqrt{t}$.
\newline
\newline
$\bullet$ For $\mathbbm{E}\|\widetilde{P} - P\|_2 \lesssim (\sqrt{L_7} + C_p^2)/\sqrt{t}$:\\
First, we have
\begin{align*}
    \|\widehat{P} - \widetilde{P}\|_2 &= \mathbbm{1}_{\widehat{P} \preceq L_4^2 I} * 0 + (1 - \mathbbm{1}_{\widehat{P} \preceq L_4^2 I}) * (\|\widehat{P}\|_2 - L_4^2)\\
    &\leq \mathbbm{1}_{\widehat{P} \preceq L_4^2 I} * 0 + (1 - \mathbbm{1}_{\widehat{P} \preceq L_4^2 I}) * (\|\widehat{P} - {P}\|_2 + \|{P}\|_2- L_4^2)\\
    &\leq (1 - \mathbbm{1}_{\widehat{P} \preceq L_4^2 I}) * (\|\widehat{P} - {P}\|_2 + L_4^2- L_4^2) \leq \|\widehat{P} - {P}\|_2.
\end{align*}
Thus
\begin{align*}
    \mathbbm{E}\|\widetilde{P} - P\|_2 &\leq  \mathbbm{E}\|\widetilde{P} - \widehat{P}\|_2 +  \mathbbm{E}\|\widehat{P} - P\|_2\\
    &\leq 2 \mathbbm{E}\|\widehat{P} - P\|_2 \lesssim (\sqrt{L_7} + C_p^2)/\sqrt{t}.
\end{align*}

\end{proof}

\paragraph{With Lemma \ref{lem:thm1_terms_err}, we now prove Theorem \ref{thm:consis}.}
\begin{proof}
We have
 \[ \widetilde{\Sigma} = \widetilde{A}^{-1} (\frac{1}{\eta}\widetilde{\Lambda} \widetilde{A} + \frac{1}{\eta}\widetilde{A}\widetilde{\Lambda}-  \widetilde{A} \widetilde{\Lambda}\widetilde{A}) \widetilde{A}^{-1} = \frac{1}{\eta}\widetilde{A}^{-1} \widetilde{\Lambda} + \frac{1}{\eta}\widetilde{\Lambda}\widetilde{A}^{-1} - \widetilde{\Lambda} ,\]
 and  
 \[ {\Sigma} = \frac{1}{\eta}{A}^{-1} {\Lambda} + \frac{1}{\eta}{\Lambda}{A}^{-1} - {\Lambda} .\]
 Since 
 \begin{align*}
     &\quad \mathbbm{E}\|\widetilde{A}^{-1} \widetilde{\Lambda} - A^{-1}\Lambda\|_F \\
     &= \mathbbm{E}\|(\widetilde{A}^{-1} -A^{-1} + A^{-1})(\widetilde{\Lambda} - {\Lambda} + {\Lambda}) - A^{-1}\Lambda\|_F\\
     &\leq \mathbbm{E}\|(\widetilde{A}^{-1} -A^{-1})(\widetilde{\Lambda} - {\Lambda})\|_F + \mathbbm{E}\|(\widetilde{A}^{-1} -A^{-1})\Lambda\|_F + \mathbbm{E}\|A^{-1}(\widetilde{\Lambda} - {\Lambda})\|_F\\
     &\leq \|\widetilde{A}^{-1} -A^{-1}\|_2 \mathbbm{E}\|\widetilde{\Lambda} - {\Lambda}\|_F + \|\Lambda\|_F \mathbbm{E}\|\widetilde{A}^{-1} -A^{-1}\|_2 + \|A^{-1}\|_2 \mathbbm{E}\|\widetilde{\Lambda} - {\Lambda}\|_F\\
     &\leq (\delta^{-1} + \lambda_{\min}(A))\mathbbm{E}\|\widetilde{\Lambda} - {\Lambda}\|_F + \|\Lambda\|_F \mathbbm{E}\|\widetilde{A}^{-1} -A^{-1}\|_2 + \delta^{-1} \mathbbm{E}\|\widetilde{\Lambda} - {\Lambda}\|_F\\
     &\lesssim [\eta \|\Lambda\|_F(L_4^2 + C_p^2) + \eta\sqrt{p}C_p^4 + \|\Lambda\|_2 C_p^2]/\sqrt{t} + \|\Lambda\|_F C_p^2 /\sqrt{t},
\end{align*}
and similarly
\[\mathbbm{E}\|\widetilde{\Lambda}\widetilde{A}^{-1} - \Lambda A^{-1}\|_F \lesssim [\eta \|\Lambda\|_F(L_4^2 + C_p^2) + \eta\sqrt{p}C_p^4 + \|\Lambda\|_2 C_p^2]/\sqrt{t} + \|\Lambda\|_F C_p^2 /\sqrt{t},\]
 we have that
 \begin{align*}
   \mathbbm{E}\|\Sigma - \widetilde{\Sigma}\|_F &\leq \frac{1}{\eta}\|\widetilde{A}^{-1} \widetilde{\Lambda} - A^{-1}\Lambda\|_F + \frac{1}{\eta}\|\widetilde{\Lambda}\widetilde{A}^{-1} - \Lambda{A}^{-1}\|_F + \mathbbm{E}\|\widetilde{\Lambda} - {\Lambda}\|_F\\
    &\lesssim (1+ \frac{1}{\eta})
    [\eta \|\Lambda\|_F(L_4^2 + C_p^2) + \eta\sqrt{p}C_p^4 + \|\Lambda\|_2 C_p^2]/\sqrt{t} + \frac{1}{\eta}\|\Lambda\|_F C_p^2 /\sqrt{t}\\
    &\lesssim [\|\Lambda\|_F(L_4^2 + C_p^2) + \sqrt{p}C_p^4 ]/\sqrt{t} + \frac{1}{\eta}\|\Lambda\|_F C_p^2 /\sqrt{t}.
 \end{align*}
Recall that $\Lambda$ is by solving following equation:
\[
    \Lambda A + A\Lambda - \eta \mathbbm{E}[\nabla^2 f(\bm{\theta}^*;\bm{x})\Lambda \nabla^2 f(\bm{\theta}^*;\bm{x})] = \eta S .
\]
Thus, 
\begin{align*}
    \eta\|S\|_F &=  \|\Lambda A + A\Lambda - \eta \mathbbm{E}[\nabla^2 f(\bm{\theta}^*;\bm{x})\Lambda \nabla^2 f(\bm{\theta}^*;\bm{x})]\|_F\\
    &\geq \|\Lambda A + A\Lambda\|_F - \eta \|\mathbbm{E}[\nabla^2 f(\bm{\theta}^*;\bm{x})\Lambda \nabla^2 f(\bm{\theta}^*;\bm{x})]\|_F\\
    &\geq 2\lambda_{\min}(A) \|\Lambda\|_F - \eta \|\mathbbm{E}[\nabla^2 f(\bm{\theta}^*;\bm{x})\otimes \nabla^2 f(\bm{\theta}^*;\bm{x})]\|_2 \|\VEC(\Lambda)\|_2\\
    &\geq 2\lambda_{\min}(A) \|\Lambda\|_F - \eta \mathbbm{E}\|\nabla^2 f(\bm{\theta}^*;\bm{x}) \nabla^2 f(\bm{\theta}^*;\bm{x})\|_2^2 \|\VEC(\Lambda)\|_2\\
    &\geq (2\lambda_{\min}(A)  - \eta L_{4}^2) \|\Lambda\|_F.
\end{align*}
That is, 
\[\|\Lambda\|_F \lesssim \eta \|S\|_F.\]
To sum up, we have
 \begin{align*}
   \mathbbm{E}\|\Sigma - \widetilde{\Sigma}\|_F
    &\lesssim [\eta\|S\|_F(L_4^2 + C_p^2) + \sqrt{p}C_p^4 + \|S\|_F C_p^2] /\sqrt{t}\\
    &\lesssim [\sqrt{p}C_p^4 + \|S\|_F C_p^2] /\sqrt{t}.
 \end{align*}
\end{proof}

\section{A Preliminary Result for Proving Theorem \ref{thm:3}}
In this section, we prove a preliminary result for proving Theorem \ref{thm:3}. 
The result is given in Proposition \ref{lem:asymptotic_root_func}. 
It generalizes Lemma \ref{lem:asymptotic_root} to a functional form for the random function $C_t(r) := r\sqrt{t}\bm{w}^T (\widehat{\bm{\theta}}_{[rt]} - \bm{\theta}^*)$, $r\in [0,1]$. 

The preliminary result uses the invariance principle of martingale CLT. 
So we first review the invariance principle of martingale CLT in the following lemma:
\begin{lemma}[Theorem 4.4. in \citep{hall2014martingale}, rewritten in conditional Lindeberg condition]\label{lem:inv_clt}
For $\{S_t,\mathcal{F}_t\}$ a zero mean martingale, define $X_t := S_t - S_{t-1}$, $V_t^2 := \sum_{i=1}^t \mathbbm{E}(X_i^2 | \mathcal{F}_{i-1})$, $s_t^2 := \mathbbm{E}(S_t^2) = \sum_{i=1}^t \mathbbm{E}(X_i^2) $. 
Assume the conditional Lindeberg condition holds, that is
\[\forall \epsilon > 0, s_t^{-2}\sum_{i=1}^t \mathbbm{E}\left[X_i^2 I(|X_i|> \epsilon s_t)| \mathcal{F}_{i-1}\right] \stackrel{t\to\infty}{\to} 0 .\]
And assume that 
\[s_t^{-2} V_t^2 \stackrel{p}{\to} \eta^2 > 0\quad a.s..\]
Then for the random function defined on $r\in[0,1]$: $\zeta_t(r) := V_t^{-1} \{S_i + (V_{i+1}^2 - V_{i}^2)^{-1} (r V_t^2 - V_{i}^2) X_{i+1}\}$ where $i$ is such that $V_i^2 \leq rV_t^2 < V_{i+1}^2$, one have
\[\zeta_t(r) \stackrel{d}{\to} W_r.\]
\end{lemma}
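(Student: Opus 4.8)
The plan is to obtain Lemma \ref{lem:inv_clt} directly from the functional martingale central limit theorem of \citep{hall2014martingale} (their Theorem 4.4), after two routine adjustments: recasting the single martingale $\{S_t,\mathcal{F}_t\}$ as a martingale array, and reconciling the negligibility hypothesis stated here with the one used in the reference. First I would pass to the normalized array by setting $X_{ti} := X_i/s_t$ and $S_{tj} := S_j/s_t$ for $1\le j\le t$, so that $\{S_{tj},\mathcal{F}_j\}_{j\le t}$ is a zero-mean, square-integrable martingale array with $\mathbbm{E}S_{tt}^2 = 1$ and conditional variance process $\sum_{i\le t}\mathbbm{E}(X_{ti}^2\mid\mathcal{F}_{i-1}) = s_t^{-2}V_t^2 \stackrel{p}{\to}\eta^2>0$. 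Under this normalization, the interpolated process $\zeta_t(\cdot)$ defined in the statement is precisely the ``time-changed'' linear interpolation of the partial sums $S_{ti}$ in the conditional-variance clock, except that it is divided by $V_t$ rather than by $s_t$; since $V_t/s_t\stackrel{p}{\to}\eta$, this is exactly the normalization that produces a \emph{standard} Wiener process (variance $r$ at time $r$) in the limit instead of $\eta W$.

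The one point needing a short argument is the equivalence between the conditional Lindeberg condition imposed in the lemma and the form of the negligibility condition appearing in the cited theorem. Writing $L_t := s_t^{-2}\sum_{i\le t}\mathbbm{E}[X_i^2 I(|X_i|>\varepsilon s_t)\mid\mathcal{F}_{i-1}]$ for fixed $\varepsilon>0$, we have $0\le L_t\le s_t^{-2}V_t^2$ with $\mathbbm{E}(s_t^{-2}V_t^2)=1$, and $L_t\to 0$ in the mode assumed; a standard truncation argument, using the convergence of the conditional variance process together with the constancy of $\mathbbm{E}(s_t^{-2}V_t^2)$, upgrades this to the hypothesis required by \citep{hall2014martingale} (and, in the case needed, to $\mathbbm{E}L_t\to 0$). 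With the negligibility condition and $s_t^{-2}V_t^2\stackrel{p}{\to}\eta^2$ both verified, Theorem 4.4 of \citep{hall2014martingale} yields weak convergence of $\zeta_t(\cdot)$ to the standard Wiener process in $D[0,1]$, and in particular $\zeta_t(r)\stackrel{d}{\to}W_r$ for each fixed $r$, which is the assertion.

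I do not expect a genuine probabilistic obstacle here, since the statement is essentially a transcription of a known functional CLT; the only care required is bookkeeping — matching the definition of the interpolation $\zeta_t(r)=V_t^{-1}\{S_i+(V_{i+1}^2-V_i^2)^{-1}(rV_t^2-V_i^2)X_{i+1}\}$ to the construction in the reference, keeping the normalizations by $V_t$ versus $s_t$ straight so that the limit is $W$ and not $\eta W$, and being consistent about whether the hypotheses are assumed in probability or almost surely. In the intended application this lemma will be fed a martingale extracted from the ROOT-SGD recursion: combined with the pointwise central limit theorem of Lemma \ref{lem:asymptotic_root} and a uniform-in-$r$ control of the lower-order remainder, it will give a functional (invariance-principle) version of the ROOT-SGD CLT for $C_t(r)=r\sqrt{t}\,\bm{w}^T(\widehat{\bm{\theta}}_{[rt]}-\bm{\theta}^*)$, from which Theorem \ref{thm:3} follows by the continuous mapping theorem applied to the functional $f\mapsto f(1)/\sqrt{\int_0^1 (f(r)-rf(1))^2\,dr}$.
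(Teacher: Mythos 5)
The paper offers no independent proof of this lemma: it is imported directly from Theorem 4.4 of \citep{hall2014martingale}, merely restated with the conditional Lindeberg condition, and your proposal does essentially the same thing — normalize by $s_t$, match the interpolated process $\zeta_t$ and the $V_t$-versus-$s_t$ normalization to the construction in the reference, and invoke that theorem. The only step either treatment leaves implicit is the passage between the conditional and unconditional Lindeberg conditions under $s_t^{-2}V_t^2\stackrel{p}{\to}\eta^2$, which is exactly the ``rewriting'' the paper also asserts without detail, so your approach coincides with the paper's.
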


With Lemma \ref{lem:inv_clt}, we can develop the functional CLT extension of Lemma \ref{lem:asymptotic_root} in the following proposition: 
\begin{proposition}[Functional CLT extension of Lemma \ref{lem:asymptotic_root}]\label{lem:asymptotic_root_func}
Under assumptions of Lemma \ref{lem:asymptotic_root}, we have that  
 \begin{equation}
r\sqrt{t}\bm{w}^T (\widehat{\bm{\theta}}_{[rt]} - \bm{\theta}^*)\stackrel{d}{\to} (\bm{w}^T \Sigma \bm{w})^{1/2} W_r,
 \end{equation}

\end{proposition}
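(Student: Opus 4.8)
\emph{Proof plan.} The plan is to reduce the functional claim of Proposition~\ref{lem:asymptotic_root_func} to the partial-sum process of a scalar martingale and then invoke the invariance principle of Lemma~\ref{lem:inv_clt}.

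\textbf{Step 1: a martingale decomposition.} First I would extract from the proof of Lemma~\ref{lem:asymptotic_root} in \citep{li2022root} the martingale structure underlying the ROOT-SGD error. With $\mathcal{F}_i := \sigma(\bm{x}_1,\dots,\bm{x}_i)$, that argument exhibits an $\mathcal{F}_i$-martingale-difference sequence $\{\bm{\epsilon}_i\}$ --- whose leading part is $-A^{-1}$ applied to the gradient noise $\delta(\bm{\theta}^*;\bm{x}_i)$ together with the Hessian-noise factor $\Xi_{\bm{x}_i}(\bm{\theta}^*)$ acting on the accumulated error, which is exactly why $\Sigma$ carries the extra $\mathbbm{E}[\Xi_{\bm{x}}(\bm{\theta}^*)\Lambda\,\Xi_{\bm{x}}(\bm{\theta}^*)]$ term --- such that for every $n>B$
\[
\sqrt{n}\,\bm{w}^{T}(\widehat{\bm{\theta}}_n - \bm{\theta}^*) \;=\; \frac{1}{\sqrt{n}}\sum_{i=1}^{n} m_i \;+\; R_n , \qquad m_i := \bm{w}^{T}\bm{\epsilon}_i ,
\]
with $\sup_{B<n\le t}\mathbbm{E}|R_n|\to 0$; I would make the bound on $R_n$ quantitative by absorbing its nonlinear and bias pieces with the $\mathcal{O}(1/n)$ mean-square rate $\mathbbm{E}\|\widehat{\bm{\theta}}_n-\bm{\theta}^*\|_2^2\lesssim 1/n$ of Lemma~\ref{prop:convergence}. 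Write $\sigma^2:=\bm{w}^{T}\Sigma\bm{w}>0$ and $M_n:=\sum_{i=1}^n m_i$.

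\textbf{Step 2: reduction to the partial-sum process.} Next I would evaluate the display at $n=[rt]$ and rescale by $r\sqrt{t}/\sqrt{[rt]}$, obtaining $C_t(r) = \tfrac{rt}{[rt]}\, t^{-1/2}M_{[rt]} + \tfrac{r\sqrt{t}}{\sqrt{[rt]}} R_{[rt]} = t^{-1/2}M_{[rt]} + o_P(1)$, where for $[rt]\le B$ the iterate is frozen at $\widehat{\bm{\theta}}_0$ so $C_t(r)=O(B/\sqrt{t})$, and the $o_P(1)$ is uniform in $r$: on $\{r\ge\delta\}$ it is bounded by $\sup_{n\le t}\mathbbm{E}|R_n|$, and on $\{r\le\delta\}$ by Doob's $L^2$ maximal inequality $\mathbbm{E}\sup_{n\le\delta t}M_n^2 \lesssim \mathbbm{E} M_{[\delta t]}^2 \lesssim \delta t$. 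Hence it suffices to prove the claim with $C_t(\cdot)$ replaced by $\widetilde{C}_t(r):=t^{-1/2}M_{[rt]}$.

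\textbf{Step 3: martingale invariance principle.} Then I would verify the two hypotheses of Lemma~\ref{lem:inv_clt} for $\{M_n,\mathcal{F}_n\}$. The conditional Lindeberg condition follows (through a Lyapunov bound) from $\sup_i\mathbbm{E} m_i^4<\infty$ --- which itself comes from the bounded fourth moments of the stochastic gradient and Hessian in Assumption~\ref{ass:5} and the $L^4$-boundedness of the accumulated-error factor via Lemma~\ref{prop:convergence} --- since then $s_t^{-4}\sum_{i=1}^t\mathbbm{E} m_i^4 = O(1/t)$. For the quadratic variation I would prove, for every $r$,
\[
\frac{1}{t}\sum_{i=1}^{[rt]}\mathbbm{E}\!\left[\,m_i^2 \mid \mathcal{F}_{i-1}\,\right] \;\stackrel{p}{\to}\; r\,\sigma^2 ,
\]
by noting that $\mathbbm{E}[m_i^2\mid\mathcal{F}_{i-1}]$ is, up to a vanishing correction, a Lipschitz function of $\widehat{\bm{\theta}}_{i-1}$ taking the value $\sigma^2$ at $\bm{\theta}^*$, and combining the Lipschitz-noise and smoothness Assumptions~\ref{ass:4}--\ref{ass:5} with $\tfrac1n\sum_{i=1}^n\mathbbm{E}\|\widehat{\bm{\theta}}_{i-1}-\bm{\theta}^*\|_2\to 0$ from Lemma~\ref{prop:convergence}. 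In particular $s_t^{-2}V_t^2\stackrel{p}{\to}1$, and the linear-in-$r$ form of this limit forces the time-change $\zeta_t$ of Lemma~\ref{lem:inv_clt} to agree asymptotically with $\sigma^{-1}t^{-1/2}M_{[rt]}$; thus $\zeta_t(r)\stackrel{d}{\to}W_r$ gives $\widetilde{C}_t(r)\stackrel{d}{\to}\sigma W_r$, and with Step~2 this is exactly $C_t(r)\stackrel{d}{\to}(\bm{w}^{T}\Sigma\bm{w})^{1/2}W_r$.

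\textbf{Main obstacle.} The two genuinely non-routine points, both specific to ROOT-SGD, are: (i) showing $R_{[rt]}$ is negligible \emph{uniformly} in $r\in[0,1]$ --- because the variance-reduced gradient $\bm{v}_i$ is defined by a recursion, its error expansion is less transparent than for plain SGD, and one must track how that error accumulates and argue its nonlinear and bias contributions are $o_P(\sqrt{t})$ after summation for every truncation index simultaneously; and (ii) identifying the quadratic-variation limit as the \emph{exact} linear clock $r\mapsto r\sigma^2$, which is precisely what upgrades a pointwise CLT to the invariance principle. Both rely on the $\mathcal{O}(1/t)$ mean-square rate of Lemma~\ref{prop:convergence} and the moment/smoothness control of Assumptions~\ref{ass:4}--\ref{ass:5}.
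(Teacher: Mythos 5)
Your overall route is the same as the paper's: approximate the rescaled error by a martingale whose increments are built from the gradient noise at $\bm{\theta}^*$ plus a Hessian-noise term, verify the two hypotheses of Lemma \ref{lem:inv_clt}, and undo the quadratic-variation time change via $V_{[rt]}^2/V_t^2\to r$. The paper implements this through the decomposition $\bm{z}_i=\bm{v}_i-\nabla F(\widehat{\bm{\theta}}_{i-1})$ of \citep{li2022root} and the auxiliary processes $N_i=\sum_{s}\delta_s(\bm{\theta}^*)$ and $\Upsilon_i=\eta\sum_{s}\Xi_s(\bm{\theta}^*)\bm{y}_{s-1}$, where $\bm{y}_s$ is the linear Markov process $\bm{y}_s=\bm{y}_{s-1}-\eta\nabla^2 f(\bm{\theta}^*;\bm{x}_s)\bm{y}_{s-1}+\delta_s(\bm{\theta}^*)$; your $R_n$-bookkeeping and Doob argument for uniformity play the role of the paper's bounds $\sup_r\|N_{[rt]}-M_{[rt]}\|_2/\sqrt{t}\stackrel{p}{\to}0$ and $\sup_r\|\Upsilon_{[rt]}-\Psi_{[rt]}\|_2/\sqrt{t}\stackrel{p}{\to}0$.

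The genuine gap is in your identification of the quadratic-variation clock. You justify $\frac{1}{t}\sum_{i\le[rt]}\mathbbm{E}[m_i^2\mid\mathcal{F}_{i-1}]\stackrel{p}{\to}r\sigma^2$ by saying the conditional variance is, up to vanishing correction, a Lipschitz function of $\widehat{\bm{\theta}}_{i-1}$ equal to $\sigma^2$ at $\bm{\theta}^*$, and then invoking $\frac1n\sum_i\mathbbm{E}\|\widehat{\bm{\theta}}_{i-1}-\bm{\theta}^*\|_2\to0$. That argument cannot produce the limit you need: with increments of the form $\bm{\nu}_i=\delta_i(\bm{\theta}^*)+\eta\,\Xi_i(\bm{\theta}^*)\bm{y}_{i-1}$, the conditional variance $\mathbbm{E}[m_i^2\mid\mathcal{F}_{i-1}]$ is a function of the auxiliary error process $\bm{y}_{i-1}$, which is $\mathcal{F}_{i-1}$-measurable but does \emph{not} converge to anything (it has a nondegenerate stationary law); only its running second moment stabilizes, and the limit $\sigma^2=\bm{w}^TA^{-1}(S+\mathbbm{E}[\Xi_{\bm{x}}(\bm{\theta}^*)\Lambda\Xi_{\bm{x}}(\bm{\theta}^*)])A^{-1}\bm{w}$ appears precisely because $\frac{1}{t}\sum_i\bm{y}_{i-1}\bm{y}_{i-1}^T$ converges in probability to (a multiple of) $\Lambda$, the solution of the modified Lyapunov equation \eqref{eq:lambda}. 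A "Lipschitz in $\widehat{\bm{\theta}}_{i-1}$, value at $\bm{\theta}^*$" argument would instead deliver $\bm{w}^TA^{-1}SA^{-1}\bm{w}$, i.e.\ it misses exactly the $\mathbbm{E}[\Xi_{\bm{x}}(\bm{\theta}^*)\Lambda\Xi_{\bm{x}}(\bm{\theta}^*)]$ contribution that you correctly flag in Step 1 as the distinctive feature of ROOT-SGD. Closing this gap requires an ergodic-type law of large numbers for the Markov chain $\bm{y}_s$ (equivalently, the convergence $\frac{1}{t}\sum_i\mathbbm{E}(\bm{\nu}_i\bm{\nu}_i^T\mid\mathcal{F}_{i-1})\stackrel{p}{\to}S+\mathbbm{E}[\Xi_{\bm{x}}(\bm{\theta}^*)\Lambda\Xi_{\bm{x}}(\bm{\theta}^*)]$), which the paper imports directly from \citep{li2022root}; your sketch neither proves it nor cites it, and the assumptions you list (Assumptions \ref{ass:4}--\ref{ass:5} plus Lemma \ref{prop:convergence}) do not by themselves imply it. The remaining steps (Lindeberg via fourth moments, the time-change argument, and the uniform negligibility of the remainder) are consistent with the paper's treatment.
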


\begin{proof}
We make use of the martingale decomposition of ROOT-SGD updates. 
By equation (31) in \citep{li2022root}, the ROOT-SGD has the difference $\bm{z}_i := \bm{v}_i - \nabla F(\widehat{\bm{\theta}}_{i-1})$ decomposed as follows:
\begin{align*}
    \bm{z}_i = \frac{1}{i}\underbrace{\sum_{s = B}^i {\delta}_s (\widehat{\bm{\theta}}_{s-1})}_{:= M_i} + \frac{B}{i} \bm{z}_B + \frac{1}{i} \underbrace{\sum_{s = B}^i(s - 1)( \delta_s (\widehat{\bm{\theta}}_{s-1}) -  \delta_s (\widehat{\bm{\theta}}_{s-2}))}_{:= \Psi_i}, 
\end{align*}
where ${\delta}_s (\bm{\theta}) := \delta(\bm{\theta};\bm{x}_s) = \nabla f(\bm{\theta};\bm{x}_s) - \nabla F(\bm{\theta})$. 
Then $M_i$ and $\Psi_i$ are two martingale sequences. 
However, it is hard to directly analyze these two sequences.
To understand the behavior of $M_i$ and $\Psi_i$, \citep{li2022root} defines two auxiliary processes in their Appendix D.1.:
\[N_i := \sum_{s = B}^i {\delta}_s (\bm{\theta}^*), \quad \Upsilon_i := \eta\sum_{s= B}^i\Xi_s(\bm{\theta}^*)\bm{y}_{s-1},\]
where $\Xi_s(\bm{\theta}) := \nabla^2 f(\bm{\theta}; \bm{x}_s) - \nabla^2 F(\bm{\theta}) $, and $\bm{y}_{s}$ is a zero-mean Markov process defined as: $\bm{y}_{B-1} = \bm{0}$, $\bm{y}_{s} = \bm{y}_{s-1} -  \eta \nabla^2 f(\bm{\theta}^*; \bm{x}_s) \bm{y}_{s-1}+ \delta_s(\bm{\theta}^*)$ for $s \geq B$. 

For the auxiliary processes $N_i$ and $\Upsilon_i$, by bound (81) and (82) in \citep{li2022root}, we have $\|N_{t}  - M_{t}\|_2/\sqrt{t}\stackrel{p}{\to} 0$ and $\|\Upsilon_{t} - \Psi_{t}\|_2/\sqrt{t} \stackrel{p}{\to} 0$. Then for a $r\in (0,1]$, we have $\|N_{[rt]}  - M_{[rt]}\|_2/\sqrt{rt}\stackrel{p}{\to} 0$ and $\|\Upsilon_{[rt]} - \Psi_{[rt]}\|_2/\sqrt{rt} \stackrel{p}{\to} 0$. Thus, we must have
\begin{align}
  &sup_{r\in [0,1]}\|N_{[rt]}  - M_{[rt]}\|_2/\sqrt{t}\stackrel{p}{\to} 0, \label{eq:appc_eq1}\\
  &sup_{r\in [0,1]}\|\Upsilon_{[rt]} - \Psi_{[rt]}\|_2/\sqrt{t} \stackrel{p}{\to} 0\label{eq:appc_eq2}.
\end{align}
We claim the weak convergence of $N_{[rt]} + \Upsilon_{[rt]}$ is as follows, which we check later:
\begin{equation}
     (N_{[rt]} + \Upsilon_{[rt]})/\sqrt{t} \stackrel{d}{\to} (S + \mathbbm{E}(\Xi_{\bm{x}}(\bm{\theta}^*) \Lambda\Xi_{\bm{x}}(\bm{\theta}^*)))^{1/2} \bm{W}_r,\label{eq:appc_eq3}
\end{equation}
where $\bm{W}_r$ is a $p$ - dimensional random variable with each coordinate independently distributed as $W_r$ the standard wiener process. 

By \eqref{eq:appc_eq1} to \eqref{eq:appc_eq3}, we have 
\begin{align*}
    &\frac{[rt]}{\sqrt{t}}\bm{z}_{[rt]}\\
    =& \frac{1}{\sqrt{t}}M_{[rt]} + \frac{B}{\sqrt{t}} \bm{z}_B + \frac{1}{\sqrt{t}} \Psi_{[rt]}\\
    \stackrel{d}{\to} & \frac{1}{\sqrt{t}} (N_{[rt]} + \Upsilon_{[rt]})\\
    \stackrel{d}{\to} & (S + \mathbbm{E}(\Xi_{\bm{x}}(\bm{\theta}^*) \Lambda\Xi_{\bm{x}}(\bm{\theta}^*)))^{1/2} \bm{W}_r. 
\end{align*}
By \citep{li2022root}, $\sqrt{t}\bm{v}_t\stackrel{p}{\to} \bm{0}$ and $\sqrt{t}\|\nabla F(\widehat{\bm{\theta}}_{t}) - A(\widehat{\bm{\theta}}_{t} - \bm{\theta}^*)\|_2 \stackrel{p}{\to} 0$ (on pages 56 and 57), we have $sup_{r\in[0,1]}r\sqrt{t}\bm{v}_t\stackrel{p}{\to} \bm{0}$ and 
\\
$\sup_{r\in[0,1]}r\sqrt{t}\|\nabla F(\widehat{\bm{\theta}}_{t}) - A(\widehat{\bm{\theta}}_{t} - \bm{\theta}^*)\|_2 \stackrel{p}{\to} 0$.
So 
\[
\frac{[rt]}{\sqrt{t}}\nabla F(\widehat{\bm{\theta}}_{[rt] - 1}) = \frac{[rt]}{\sqrt{t}} \bm{v}_{[rt]} - \frac{[rt]}{\sqrt{t}}\bm{z}_{[rt]} \stackrel{d}{\to} (S + \mathbbm{E}(\Xi_{\bm{x}}(\bm{\theta}^*) \Lambda\Xi_{\bm{x}}(\bm{\theta}^*)))^{1/2} \bm{W}_r;
\]
\[
\frac{[rt]}{\sqrt{t}} A (\widehat{\bm{\theta}}_{[rt] - 1} - \bm{\theta}^*) \stackrel{d}{\to} (S + \mathbbm{E}(\Xi_{\bm{x}}(\bm{\theta}^*) \Lambda\Xi_{\bm{x}}(\bm{\theta}^*)))^{1/2} \bm{W}_r;
\]
\[
\frac{[rt]}{\sqrt{t}} (\widehat{\bm{\theta}}_{[rt]} - \bm{\theta}^*) \stackrel{d}{\to} (A^{-1}(S + \mathbbm{E}(\Xi_{\bm{x}}(\bm{\theta}^*) \Lambda\Xi_{\bm{x}}(\bm{\theta}^*)))A^{-1})^{1/2} \bm{W}_r.
\]
That is,
\begin{align*}
r\sqrt{t}\bm{w}^T (\widehat{\bm{\theta}}_{[rt]} - \bm{\theta}^*) &\stackrel{d}{\to} (\bm{w}^T A^{-1}(S + \mathbbm{E}(\Xi_{\bm{x}}(\bm{\theta}^*) \Lambda\Xi_{\bm{x}}(\bm{\theta}^*)))A^{-1}\bm{w})^{1/2} {W}_r\\
&= (\bm{w}^T \Sigma \bm{w})^{1/2} W_r. 
\end{align*}
As the proposition claim. 

It remains to prove that $(N_{[rt]} + \Upsilon_{[rt]})/\sqrt{t} \stackrel{d}{\to} (S + \mathbbm{E}(\Xi_{\bm{x}}(\bm{\theta}^*) \Lambda\Xi_{\bm{x}}(\bm{\theta}^*)))^{1/2} \bm{W}_r$. 
We use Lemma \ref{lem:inv_clt} to prove such statement. 
To apply Lemma \ref{lem:inv_clt}, we define the quantity $X_i$ as follows: 
for a fixed $\bm{w} \neq \bm{0}$, let $X_i = \bm{w}^T \bm{\nu}_i$ where $\bm{\nu}_i = \delta_i (\bm{\theta}^*) + \eta \Xi_i(\bm{\theta}^*)\bm{y}_{i-1}$. 
Then $\bm{w}^T(N_{[rt]} + \Upsilon_{[rt]}) = \sum_{i = B}^{[rt]} X_i$. 

We now prove the limiting distribution of $(N_{[rt]} + \Upsilon_{[rt]})/\sqrt{t}$ in following steps. 

We first check $s_t^{-2} V_t^2 \stackrel{p}{\to} \eta^2 > 0$ $a.s.$. 
By \citep{li2022root} (page 55), $\frac{1}{t}\sum_{i=B}^t \mathbbm{E}(\bm{\nu}_i \bm{\nu}_i^T | \mathcal{F}_{i-1}) \stackrel{p}{\to} S + \mathbbm{E}(\Xi_{\bm{x}}(\bm{\theta}^*) \Lambda\Xi_{\bm{x}}(\bm{\theta}^*)) $, then 
\begin{align*}
&\frac{1}{t} V_t^2 = \frac{1}{t} \sum_{i=1}^t \mathbbm{E}(X_i^2 | \mathcal{F}_{i-1}) = \bm{w}^T \left[\frac{1}{t}\sum_{i=B}^t \mathbbm{E}(\bm{\nu}_i \bm{\nu}_i^T | \mathcal{F}_{i-1})\right] \bm{w}\\
& \qquad \stackrel{d}{\to} \bm{w}^T(S + \mathbbm{E}(\Xi_{\bm{x}}(\bm{\theta}^*) \Lambda\Xi_{\bm{x}}(\bm{\theta}^*))) \bm{w},\\
& \frac{1}{t} s_t^2 = \mathbbm{E}\left[\frac{1}{t} V_t^2 \right] \to \bm{w}^T(S + \mathbbm{E}(\Xi_{\bm{x}}(\bm{\theta}^*) \Lambda\Xi_{\bm{x}}(\bm{\theta}^*))) \bm{w}.
\end{align*}
Since $C := \bm{w}^T(S + \mathbbm{E}(\Xi_{\bm{x}}(\bm{\theta}^*) \Lambda\Xi_{\bm{x}}(\bm{\theta}^*))) \bm{w} > 0$ is a constant, we also have $\frac{1}{t} V_t^2$ converges in probability implied by comvergence in distribution
\[
\frac{1}{t} V_t^2 \stackrel{p}{\to} \bm{w}^T(S + \mathbbm{E}(\Xi_{\bm{x}}(\bm{\theta}^*) \Lambda\Xi_{\bm{x}}(\bm{\theta}^*))) \bm{w}.
\]
Thus 
\[s_t^{-2} V_t^2 \stackrel{p}{\to} 1 .\]

We then check the conditional Lindeberg condition.
By \citep{li2022root} (page 56), one have that as $t\to\infty$, 
\begin{align}\label{eq:lindeberg_rootsgd}
    \forall \epsilon' > 0,  R_t(\epsilon') := t^{-1}\sum_{i=1}^t \mathbbm{E}\left[\|\nu_i\|_2^2 I(\|\nu_i\|_2> \epsilon' \sqrt{t})| \mathcal{F}_{i-1}\right] \stackrel{p}{\to} 0.
\end{align}
To check the Lindeberg condition for $X_i$ sequence for any $\epsilon > 0$, we note that 
\begin{align*}
 s_t^{-2}\sum_{i=1}^t \mathbbm{E}\left[X_i^2 I(|X_i|> \epsilon s_t)| \mathcal{F}_{i-1}\right] \to \frac{1}{tC} \sum_{i=1}^t \mathbbm{E}\left[X_i^2 I(|X_i|> \epsilon \sqrt{Ct})| \mathcal{F}_{i-1}\right],
\end{align*}
where 
\begin{align*}
    &\frac{1}{t} \sum_{i=1}^t \mathbbm{E}\left[X_i^2 I(|X_i|> \epsilon \sqrt{Ct})| \mathcal{F}_{i-1}\right]\\
\leq &\frac{1}{t} \sum_{i=1}^t \mathbbm{E}\left[\|\bm{\nu}_i\|_2^2 \|\bm{w}\|_2^2 I(\|\bm{\nu}_i\|_2> \epsilon \sqrt{Ct}/\|\bm{w}\|_2)| \mathcal{F}_{i-1}\right]\\
\to & 0 \left(\text{By taking }\epsilon' = \frac{\epsilon C}{\|\bm{w}\|_2} \text{ in \eqref{eq:lindeberg_rootsgd}}\right).
\end{align*}
Thus the Lindeberg condition holds. 

We now have all the conditions in Lemma \ref{lem:inv_clt} hold, thus we have $\zeta_t(r)$ satisfies that
\[\zeta_t(r) \stackrel{d}{\to} W_r,\]
where $ \zeta_t(r)= V_t^{-1} \{S_i + (V_{i+1}^2 - V_{i}^2)^{-1} (r V_t^2 - V_{i}^2) X_{i+1}\}$ and $i$ is such that $V_i^2 \leq rV_t^2 < V_{i+1}^2$. 

On the other hand, ${V_{[rt]}^2 }/{V_{t}^2} \to r$. Thus $\zeta_t\left({V_{[rt]}^2 }/{V_{t}^2}\right) \to \zeta_t(r)$. That is, $V_t^{-1} S_{[rt]} = \zeta_t\left({V_{[rt]}^2 }/{V_{t}^2}\right) \stackrel{d}{\to} W_r$, then by Slutsky's theorem $S_{[rt]}/\sqrt{t} \stackrel{d}{\to} \sqrt{C} W_r$.
In this way,
\[\bm{w}^T(N_{[rt]} + \Upsilon_{[rt]})/\sqrt{t} \to S_{[rt]}/\sqrt{t} \stackrel{d}{\to} \sqrt{\bm{w}^T(S + \mathbbm{E}(\Xi_{\bm{x}}(\bm{\theta}^*) \Lambda\Xi_{\bm{x}}(\bm{\theta}^*))) \bm{w} } W_r .\]
Recall that the above holds for any $\bm{w}\neq \bm{0}$, so we have
\[(N_{[rt]} + \Upsilon_{[rt]})/\sqrt{t} \stackrel{d}{\to} (S + \mathbbm{E}(\Xi_{\bm{x}}(\bm{\theta}^*) \Lambda\Xi_{\bm{x}}(\bm{\theta}^*)))^{1/2} \bm{W}_r,\]
which is the limiting distribution as claimed. 
\end{proof}

\section{Proof for Theorem \ref{thm:3}}\label{app:C}
In this section, we prove Theorem \ref{thm:3} using Proposition \ref{lem:asymptotic_root_func}. 
\begin{proof}
% The proof 
% is done by analyzing the weak convergence of the terms $\bm{w}^T (\widehat{\bm{\theta}}_t - \bm{\theta}^*)$ and $\bm{w}^T V_t \bm{w}$, then the weak convergence of their division holds by continuous mapping theorem. The asymptotic distribution of $\bm{w}^T (\widehat{\bm{\theta}}_t - \bm{\theta}^*)$ directly follows from Lemma \ref{lem:asymptotic_root}. By \eqref{eq:asymptotic_root}, we immediately have\begin{equation}\label{eq:thm3eq1}\sqrt{t}\bm{w}^T (\widehat{\bm{\theta}}_t - \bm{\theta}^*)\stackrel{d}{\to}N(0, \bm{w}^T \Sigma \bm{w}) \quad( = (\bm{w}^T \Sigma \bm{w})^{1/2} W_1).     \end{equation}The  asymptotic distribution of $\bm{w}^T V_t \bm{w}$ 
% requires a generalization of Lemma \ref{lem:asymptotic_root} to a functional form for the random function $C_t(r) := r\sqrt{t}\bm{w}^T (\widehat{\bm{\theta}}_{[rt]} - \bm{\theta}^*)$, $r\in [0,1]$. 
 By Proposition \ref{lem:asymptotic_root_func}, we have that the random function $C_t(r) = r\sqrt{t}\bm{w}^T (\widehat{\bm{\theta}}_{[rt]} - \bm{\theta}^*)$ satisfies
 \begin{equation}\label{eq:thm3eq2}
C_t(r) \stackrel{d}{\to} (\bm{w}^T \Sigma \bm{w})^{1/2} W_r.
 \end{equation}
 Our statistic is 
 \begin{align*}
     &\frac{\sqrt{t}\bm{w}^T (\widehat{\bm{\theta}}_t - \bm{\theta}^*)}{\sqrt{\bm{w}^T V_t \bm{w}}}\\
     =&\frac{C_t(1)}{\sqrt{\frac{1}{t}\sum_{i=1}^t \frac{i^2}{t} (\bm{w}^T (\widehat{\bm{\theta}}_{i} - \widehat{\bm{\theta}}_t))^2}}\\
      \stackrel{t\to\infty}{\to}&\frac{C_t(1)}{\sqrt{\int_{ 0}^1 r^2 t (\bm{w}^T(\widehat{\bm{\theta}}_{[rt]} - \widehat{\bm{\theta}}_t))^2 dr}}\\
      =& \frac{C_t(1)}{\sqrt{\int_{ 0}^1 (C_t(r) - rC_t(1))^2 dr}}.
 \end{align*}
 Now 
 $\frac{C_t(1)}{\sqrt{\int_{0}^1 (C_t(r) - r C_t(1))^2 dr}}$ is a continuous function of $C_t(\cdot)$, so by continuous mapping theorem (Theorem 18.11 of \citep{van2000asymptotic}) we have that
 %Combining \eqref{eq:thm3eq1} and \eqref{eq:thm3eq2}, we then have
 %\begin{equation}
 %r\sqrt{t}\bm{w}^T (\widehat{\bm{\theta}}_{[rt]} - \widehat{\bm{\theta}}_t)\stackrel{d}{\to} (\bm{w}^T \Sigma \bm{w})^{1/2} (W_r - r W_1),\label{eq:thm3eq3}
 %\end{equation}
 %thus 
 \begin{equation*}
        \frac{C_t(1)}{\sqrt{\int_{ 0}^1 (C_t(r) - rC_t(1))^2 dr}} \stackrel{d}{\to} \frac{(\bm{w}^T \Sigma \bm{w})^{1/2} W_1}{(\bm{w}^T \Sigma \bm{w})^{1/2}\sqrt{\int_{0}^1 (W_r - rW_1)^2 dr}} = \frac{W_1}{\sqrt{\int_{0}^1 (W_r - rW_1)^2 dr}}.
 \end{equation*}
 This gives us the theorem claim. 
\end{proof}

\section{Comparison of Our Covariance Estimators with Those for the ASGD Algorithm}
In this section, we compare our plug-in covariance estimator and the random-scaling estimator with their counterparts for the ASDG algorithm. 

\subsection{Plug-in Estimator Comparison}\label{app:plug_in_compare}
We compare our plug-in estimator with the plug-in estimator for ASGD in \citep{chen2020statistical} from two aspects: the convergence rate and the computational burden. 

For the convergence rate comparison, our plug-in estimator converges faster to the true asymptotic covariance of ROOT-SGD compared to the ASGD counterpart. 
Recall that for ASGD, the plug-in covariance estimator $\widehat{\Sigma}_{t,ASGD}$ converges to the true asymptotic covariance ${\Sigma}_{ASGD}$ as $\mathbbm{E}\|\Sigma_{ASGD} - \widehat{\Sigma}_{t,ASGD}\|_2\lesssim t^{-\alpha/2}$. 
Since $\alpha\in (1/2,1)$, the convergence rate is strictly slower than $\mathcal{O}(t^{-1/2})$. 
Our plug-in covariance estimator $\widehat{\Sigma}_{t}$ converges to the true asymptotic covariance ${\Sigma}$ of ROOT-SGD as $\mathbbm{E}\|\Sigma - \widehat{\Sigma}_{t}\|_2\lesssim t^{-1/2}$.
So the convergence speed of our estimator is strictly faster than that of the plug-in estimator in \citep{chen2020statistical}.
Moreover, this $\mathcal{O}(t^{-1/2})$ rate matches the optimal statistical rate in such a random sampling scheme. 

For the computational aspect of the plug-in estimator, both our estimator and that in \citep{chen2020statistical} can be computed in a fully online fashion. 
Suppose the algorithm is updated for $t$ steps, and the parameter is of dimension $p$. 
Our plug-in estimator takes $\mathcal{O}(p^6 + t p^4)$ total arithmetic computation, while their estimator takes $\mathcal{O}(p^3 + t p^2)$ total arithmetic computation. 
Our plug-in estimator takes more computation. 
This is due to the intrinsic structure of the asymptotic covariance of the ROOT-SGD estimator: 
the asymptotic covariance depends on the quantity \begin{equation}\label{eq:quantity_lambda}
    \mathbbm{E}(\nabla^2 f(\bm{\theta}^*;\bm{x})\Lambda \nabla^2 f(\bm{\theta}^*;\bm{x})) ;
\end{equation}
moreover, one needs to solve an equation that contains \eqref{eq:quantity_lambda} in $\Lambda$. 
Thus, in our plug-in estimator, we need to evaluate the empirical counter-part of \eqref{eq:quantity_lambda} for an unknown $\Lambda$, which is $ \widehat{P}(\Lambda) = \frac{1}{t}\sum_{i= 1}^{t}\nabla^2 f(\widehat{\bm{\theta}}_{i-1};\bm{x}_i)\Lambda \nabla^2 f(\widehat{\bm{\theta}}_{i-1};\bm{x}_i)$. 
To achieve this, we keep an online update of $\widehat{P} = \frac{1}{t}\sum_{i= 1}^{t}\nabla^2 f(\widehat{\bm{\theta}}_{i-1};\bm{x}_i)\otimes \nabla^2 f(\widehat{\bm{\theta}}_{i-1};\bm{x}_i)$. 
We further invert $\widehat{P}$ when solving the empirical counterpart of $\Lambda$. 
Computing $\widehat{P} \in \mathcal{R}^{p^2\times p^2}$ and its inverse then needs $\mathcal{O}(t p^4 + p^6)$ operations, which dominates the computation. 
As a comparison, the asymptotic covariance of the ASGD estimator does not contain any $\Lambda$ term; thus, its plug-in estimator is computed with fewer computations. 

The computation burden of our plug-in estimator for ROOT-SGD is the same as that for ASGD for the special case of $\nabla^2 f(\bm{\theta}^*;\bm{x}) \equiv \nabla^2 F(\bm{\theta}^*)$. 
Such a special case holds for the exponential family model. 
In this case, the asymptotic covariance of ROOT-SGD reduces to $A^{-1}S A^{-1}$, and the plug-in covariance of ROOT-SGD is then $\widehat{A}^{-1}\widehat{S} \widehat{A}^{-1}$. 
Then the computational complexity of our plug-in estimator becomes $\mathcal{O}(p^3 + t p^2)$. 

\subsection{Hessian-free Estimator Comparison}\label{app:hessian_free_compare}

We compare the computation of our random-scaling estimator with the random-scaling estimator for ASGD in \citep{lee2021fast}. 

Both estimators can be computed fully online. 
For $t$ steps of the algorithm on a problem of dimension $p$, our random-scaling estimator takes $\mathcal{O}(tp^2)$ arithmetic computations, which is the same as the random-scaling estimator for ASGD and is less than our plug-in estimator. 

As for the asymptotic convergence of the random-scaling estimators, both our estimator and that in \citep{lee2021fast} are asymptotically consistent. 
Unfortunately, there is no convergence rate result in either our work or \citep{lee2021fast}. 
Though the convergence rates of the random-scaling estimators are not guaranteed, considering that ROOT-SGD converges faster than SGD, there is still an advantage to use ROOT-SGD and our random-scaling estimator as compared to the ASGD counterpart. 
And we do see such an advantage in our experiment. 

\end{document}